\documentclass[letterpaper, journal]{IEEEtran}

\usepackage{amsmath,amssymb,amsfonts}
\usepackage[ruled,vlined,linesnumbered]{algorithm2e}
\usepackage[utf8]{inputenc}
\usepackage{bm}
\usepackage{textcomp}
\usepackage{stfloats}
\usepackage{url}
\usepackage{verbatim}
\usepackage{graphicx}
\usepackage{siunitx}
\usepackage{balance}
\usepackage{subcaption}
\usepackage{tabularx}
\usepackage{booktabs}  
\usepackage{soul}

\usepackage{amsthm}
\newtheorem{theorem}{Theorem}
\newtheorem{lemma}{Lemma}
\newtheorem{corollary}{Corollary}
\newtheorem{remark}{Remark}

\newtheorem{definition}{Definition}[section]
\newtheorem{assumption}{Assumption}

\DeclareMathOperator*{\diag}{diag}

\usepackage{lipsum}
\usepackage{mathtools}

\newcommand\norm[1]{\left\lVert#1\right\rVert}
\usepackage{fix-cm}  
\everydisplay{\fontsize{9}{11}\selectfont}
\DeclareMathOperator*{\argmax}{arg\,max}

\newcommand{\black}[1]{{\textcolor{black}{#1}}}

\usepackage[pdftex, plainpages=false,hypertexnames=true,pdfnewwindow=true,colorlinks=true,citecolor=blue,linkcolor=red,urlcolor=blue,filecolor=blue]{hyperref}%

\usepackage{titlesec}
\titlespacing*{\section}
  {0pt}                     
  {.1ex plus .3ex minus .2ex} 
  {.1ex plus .2ex}           

\titlespacing*{\subsection}
  {0pt}                     
  {.1ex plus .3ex minus .2ex} 
  {.1ex plus .2ex}           

\titlespacing*{\paragraph}
  {0pt}                     
  {.1ex plus .3ex minus .2ex} 
  {.1ex plus .2ex} 

\DeclareMathSizes{10}{9}{7}{5}
\begin{document}

\title{Sampling-Based Coordination-Informed Multi-Objective Multi-Robot Reinforcement Learning}

\author{ Antonio Marino, Esteban Restrepo, Soon-Jo Chung, Paolo Robuffo Giordano, Claudio Pacchierotti \vspace{-2em}
\thanks{A. Marino is with University of Cambridge --- Cambridge, United Kingdom. E-mail: antonio.marino@cl.cam.ac.uk. E. Restrepo, C. Pacchierotti and P. Robuffo Giordano are with CNRS, Univ Rennes, Inria, IRISA --- Rennes, France. E-mail: \{esteban.restrepo, claudio.pacchierotti, prg\}@irisa.fr. Soon-Jo Chung is with the Division of Engineering and Applied Science, California Institute of Technology --- Pasadena, CA 91125 USA. E-mail: sjchung@caltech.edu.\\ This work was supported by the Chair HARMONY funded by ANR and France 2030 under the reference number ANR-23-IACL-0009.}
}
\maketitle
\begin{abstract}
Multi-robot systems must simultaneously optimize competing objectives while maintaining coordinated behavior. Existing multi-agent reinforcement learning approaches often rely on fixed or centralized coordination, which limits adaptability and violates distributed constraints. This work introduces the Coordination-Informed Multi-Objective Reinforcement Learning (CIMORL) framework, integrating a distributed weight prediction mechanism, a privileged expert training strategy, and theoretical guarantees for Pareto-optimal solutions. We present the base CIMORL method alongside two sampling-based variants, CIMORL-TS (Tree Search) and CIMORL-MPPI (MPPI), which leverage privileged global information during training to enable fully decentralized deployment. Experimental validation in cooperative and adversarial scenarios demonstrates a $21.2\%$ hypervolume improvement and superior policy stability compared to state-of-the-art baselines. Real-world experiments with Crazyflie drones further validate the framework's robustness in resource allocation and multi-attacker multi-defend scenarios under partial observability. The project page is available at https://seaingant.github.io/cimorl-website/
\end{abstract}
\begin{IEEEkeywords}
	Multi-Objective Optimization, Distributed Control, Graph Neural Network, Reinforcement Learning
\end{IEEEkeywords}
\section{Introduction}
Multi-robot systems operating in complex environments must
simultaneously optimize multiple competing objectives while
maintaining coordinated team behavior.
Contemporary robotic applications, from heterogeneous drone
swarms in search and rescue operations to collaborative
manipulation tasks, require balancing conflicting objectives
including task completion time, energy consumption,
communication bandwidth, and collision avoidance.
Multi-objective multi-agent systems (MOMAS) provide
realistic models that capture the complexity of inter-agent
interactions and the multi-dimensional nature of their
objectives~\cite{radulescu2024world}.
While multi-agent reinforcement learning (MARL) has emerged
as a powerful paradigm for robotic team
coordination~\cite{gronauer2022multi}, its extension to
multi-objective settings introduces significant theoretical
and practical complexities.
The fundamental challenge lies in determining appropriate
objective weightings that guide robot teams toward globally
optimal solutions while respecting distributed
decision-making processes, limited communication bandwidth,
and heterogeneous sensor capabilities. Existing multi-objective \black{ multi-agent approaches fail to tackle these challenges properly as they} predominantly rely on fixed trade-off schemes or centralized coordination mechanisms~\cite{felten2024momaland}. \black{These methods} lack adaptability to dynamic environmental conditions, violate distributed execution requirements and suffer from poor scalability.
\black{To address these limitations, we introduce the Coordination-Informed Multi-Objective Reinforcement Learning (CIMORL) framework. Our approach bridges the gap between theoretical optimality and practical deployment constraints by integrating three key innovations: (i) a fully distributed weight prediction mechanism that dynamically adapts objective preferences relying solely on local observations and neighbor communication; (ii) a privileged expert training strategy employing multi-objective extensions of Monte Carlo Tree Search (MCTS)~\cite{browne2012survey} and Model Predictive Path Integral control (MPPI)~\cite{williams2018information}; and (iii) theoretical guarantees for Pareto-optimal solutions through proper weight synchronization among robot subgroups. Our sampling-based variants, CIMORL-TS and CIMORL-MPPI, leverage privileged global information during training, yet execute in a fully decentralized manner under realistic partial observability and communication constraints at deployment.} Ultimately, our framework provides a principled, scalable solution for deploying adaptive, multi-objective robotic teams in complex environments.

\section{Related Works}
\label{sec:related-work}
Multi-objective reinforcement learning (MORL) extends classical RL to optimize multiple, often conflicting, objectives concurrently~\cite{roijers2013survey,vamplew2011empirical}. Traditional reinforcement learning seeks to maximize a single scalar reward, an approach that suffices when task priorities reduce cleanly to one metric. However, collapsing multiple criteria into a fixed linear scalarization risks to mask key trade-offs and producing biased or brittle behavior~\cite{hayes2021practical, van2014multi}. MORL addresses this limitation by representing the reward as a vector, preserving feedback for each objective and shifting the learning target to a set of mutually non-dominated, Pareto-optimal policies. Existing single-agent MORL methods generally follow two paradigms. Outer-loop methods treat each preference weight as a separate task, optimizing them sequentially. While algorithms like MORL/D~\cite{felten2024multi}, PGMORL~\cite{xu2020prediction}, and GPI-LS~\cite{alegre2023sample} improve sample efficiency by sharing information across nearby weights or prioritizing promising preferences, they often require extensive retraining for new targets. Inner-loop methods instead condition a single policy on the weight, amortizing learning across weightings. Approaches such as CAPQL~\cite{lu2023multi} and MOPDERL~\cite{tran2023two} achieve markedly higher sample efficiency for continuous robotic control tasks. Our solution falls in this latter category, finding a continuous weight-dependent policy manifold.

Despite this progress, MORL research has largely focused on isolated agents, overlooking the non-stationarity, partial observability, and credit assignment issues that arise when multiple robots  interact~\cite{yliniemi2016multi}. The intersection of these fields defines multi-objective multi-agent reinforcement learning (MOMARL), where agents interact in a shared environment under vector-valued rewards and must reconcile individual preferences with joint dynamics. This challenge is compounded by the need for coordination under partial observability and non-stationarity, together with optimization across multiple conflicting objectives~\cite{radulescu2024world}. Early approaches to MOMARL often sidestepped its full complexity by scalarizing vector rewards with fixed weights and applying existing MARL algorithms~\cite{mannion2016multi}. While tractable, this produced only a single policy per preference and relied on fully decentralized learning, leading to instability in the face of non-stationarity.

Recent works have begun to explore multi-policy learning in multi-agent settings. For discrete-action domains, MO-MIX~\cite{hu2023mo} utilizes a centralized training with decentralized execution (CTDE) architecture and mixing logic to tackle credit assignment in cooperative tasks. For continuous-action frameworks, algorithms such as MO-AIM~\cite{dixit2023learning} and MOAVOA-MADDPG~\cite{abid2024novel} extend learning to asynchronous agents; however, their outer-loop nature requires training separate policies for fixed preferences, lacking a unified preference-conditioned solution capable of covering the Pareto front in a single training run. Furthermore, while recent continuous-domain approaches like MOMA-AC~\cite{callaghan2025moma} use actor-critic training for centralized coordination, they fail to synchronize preference weights, a mechanism we demonstrate in Section~\ref{sec:prob-statement} is essential for solving global team multi-objective problems. Similarly, the contemporary MOMAPPO framework~\cite{felten2024momaland} adapts multi-agent PPO via weighted-sum scalarization and weight sampling strategies (e.g., OLS~\cite{roijers2017multi}, GPI-LS~\cite{alegre2023sample}). However, these static sampling methods fail to account for dynamic environmental conditions that fundamentally shift optimal preference trade-offs.

Addressing these limitations requires algorithms that provide support for continuous state and action spaces, decentralized execution of preference-conditioned policies, and centralized coordination mechanisms that adaptively reshape weights in response to evolving tasks. Consequently, we introduce a dynamic weight prediction methodology based on agent observations and inter-agent communication~\cite{blondin2020algorithm}. Inspired by graph clustering and opinion dynamics~\cite{aminzare2020cluster, bizyaeva2022nonlinear, zhang2025generalized}, we design a Graph Neural ODE framework~\cite{marino2024lgtc} that distinctly models objective preferences as evolving consensus states for distributed weight prediction. Furthermore, we establish a novel theoretical guarantee, proving that cluster-based synchronization is sufficient and necessary condition for team-wide Pareto optimality when agents specialize in specific sub-objectives.

Complementarily, we propose a training paradigm enhanced by sampling-based planning, which provides high-quality supervisory signals to accelerate convergence toward comprehensive Pareto approximations. We leverage privileged information during training via established search methodologies like MCTS~\cite{wang2012multi, hayes2023monte} and MPPI~\cite{williams2018information, ariizumi2023multi}. To overcome the established limitations of linear scalarization, our final architectural contribution integrates the predicted weights into a Tchebycheff problem formulation. This strategic formulation effectively captures non-convex Pareto fronts while fully preserving the decentralized execution required for scalable multi-robot autonomy.

Collectively, these components form a unified framework that adapts weights via distributed opinion dynamics, exploits search-based supervision, and preserves the decentralized execution required for scalable multi-robot autonomy.
\section{Problem Statement}
\label{sec:prob-statement}
\begin{table}[t]
	\caption{Summary of Key Notation}
	\label{tab:notation}
	\centering
	\begin{tabularx}{0.48\textwidth}{@{} l | X @{}}
		\toprule
		\textbf{Symbol} & \textbf{Description} \\
		\midrule
		$\mathcal{V}_s$, $N_s$ & Team of cooperative agents and its total size \\
		$s \in \mathcal{S}$, $u \in \mathcal{U}$ & Joint environment state and joint control action \\
		$z \in \mathcal{Z}$ & Local partial observation vector \\
		$\mathrm{rw}_i$ & Multi-objective reward vector for agent $i$ \\
		$\mathcal{J}_i^\pi$, $J^\pi$ & Expected return for agent $i$ and global team objective under policy $\pi$ \\
		$V^\pi$, $V_\phi$ & Value function and its neural network approximator \\
		$w$, $\bar{w}$ & Distributed predicted weights and static reference weights \\
		$|\cdot|$, $\|\cdot\|$ & Absolute value (element-wise) and Euclidean norm \\
		$(\cdot)_{|}$ & Column-wise matrix vectorization operator \\
        $\otimes$                 & Kronacker product                                                  \\		$\text{So}_x$             & Softmax attention coefficient on the variable $x$ \\
		\bottomrule
	\end{tabularx}
\end{table}

In this section, we formulate the multi-objective problem
within the Markov games framework, following established
reinforcement learning (RL) literature.
We consider a convex environment $Q \subset \mathbb{R}^{n}$
where a fully cooperative multi-agent team $\mathcal{V}_s$
composed of $N_s$ agents operates.
The system state $s \in \mathcal{S}$ represents the agents'
joint states and evolves according to the dynamics $\dot{s}
	= \Omega(s, u): \mathcal{S} \times \mathcal{U}^{N_s} \rightarrow
	\mathcal{S}$, where $u \in \mathcal{U}^{N_s}$ denotes the
joint action of all agents.
Each agent has access to partial observations $z \in
	\mathcal{Z}$, defined by its sensing set $S_i = \{ x \in Q \mid d(x,v_i) \leq \varrho \}$ with $d$ the euclidean distance, i.e., the
environment confined within a sphere of radius $\varrho$ centered
at that agent. 
We assume that cooperative agents within the sensing set
are allowed to communicate therefore forming a
communication graph \black{$\mathcal{G}$}.

The team is tasked with a multi-objective problem,
where each agent receives an individual reward vector
$\mathrm{rw}_{i}(s, u): \mathcal{S} \times \mathcal{U}
	\rightarrow \mathbb{R}^{K}$, whose $K$ entries quantify
performance across different objectives.
The problem can be formally modeled as a Decentralized
Partially Observable Markov Decision Process (Dec-POMDP)
\cite{oliehoek2016concise}, represented by the tuple
$\langle \mathcal{S}, \mathcal{U},\Omega, \{ \mathrm{rw}_{0},
	\dots, \mathrm{rw}_{N_s} \}, \mathcal{Z}, \gamma \rangle$,
where $\gamma \in [0,1)$ is the discount factor.

Given a policy distribution $\pi_i(z_i)$ for each agent,
the expected discounted return over a finite horizon $T$ is
defined as
\begin{equation}
	\label{eq:objective}
	\mathcal{J}_i = \mathbb{E}_{s_0 \sim P_0,\, s' \sim P,\, u \sim \pi} \bigg[ \sum_{t=0}^{T} \gamma^t \mathrm{rw}_{i}(s_t, u_t) \bigg],
	\quad
	J = \sum_{i=1}^{N_s} \mathcal{J}_i.
\end{equation}
The objective is to find the optimal agent policies
$\pi_i(z_i)$ that maximize the collective team performance
$J$:
\begin{equation}
	\max_{\pi} \quad <{ J_1^\pi, \dots, J_{K}^\pi }>.
\end{equation}
where each $J^{\pi}_i$ is an element of the objective $J^{\pi}$ in~\eqref{eq:objective}.

\black{This type of problem is typically solved by scalarizing the multiple objectives and consequently casting it as a single-objective problem. To this end, we employ the augmented weighted Tchebycheff scalarization~\cite{branke2008multiobjective}. Unlike simple linear scalarization, which fails to find solutions in non-convex regions of the Pareto front, the Tchebycheff approach formulates a min-max problem that can discover the entire Pareto frontier by evaluating solutions based on their worst-performing weighted objective. Therefore, we can reformulate the problem as follows:
\begin{equation}
	\label{eq:Tcheb-problem}
	\max_{\pi} \quad \min_{i} \bar{w}_i J_i^\pi + \rho \sum_{i=1}^{K} J_i^\pi = \max_{\pi} U(\bar{w},J^{\pi})
\end{equation}
for some weights $\bar{w} \in \Delta $ satisfying $\sum_{i=1}^{K}
	\bar{w}_i = 1$ and a small $\rho > 0$.
We call $U(\cdot)$ the team utility. 
The augmentation parameter $\rho$ plays a critical role in this formulation. Standard Tchebycheff scalarization (i.e., when $\rho = 0$) can sometimes yield weakly Pareto optimal solutions due to flat regions in its objective contours. Adding the linear sum term, weighted by a strictly positive $\rho$, creates a monotonic slope along these contours. This prevents the selection of weakly dominated points and mathematically guarantees strict Pareto optimality for the team.}
To characterize the solutions to
problem~\eqref{eq:Tcheb-problem}, we first provide the
following definitions:
\begin{definition}
	(Dominated solution): Consider a multi-objective problem and two joint policy solutions $\pi, \pi'$.
	A policy $\pi$ is said to dominate a policy $\pi'$ if $J_k^\pi \geq J_k^{\pi'}$ for all $k \in \{1, \dots, K\}$, and $J_j^\pi > J_j^{\pi'}$ holds for at least one objective with $j \in \{1, \dots, K\}$.
\end{definition}
\begin{definition}
	(Pareto-front): Given a multi-objective problem, the set of Pareto-optimal solutions---called Pareto-front---is the subset of all solutions that are not dominated by another solution.
\end{definition}

All optimal solutions to the Tchebycheff problem are
Pareto-optimal and also Pareto-complete, meaning they fully
capture the Pareto front even when it is
non-convex~\cite{branke2008multiobjective}.
In contrast, the weighted sum formulation captures only
convex portions of the Pareto front; therefore, there might
be Pareto-optimal solutions that are not solutions of the
weighted sum problem.
However, Tchebycheff problem is NP-hard, therefore intractable. 

On the other hand, the team weights \black{$\bar{w}$} are also crucial
for obtaining high-quality solutions, as they define the
multi-objective trade-off, i.e., how the team evaluates the
relative importance of each objective for the specific
problem.
As we are assuming a dynamic environment, these weights
should adapt to the environment with the agent’s objectives
developing in the environment.
Formally, we can represent the team weights as a function
of the joint observations of all agents, \black{$\bar{w}(z_1, \dots, z_N)$}.
However, under partial observability, joint weights cannot
be directly computed and must instead be estimated locally
by each agent using its own observation and information
exchanged through team communication.

In a cooperative multi-agent setting, this motivates
redefining the problem with \emph{distributed weights}:
\begin{equation}
	\begin{aligned}
		\max_{\pi} \quad & \min_{i} \sum_{j=1}^{N_s} w_{ji} \mathcal{J}_{ji}^{\pi_i}
		+ \rho \sum_{i=1}^{K} \sum_{j=1}^{N_s} \mathcal{J}_{ji}^{\pi_i}
		= \max_{\pi} U(w,\mathcal{J}^{\pi})  ,
	\end{aligned}
	\label{eq:real-objective}
\end{equation}
where $\mathcal{J}^{\pi}_j$ is defined
in~\eqref{eq:objective}, the weights satisfy
$\sum_{i=1}^{K} w_{ji} = 1$ and $w$ represents the stack of
agents' weights. The team maximizes the utility of the worst-performing weighted objective. This incentivizes a balanced performance across all goals, promoting altruistic behaviors where agents support a lagging objective for the overall benefit of the team.
The key challenge that arises is: \textit{How should these weights
be derived in a distributed manner?}

To achieve Pareto-optimal solutions, the team weights must
satisfy specific constraints in a cooperative setting.
Consider a case where each agent can achieve a combination
of the objectives, and each agent $j$ is assigned a weight
vector $w_j$.
Let $\mathcal{J}^*_j$ denote the solution of agent $j$ on
its local Pareto front 
for the given Tchebycheff problem.

For simplicity, consider two agents with respective optimal
solutions $\mathcal{J}^*_{1}, \mathcal{J}^*_{2}$.
In
general, there does not exist a common weight vector $w$
such that
\begin{equation*}
	u\bigg(\begin{bmatrix}
			w_{1i} \\ w_{2i}
		\end{bmatrix}, \begin{bmatrix}
			\mathcal{J}^*_{1i} \\\mathcal{J}^*_{2i}
		\end{bmatrix}\bigg) = U(w, \mathcal{J}^*_{1i} + \mathcal{J}^*_{2i} )
\end{equation*}
Indeed, the left-hand side does not correspond to the
original problem~\eqref{eq:Tcheb-problem} and may over- or
under-estimate points on the Pareto front, as illustrated
in Figure~\ref{fig:pareto-front}.
Therefore, Pareto-optimal solutions for the team can only
be achieved when the weights are shared across agents. \black{For analysis purposes, let's assume the theoretical lower bound of $\mathcal{J}_{ij}$ is $\mathcal{J}_{ij} = 0$. We can formally derive the following Lemma:}
\begin{lemma}
	\label{lemma:sync_w}
   \black{ When the agents can achieve any combination of the $K$
	objectives, i.e., they can attain any objective vector with strictly non-zero elements, Pareto-optimal solutions for the team
	$\mathcal{V}_s$ can be obtained through
	problem~\eqref{eq:real-objective} if and only if the agents' weight
	vectors $w$ are synchronized across the team.}
\end{lemma}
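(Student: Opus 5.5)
We prove the two directions separately, taking as reference the team problem~\eqref{eq:Tcheb-problem} on the aggregate return $J^\pi=\sum_j \mathcal{J}^\pi_j$. Throughout we use the standing assumptions: each agent can attain any objective vector with strictly positive entries, and the lower bound of every $\mathcal{J}_{ji}$ is $0$.

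\textbf{Sufficiency.} Assume synchronized weights, $w_{ji}=\bar w_i$ for all $j$. Then for every objective $i$,
\begin{equation*}
\sum_{j=1}^{N_s} w_{ji}\mathcal{J}^{\pi}_{ji}=\bar w_i\sum_{j=1}^{N_s}\mathcal{J}^{\pi}_{ji}=\bar w_i J^\pi_i ,
\end{equation*}
and the augmentation term equals $\rho\sum_i J^\pi_i$. Hence $U(w,\mathcal{J}^\pi)=U(\bar w,J^\pi)$ for every joint policy, and problem~\eqref{eq:real-objective} coincides with~\eqref{eq:Tcheb-problem}. Its maximizers are therefore strictly Pareto optimal for the team, by the augmented Tchebycheff property~\cite{branke2008multiobjective}. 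Conversely, every Pareto-optimal team vector is recovered by some $\bar w\in\Delta$, by Pareto completeness. This direction is routine.

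\textbf{Necessity.} We argue by contrapositive: suppose two agents $a\neq b$ have $w_{a}\neq w_{b}$, and show that some Pareto-optimal team solution (equivalently, the optimum of~\eqref{eq:Tcheb-problem} for some $\bar w$) cannot be obtained from~\eqref{eq:real-objective}. The plan is to exploit the free redistribution of per-agent contributions. Fix any target team vector $J^\star$ with positive entries and consider the set of splits $\{(\mathcal{J}_j)_j:\sum_j\mathcal{J}_j=J^\star,\ \mathcal{J}_{ji}>0\}$. By the assumption that agents can attain any combination, all such splits are feasible and all yield the same team outcome. On the true problem~\eqref{eq:Tcheb-problem} the utility is constant over this set. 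On~\eqref{eq:real-objective} it is constant only if $w_{ji}$ is independent of $j$. Indeed, since $w_a\neq w_b$ and both lie in the simplex, there exist objectives $i$ and $k$ with $w_{ai}>w_{bi}$ and $w_{ak}<w_{bk}$. Transferring an amount $\varepsilon>0$ of objective $i$ from $b$ to $a$, and $\varepsilon$ of objective $k$ from $a$ to $b$, preserves $J^\star$ and positivity for small $\varepsilon$. This transfer strictly increases both weighted terms $\sum_j w_{ji}\mathcal{J}_{ji}$ and $\sum_j w_{jk}\mathcal{J}_{jk}$, while leaving the others unchanged. So $U(w,\cdot)$ is not a function of $J^\pi$ alone; it rewards particular internal allocations.

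The remaining step is to turn this mismatch into a failure of Pareto optimality. We build a two-agent, two-objective instance in which the maximizer of~\eqref{eq:real-objective} over- or under-estimates the front, as in Figure~\ref{fig:pareto-front}. Take agent $a$ weighting objective $1$ heavily and agent $b$ weighting objective $2$ heavily. Because each agent's weighted contribution can be made unboundedly favorable by pushing mass into its preferred objective (lower bound $0$ on the others), the optimizer of~\eqref{eq:real-objective} selects a split whose aggregate $J$ is determined by the mismatched weights. We then exhibit a feasible team policy whose aggregate dominates it, or show that the point corresponding to a prescribed $\bar w$ is never selected for any non-synchronized $w$. Making this construction rigorous is the main obstacle, because the assumption ``any combination is attainable'' must be read as a bounded per-agent trade-off set, for example a simplex $\{\mathcal{J}_j\ge 0:\sum_i\mathcal{J}_{ji}\le c\}$; otherwise utilities are unbounded. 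The first attempt will use this linear per-agent front. The optimum of~\eqref{eq:real-objective} can then be computed explicitly and compared with the Tchebycheff point $J_i\propto 1/\bar w_i$ of~\eqref{eq:Tcheb-problem}. This should show that the latter is reached only when $w_a=w_b=\bar w$.
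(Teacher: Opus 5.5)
\textbf{Where you match the paper, and where you are already done.} Your sufficiency direction is the paper's argument: factor the common $w_i$ out of $\sum_j w_{ji}\mathcal{J}_{ji}$. For necessity, the paper proves only the ``identity'' form. It requires the objective in \eqref{eq:real-objective} to coincide with that of \eqref{eq:Tcheb-problem} over the whole space of achievable per-agent returns, and then matches coefficients to force $w_{ji}=w_i$. Your redistribution argument already proves exactly this: splits with the same aggregate leave \eqref{eq:Tcheb-problem} unchanged but change \eqref{eq:real-objective}. It is also more careful than the paper, which simply asserts that equal minima force equal inner sums. One repair is needed. Choose the starting split so that the minimum is attained only at objective $i$ or $k$; this is automatic for $K=2$, and otherwise raising those two terms need not move the minimum. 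With that, you have a complete proof in the paper's sense.

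\textbf{The gap.} The gap is the ``remaining step''. You leave unexecuted the strong version, namely that non-synchronized weights produce a team solution that is not Pareto optimal. The construction you plan for it would also fail.
\begin{itemize}
\item \emph{Every $w$ gives a Pareto-optimal aggregate on linear fronts.} On the linear per-agent front $\{\mathcal{J}_j\ge 0:\sum_i\mathcal{J}_{ji}\le c\}$, the term with $\rho>0$ forces every maximizer of \eqref{eq:real-objective} onto each agent's own front. Hence $\sum_i J_i=N_s c$, and the aggregate is Pareto optimal for \emph{every} $w$.
\item \emph{The Tchebycheff point is not characteristic of $\bar w$.} Take $c=1$, $w_a=(0.9,0.1)$, $w_b=(0.1,0.9)$, and write $s=\mathcal{J}_{a1}$, $t=\mathcal{J}_{b2}$ on the fronts. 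The two weighted sums add to $0.8(s+t)+0.2\le 1.8$. Their minimum is therefore at most $0.9$, with equality only at $s=t=1$, i.e., $\mathcal{J}_a=(1,0)$, $\mathcal{J}_b=(0,1)$. The aggregate $(1,1)$ is exactly the optimum of \eqref{eq:Tcheb-problem} for $\bar w=(1/2,1/2)$. This contradicts ``reached only when $w_a=w_b=\bar w$''.
\end{itemize}
Linear fronts make agents perfectly substitutable, which is precisely what hides the mismatch you found.

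Your contrapositive framing is also off. For any single fixed $w$, synchronized or not, most Pareto points are not obtained, so that cannot be the failure you want to exhibit. A strong form can only be an existence statement over environments, and it needs comparative advantage opposed to the weights. For example, take $F_a=\{x\ge 0: x_1/10+x_2/11\le 1\}$, $F_b=\{x\ge 0: x_1/11+x_2/10\le 1\}$, $w_a=(1,0)$, $w_b=(0,1)$, and small $\rho$. Then \eqref{eq:real-objective} selects $(10,0)$ and $(0,10)$. Their aggregate $(10,10)$ is dominated by the feasible $(0,11)+(11,0)=(11,11)$. Either supply such an example, or state necessity in the paper's identity sense and stop after the redistribution argument.
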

\begin{proof}
	\black{Let $\mathcal{J}^*_{ji}$ denote the optimal return for objective $i \in \{1, \dots, K\}$ achieved by agent $j \in \{1, \dots, N_s\}$. The centralized team problem under the augmented Tchebycheff scalarization minimizes the worst-performing weighted objective over the sum of team returns. To achieve the exact same Pareto-optimal solutions for any arbitrary environment, the decentralized aggregation must perfectly recover this centralized formulation:
	\begin{equation}
        \label{eq:proof_eq}
		\min_i \sum_{j=1}^{N_s} w_{ji} \mathcal{J}^*_{ji} = \min_i w_i \sum_{j=1}^{N_s} \mathcal{J}^*_{ji}.
	\end{equation}
    If the agent weights are synchronized such that $w_{ji} = w_i$ for all $j$, the weight $w_i$ becomes a constant with respect to the summation over the agents. It can be factored out, yielding $\min_i ( w_i \sum_{j=1}^{N_s} \mathcal{J}^*_{ji} )$. This exactly matches the centralized team objective, proving the synchronization to be a sufficient condition.}
    
	\black{Moreover, we can show that if the equality in~\eqref{eq:proof_eq} holds over the entire space of achievable returns, then $w_{ji} = w_i$ must be true. Because we assume the agents can achieve any combination of the $K$ objectives, the variables $\mathcal{J}^*_{ji}$ are independent and can take arbitrary non-zero values. For the outputs of the two $\min$ operators to be identical across this entire independently spanning continuous domain, their respective inner arguments must be algebraically equivalent for every $i$. Therefore, we can focus on the linear combinations equality:
	\begin{equation*}
		\sum_{j=1}^{N_s} w_{ji} \mathcal{J}^*_{ji} = \sum_{j=1}^{N_s} w_i \mathcal{J}^*_{ji} \quad \forall i.
	\end{equation*}
	Rearranging the terms yields:
	\begin{equation*}
		\sum_{j=1}^{N_s} (w_{ji} - w_i) \mathcal{J}^*_{ji} = 0 \quad \forall i.
	\end{equation*}
	Since this equation must hold for any arbitrary combination of returns $\mathcal{J}^*_{ji}$, the coefficients of these independent variables must be strictly zero. This requires $w_{ji} - w_i = 0$, which implies $w_{ji} = w_i$ for all agents $j$ and objectives $i$. This proves the statement of the lemma.}
\end{proof}
\black{
In scenarios where individual agents can only contribute to a specific subset of the $K$ objectives (e.g., due to orthogonal task assignments or spatial constraints), the strict necessity of global weight synchronization established in Lemma~\ref{lemma:sync_w} is relaxed. Instead, agents naturally form coalitions to satisfy different objective subsets. }
\black{
Formally, a coalition partition $\mathcal{C} = \{C_1, \dots, C_{N_c}\}$ divides the team $N_s$ into non-empty, disjoint subsets ($\bigcup_{q=1}^{N_c} C_q = N_s$ and $C_p \cap C_q = \emptyset$ for $p \neq q$) such that agents within the same coalition $C_q$ tackle the same subset of objectives. Therefore, the global team problem can be treated as a set of independent sub-teams, leading to the following relaxation:}

\begin{lemma}
	\label{lemma:sync_c}
	\black{Assume the team $N_s$ is partitioned into a set of coalitions $\mathcal{C}$, where each coalition strictly influences a disjoint subset of the $K$ objectives. Pareto-optimal solutions for the global team can be achieved if and only if agents synchronize their weight vectors locally within their respective coalitions. Consequently, different coalitions do not need to share the same weights, allowing the team's weight vectors to diverge into distinct clusters.}
\end{lemma}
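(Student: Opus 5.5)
The argument reduces to Lemma~\ref{lemma:sync_w}, applied separately to each coalition. Let $\mathcal{K}_q \subseteq \{1,\dots,K\}$ be the objective subset influenced by coalition $C_q$. By hypothesis the sets $\mathcal{K}_q$ are pairwise disjoint, and $\mathcal{J}_{ji} = 0$ (the assumed lower bound) whenever $j \in C_q$ and $i \notin \mathcal{K}_q$.

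The first step regroups the inner sum of \eqref{eq:real-objective} by coalition. For each objective $i \in \mathcal{K}_q$,
\begin{equation*}
\sum_{j=1}^{N_s} w_{ji}\mathcal{J}_{ji} = \sum_{j\in C_q} w_{ji}\mathcal{J}_{ji}.
\end{equation*}
The same regrouping applies to the team return $\sum_j \mathcal{J}_{ji} = \sum_{j \in C_q}\mathcal{J}_{ji}$. The $\min_i$ in \eqref{eq:real-objective} therefore splits as $\min_q \min_{i\in\mathcal{K}_q}$. Each inner block involves only the weights $w_{ji}$ and returns of agents $j \in C_q$ on the objectives $i \in \mathcal{K}_q$.

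For sufficiency, assume $w_{ji} = w_i^{(q)}$ for all $j \in C_q$ and all $i \in \mathcal{K}_q$. Factoring as in the sufficiency half of Lemma~\ref{lemma:sync_w}, each block becomes $\min_{i\in\mathcal{K}_q} w_i^{(q)}\sum_{j\in C_q}\mathcal{J}_{ji}$. Stacking the coalition weights into a single team vector $w_i := w_i^{(q(i))}$ is well defined because the $\mathcal{K}_q$ are disjoint. With this vector the distributed objective coincides with the centralized team problem \eqref{eq:Tcheb-problem}, and Pareto optimality follows from the Tchebycheff property recalled above. The augmentation term $\rho\sum\sum\mathcal{J}$ is weight-independent and unaffected.

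The entries $w_{ji}$ for $i\notin\mathcal{K}_q$ multiply zero returns and never appear. This is why different coalitions may carry different, clustered weight vectors, and why no global consensus is needed. The one subtlety is the simplex constraint $\sum_i w_{ji}=1$: after stacking, the team vector need not lie on the simplex. I would handle this by noting that Tchebycheff optimizers are invariant under positive rescaling of $w$, so normalizing the stacked vector leaves the solution set unchanged.

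For necessity, suppose the equality with the centralized problem holds for all achievable returns. Restricting to one coalition $C_q$ and the objectives $\mathcal{K}_q$, the returns $\mathcal{J}_{ji}$ with $j\in C_q$, $i\in\mathcal{K}_q$ can be taken arbitrary and nonzero. The other coalitions' blocks can be made large enough, by choosing their returns high, that the outer $\min_q$ is attained inside block $q$. This reproduces exactly the setting of Lemma~\ref{lemma:sync_w}, whose coefficient-matching argument gives $w_{ji}=w_i$ for all $j\in C_q$, $i \in \mathcal{K}_q$, i.e.\ synchronization within the coalition.

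The main obstacle is this isolation step: ensuring that the outer min over coalitions can be driven into a chosen block, and handling the rescaling freedom so that "synchronized" means equal up to normalization within $C_q$. I would address both through the assumption that returns in other blocks are unconstrained from above.
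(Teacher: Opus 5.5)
Your proposal follows the paper's proof. Both use the zero returns of agents outside the active coalition to collapse each objective's sum onto that coalition, and both rerun the factoring and coefficient-matching argument of Lemma~\ref{lemma:sync_w} coalition by coalition; your explicit isolation of the outer $\min$ and your handling of the stacked weight vector are refinements the paper leaves implicit.

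One small correction: the augmented Tchebycheff objective is not invariant under rescaling the weights. Writing the stacked vector as $s\bar w$ with $\bar w$ on the simplex, the distributed objective equals $s$ times the centralized one with $\rho$ replaced by $\rho/s$. Since every $\rho>0$ still yields Pareto-optimal maximizers, your sufficiency conclusion is unaffected.
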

\begin{proof}
	\black{Let $\mathcal{C} = \{C_1, \dots, C_{N_c}\}$ be the coalition partition. Let $\mathcal{J}^*_{ji}$ be the optimal return for objective $i \in \{1, \dots, K\}$ achieved by agent $j \in \{1, \dots, N_s\}$. Because the subset of objectives influenced by each coalition is disjoint (orthogonal), for any given objective $i$, there is exactly one corresponding coalition, denoted $C_{(i)}$, capable of achieving it. For all agents outside this coalition ($j \notin C_{(i)}$), the return is strictly zero ($\mathcal{J}^*_{ji} = 0$).}
	
	\black{ Therefore, to perfectly recover the centralized Tchebycheff objective for the global team, we can restrict the same argument of Lemma~\ref{lemma:sync_w} to each individual coalition:
	\begin{equation*}
		\sum_{j \in C_{(i)}} w_{ji} \mathcal{J}^*_{ji} = \sum_{j \in C_{(i)}} w_i \mathcal{J}^*_{ji} \quad \forall i.
	\end{equation*}
	Rearranging the terms yields:
	\begin{equation*}
		\sum_{j \in C_{(i)}} (w_{ji} - w_i) \mathcal{J}^*_{ji} = 0 \quad \forall i.
	\end{equation*}
	Because the agents within $C_{(i)}$ can achieve arbitrary combinations of their specific active objectives, the variables $\mathcal{J}^*_{ji}$ are non-zero and independent. This requires $(w_{ji} - w_i) = 0$, strictly enforcing $w_{ji} = w_i$ for all agents $j \in C_{(i)}$. Crucially, for any agent $j' \notin C_{(i)}$, their weight $w_{j'i}$ is multiplied by zero. Therefore, its value has no impact on the algebraic equivalence of the team objective. This proves that weight synchronization is a necessary and sufficient condition only \textit{within} the active coalition $C_{(i)}$ for any given objective.} 
	
	\black{Iterating this reasoning across all $N_c$ coalitions and $K$ objectives demonstrates that agents only need to synchronize weights locally within their coalition, proving the lemma.}
\end{proof}
\black{The lemma~\ref{lemma:sync_c} reveals in a multi-agent team $\mathcal{V}_s = \{1, 2, \ldots, N_s\}$ operating under partial observability constraints, where agents may form coalitions $\mathcal{C} = \{C_1, \ldots, C_{N_c}\}$ to optimize different subsets of $K$ objectives, the weights $w$ have to cluster into $N_c$ equal vectors corresponding to the $N_c$ robot multi-objective coalitions. Therefore, in this paper we seek to}:

\begin{enumerate}
	\item Learn a distributed weight prediction model
	      $\mathcal{W}_{\psi}: \mathcal{Z} \rightarrow \Delta$ that
	      maps agent observations to weight distributions over
	      objectives, ensuring intra-coalition weight
	      synchronization while allowing inter-coalition weight
	      diversity.

	\item Develop a reinforcement learning framework that
	      approximates the NP-hard optimization
	      problem~\eqref{eq:real-objective} through sampling-based
	      policy search, enabling uniform exploration of the
	      Pareto front via the weight distribution~$\mathcal{W}$.

	\item Ensure the learned policies satisfy the coalition
	      constraints derived in Lemma~\ref{lemma:sync_c}, where
	      agents within each coalition $C_j \in \mathcal{C}$
	      maintain synchronized weights $w_{C_j}$ while different
	      coalitions may adopt distinct weight vectors.
\end{enumerate}

\black{The solution we propose (CIMORL) addresses all these challenges and operates under partial observability constraints, enabling distributed execution.}

\begin{figure}
	\centering
	\includegraphics[width=0.99\linewidth]{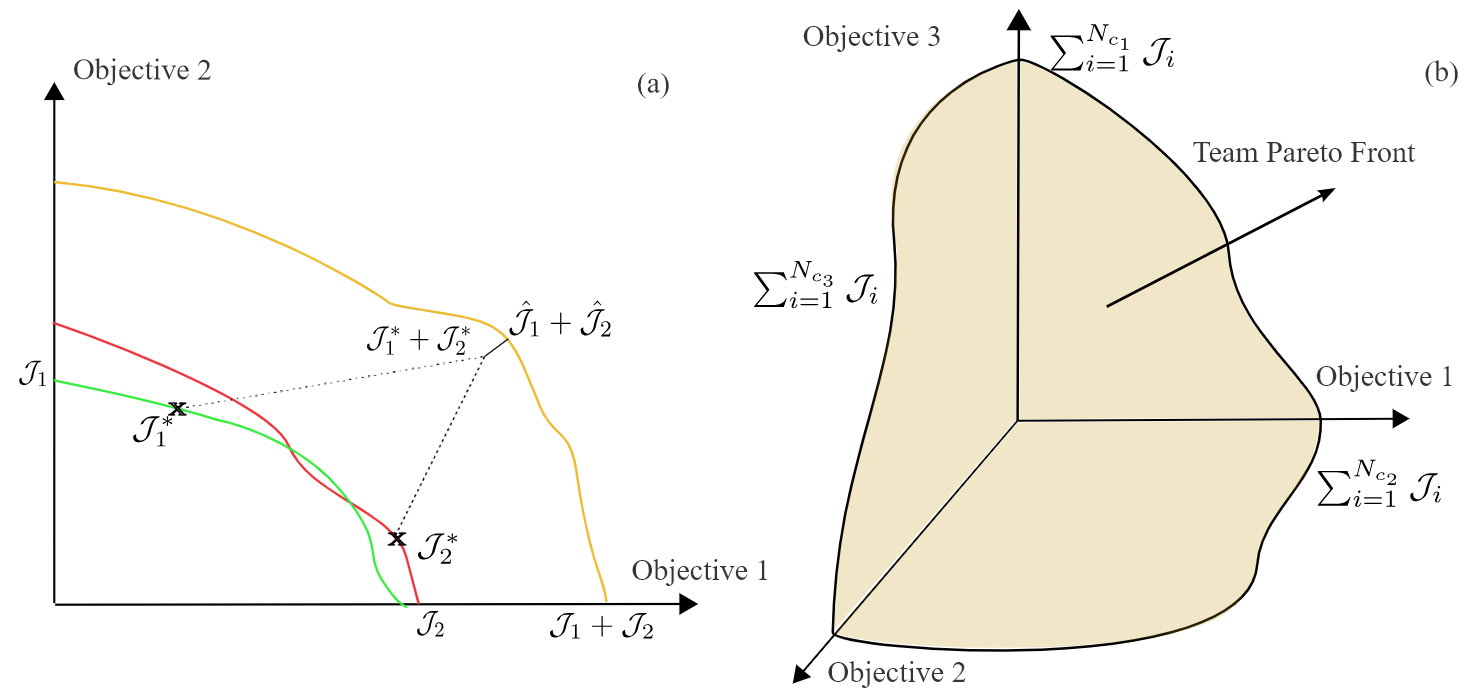}
	\caption{\textbf{Pareto Front representations}: Pareto Front representation for (a) two-objective and two-agent problem and (b) three-objective and a generic team.}
	\label{fig:pareto-front}
\end{figure}

\section{Multi-Objective Weight Prediction}
In this section, we develop a distributed dynamical system
for predicting agent weight distributions that enables
effective multi-objective coordination.
The proposed weight prediction framework addresses two
fundamental requirements: (i) promoting local consensus
among neighboring agents with similar observations to form
cohesive coalitions, and (ii) allowing divergence between
distinct groups to ensure comprehensive coverage of all
objectives across the team.
This design directly satisfies the synchronization
conditions established in Lemma~\ref{lemma:sync_c}, where
agents within coalitions must coordinate their weights to
achieve Pareto-optimal solutions.
When weights naturally cluster into subgroups corresponding
to different objectives, the team achieves improved
objective coverage while maintaining the distributed
structure required by problem~\eqref{eq:real-objective}.
\black{First, we report the proposed opinion dynamics to predict the weights, a core component of our CIMORL approach.}
\subsection{Weight Opinion Dynamics}
For simplicity, we present the model using the stack
observation vector $z$, defined as the concatenation of
individual agent observations in a shared feature space.
However, all computations are implemented in a fully
decentralized manner, with agents relying only on local
observations and neighbor communication. To predict the weights, each agent first processes its local observation through a shared nonlinear mapping $g$:
\begin{equation}
	\label{eq:Xi-def}
	\Xi = g(z) \in \mathbb{R}^{N_s \times K} ,
\end{equation}
where $\Xi$ is used as an observation-dependent preference embedding that simultaneously governs agent clustering, convergence rates, and the final weight distribution.
Rather than directly predicting the final weights, $\Xi$ serves
as a continuous input that influences how the weight
distribution evolves over time based on the current
environmental observations.
The function $g(\cdot)$ learns to map observations to
objective-specific driving signals, enabling the dynamical
system to adapt the weight preferences as the environment
changes.
To ensure numerical stability and bounded dynamics, the
output $\Xi$ is normalized using either min-max
normalization (as employed in our experiments) or a softmax
transformation $\text{So}(\cdot) = \text{Softmax}(\cdot)$.

The weight evolution is modeled as an opinion formation
process where the agents iteratively refine their weight
estimates through neighbor interactions.
Two agents influence each other if their $\Xi$ are
sufficiently similar, formally defined by the condition
$\text{dist}(\Xi_i, \Xi_j) < \epsilon$ for some distance
metric $\text{dist}(\cdot,\cdot)$ and threshold $\epsilon >
	0$.
This similarity-based interaction mechanism naturally
facilitates coalition formation among agents with
compatible objective preferences.

To implement this similarity criterion, we define a
continuous similarity matrix that approaches unity for
similar weight estimates and zero otherwise:
\begin{equation*}
	\sigma_{\Xi} = \sigma\Bigg(W_{\Xi}\bigg(\frac{\Xi \Xi^\top}{||\Xi||^2} - b_{\Xi} \bigg) \Bigg)
\end{equation*}
where $W_{\Xi}$ and $b_{\Xi}$ are trainable parameters that
define the similarity threshold and sensitivity.
The bias term $b_{\Xi}$ establishes a configurable basin of
attraction between agents, allowing adaptation to different
multi-objective task requirements.
The sigmoid function $\sigma$ ensures smooth transitions
between independent signal and coalition attraction
regimes.
Let $S \in \mathbb{R}^{N_s \times N_s}$ represent the adjacency
matrix of the communication graph, encoding which agents
can directly exchange information.
The effective similarity matrix is then constrained by the
communication topology:
\begin{equation*}
	\text{So}_{\Xi} = S \circ \text{So}\bigg(W_{\sigma} \sigma_{\Xi} \bigg)
\end{equation*}
where $\circ$ denotes the Hadamard product and $W_{\sigma}$
provides additional learnable scaling.
This formulation ensures that opinion exchange occurs only
between agents within communication range, maintaining the
distributed nature of the system.

The complete weight dynamics are characterized by a
two-state system comprising a primary weight state $x_2 \in
	\mathbb{R}^{N_s \times K}$ and an auxiliary coordination
state $x_1 \in \mathbb{R}^{N_s \times K}$. \black{This second-order dynamics is strictly required to achieve exact clustering agreements; specifically, $x_1$ acts as an integrator that accumulates disagreement to compensate for the differences between dynamically changing neighboring signals, ensuring $x_2$ converges to a stable consensus.}
The final weight distribution for each agent follows a
Dirichlet distribution with concentration parameters
determined by the softmax transformation of $x_2$.
Inspired by the Liquid Graph Time-Constant (LGTC)
framework~\cite{marino2024lgtc}, we
define the coupled opinion dynamics as:
\begin{equation}
	\begin{cases}
		\begin{aligned}
			\dot{x}_1 = & -(I_N-\text{So}_{\Xi}) x_2 W_B                                \\
			\dot{x}_2 = & -\left(\tau + \Xi \right) \circ x_2-(I_N-\text{So}_{x})x_2W_A \\ & + (I_N-\text{So}_{\Xi})x_1W_B + \Xi B \\
			p_w =       & \text{So}(x_2)
		\end{aligned}
	\end{cases}.
	\label{eq:model-dynamics}
\end{equation}
\black{Here, $W_B$ and $W_A$ are learnable weight matrices that act uniquely as coupling gains between agents to govern the internal state coordination and the network consensus dynamics, respectively. We emphasize that these matrices do not encode any inter-objective trade-offs; their sole purpose is to facilitate network synchronization.}
To ensure positive definiteness and stability, we impose $W
	= \tilde{W}^\top\tilde{W} > 0$ for the system matrices.
The bias parameters $\tau$, $b_{\Xi}$, and $B$ are
implemented as $1_N \otimes b$ to ensure equal biases
across all agents while maintaining learnable adaptation
capabilities.

The system also incorporates an attention mechanism for
$x_2$ communication, defined as:
\begin{equation*}
	\text{So}_{x} = S \circ \text{So}(W_{x}\left(\frac{x_2 x_2^\top}{||x_2||^2}\right))
\end{equation*}
This attention matrix ensures subgroup constant row
sum~\cite{xia2011clustering}, which is essential for proving convergence. This is because the softmax, applied over communicating agents with similar states, effectively selects the agents with similar states leading to potential coalition, which is a key condition for stable clustering~\cite{xia2011clustering}.

In this system, $\Xi$ appears in multiple roles: it
determines the similarity-based coupling structure through
$\text{So}_{\Xi}$, provides damping in the $x_2$ dynamics
through the $(\tau + \Xi) \circ x_2$ term, and drives the
system evolution via the $\Xi B$ input term.
This multivalent role allows the observation-dependent
signal $\Xi$ to simultaneously influence agent clustering,
convergence rates, and the target weight distribution.

Before stating the main theorem of this section, we provide
the following definition of clustered trajectories:

\begin{definition}
	\label{def:cluster-trajectories}
	Given $N$ agent state trajectories $\bar{x}^c = [x^c_1,
		\dots, x^c_N]$, agents are grouped into clusters if they
	follow the same time-evolving state.
	Formally, agents $i$ and $j$ belong to the same cluster
	$C_m$ if and only if $ x^c_i = x^c_j, \quad \forall i, j
		\in C_m, \quad m \in \{1, \dots, M\}.
	$ That is, the cluster set is defined as $ C_m = \left\{ i,j \in \mathcal{V} \;\middle|\; x^c_i = x^c_j \right\}. $ The clustered team state can then be represented as $ \bar{x}^c = [x_{c_1} \otimes \bm{1}_{C_1}, \dots, x_{c_M} \otimes \bm{1}_{C_M} ]^\top $ where $x_{c_m}$ is the representative state trajectory for cluster $C_m$.
\end{definition}

\black{Considering the communication graph $\mathcal{G}$, the following result is derived by making the following assumption on the agent connectivity:} 
\black{\begin{assumption}
\label{ass:assumption-conn}
    We assume the agent communication graph $\mathcal{G}$ is connected throughout the task episode. 
\end{assumption}}

We are now ready to state the theorem about the convergence
properties of the system in \eqref{eq:model-dynamics}.
\begin{theorem}
	\label{th:theorem-2}
	The distributed system in \eqref{eq:model-dynamics}
	clusters in different stable equilibria $\bar{x}^c$ given
	by Definition~\ref{def:cluster-trajectories} and dictated
	by the communication graph topology and observed
	environment.
\end{theorem}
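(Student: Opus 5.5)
We treat \eqref{eq:model-dynamics} as a time-varying linear system in the stacked state $(x_1,x_2)$. We first freeze the observation-dependent quantities $\Xi$, $\text{So}_{\Xi}$ and $\text{So}_x$ over an interval on which the communication graph $\mathcal{G}$ (connected by Assumption~\ref{ass:assumption-conn}) and the softmax selection pattern do not change. We then analyse the resulting piecewise-constant system and recover the general case by a dwell-time/slowly-varying argument.

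\textbf{Step 1 (cluster structure of the Laplacians).} Vectorizing with the operator $(\cdot)_{|}$, the system becomes
\begin{equation*}
\dot{x}_{1|} = -\big(W_B^\top \otimes L_\Xi\big)x_{2|}, \qquad
\dot{x}_{2|} = -D\,x_{2|} - \big(W_A^\top\otimes L_x\big)x_{2|} + \big(W_B^\top\otimes L_\Xi\big)x_{1|} + (B^\top\otimes I)\Xi_{|},
\end{equation*}
with $L_\Xi = I_N-\text{So}_{\Xi}$, $L_x = I_N-\text{So}_{x}$ and $D=\diag((\tau+\Xi)_{|})$. The softmax masked by $S$ makes $\text{So}_{\Xi}$ and $\text{So}_x$ row-stochastic on the agents they effectively select. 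As noted after the definition of $\text{So}_x$, they therefore satisfy the subgroup constant row-sum property of~\cite{xia2011clustering}: for a partition $\{C_1,\dots,C_M\}$ induced by the sigmoid similarity $\sigma_\Xi$ and the graph $S$, the block sums $\sum_{k\in C_m}[L]_{jk}$ are independent of $j$ within each cluster. Consequently the cluster synchronization manifold
\begin{equation*}
\mathcal{M}=\{x : x_j = x_k,\ \forall j,k\in C_m,\ m=1,\dots,M\}
\end{equation*}
is invariant, provided $\Xi$ (and hence $D$ and the input $\Xi B$) is equal within each cluster. This holds up to the threshold $\epsilon$, and we would argue it by the construction of $\sigma_\Xi$, or simply assume it.

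\textbf{Step 2 (stability transverse to and on $\mathcal{M}$).} We take the Lyapunov function
\begin{equation*}
V = \tfrac12\, x_{1|}^\top x_{1|} + \tfrac12\, x_{2|}^\top x_{2|}.
\end{equation*}
Along trajectories, the $W_B$ cross terms cancel, since they appear with opposite signs as a skew block. We are left with $-x_{2|}^\top D x_{2|} - x_{2|}^\top(W_A^\top\otimes L_x)x_{2|}$. Using $W_A=\tilde W_A^\top\tilde W_A>0$ and the fact that the symmetric part of $L_x$ is positive semidefinite on each cluster (a row-stochastic Laplacian with a connected cluster subgraph), this is negative semidefinite. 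The damping $\tau+\Xi>0$, ensured by the normalization of $\Xi$ and positive $\tau$, makes it strictly negative in $x_2$. By LaSalle, $\dot x_2\to 0$ and $x_2$ converges. The invariant set then forces $(W_B^\top\otimes L_\Xi)x_{1|}$ to balance the input, so $L_\Xi x_2 = 0$ at the limit (from $\dot x_1=0$). Hence $x_2$ lies in $\ker L_\Xi$, which by Step 1 is exactly the set of clustered states of Definition~\ref{def:cluster-trajectories}. The integrator $x_1$ absorbs the heterogeneous input, giving exact rather than approximate agreement. The equilibrium value $x_{c_m}$ of each cluster is obtained by projecting the equilibrium equation onto $\mathbf 1_{C_m}$: $(\tau+\Xi_{c_m})\circ x_{c_m} = \Xi_{c_m}B$. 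This shows the equilibria are dictated by the graph topology through the partition and by the environment through $\Xi$.

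\textbf{Main obstacle.} The hardest step is that $\text{So}_x$ depends on $x_2$ itself and that $L_\Xi, L_x$ are not symmetric, so the cancellation and semidefiniteness claims in Step 2 are not automatic. We would first try a weighted Lyapunov function $V=\tfrac12 x^\top(P\otimes I)x$, with $P$ built from the left Perron vectors of each cluster block, which symmetrizes row-stochastic Laplacians in the spirit of~\cite{xia2011clustering}. For the state dependence of $\text{So}_x$, we would treat it as a bounded time-varying gain whose cluster pattern becomes fixed once states within the softmax basin are close. This yields a two-phase argument: a finite transient during which the partition settles, followed by the fixed-partition analysis above.
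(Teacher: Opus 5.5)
There is a genuine gap at the end of your Step 2. The claim that $\ker L_\Xi$ ``is exactly the set of clustered states'' does not follow from Step 1, and under the paper's hypotheses it is false.

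\textbf{The kernel is consensus, not clusters.} The constant block row-sum property only makes the cluster manifold invariant. A clustered $x_2$ is annihilated by $I_N-\text{So}_\Xi$ only if $\text{So}_\Xi$ puts zero weight between different clusters. Softmax weights are strictly positive, so $\text{So}_\Xi$ has exactly the sparsity pattern of $S$, which is connected by Assumption~\ref{ass:assumption-conn}. By Perron--Frobenius, $\ker L_\Xi\subseteq\operatorname{span}\{\mathbf 1\}$: equality holds if the masked softmax is row-stochastic, and the kernel is $\{0\}$ otherwise. Since $W_B\succ 0$, you also have $\ker(W_B^\top\otimes L_\Xi)=\mathbb R^K\otimes\ker L_\Xi$. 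Hence any equilibrium with $\dot x_1=0$, which is what your LaSalle argument targets, is a global consensus. Even a fully repaired Step 2 would prove a single cluster, the opposite of the statement.

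Your per-cluster formula $(\tau+\Xi_{c_m})\circ x_{c_m}=\Xi_{c_m}B$ shows the same problem. Projecting onto $\mathbf 1_{C_m}$ eliminates $L_\Xi x_1$ only if $\mathbf 1_{C_m}$ is a left null vector of $L_\Xi$. The projection also silently drops the inter-cluster term $L_x x_2$.

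\textbf{Further holes in Step 2.}
\begin{enumerate}
\item Your $\dot V$ omits the sign-indefinite input term $x_{2|}^\top(B^\top\otimes I)\Xi_|$. LaSalle therefore does not apply until you recentre at an equilibrium whose existence you have not shown. With $V$ centred at $0$ as written, LaSalle would give $x_2\to 0$.
\item The Perron reweighting you propose yields $\Pi L+L^\top\Pi\succeq 0$. The skew cancellation, however, needs $\Pi L_\Xi$ itself to be symmetric, i.e.\ reversibility. The same $\Pi$ would also have to work for the state-dependent $L_x$, whose stationary weights differ.
\end{enumerate}
In addition, your manifold $\mathcal M$ clusters $x_1$ too, so it needs $\Xi$ to be equal within clusters. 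That contradicts your later claim that $x_1$ absorbs heterogeneous inputs.

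\textbf{How the paper avoids this.} The paper never requires $L_\Xi x_2=0$. It builds a particular solution in which only $\bar\xi_2$ is clustered. The auxiliary state $\bar\xi_1$ is left unclustered and chosen so that $(\text{So}_{\Xi i}-\text{So}_{\Xi j})\bar\xi_1W_B=(\Xi_i-\Xi_j)B-(\Xi_i\circ\bar\xi_{2i}-\Xi_j\circ\bar\xi_{2j})$. Together with the constant-row-sum property, this makes the within-cluster differences of the dynamics vanish. So $\bar\xi$ is a time-evolving clustered trajectory in the sense of Definition~\ref{def:cluster-trajectories}, with $\dot{\bar\xi}_1\neq 0$ allowed. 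This is where the integrator genuinely absorbs the mismatch $\Xi_i\neq\Xi_j$ inside a cluster.

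The paper then argues contraction of the virtual displacement in the $x_2$ coordinates only, via negative definiteness of the symmetric part of $F_{22}$ and partial contraction. The $W_B$ cross terms therefore never need to cancel, and it concludes that every $x_2$ trajectory converges to $\bar\xi_2$. Be aware that the paper simply asserts this negative definiteness from $\tau>0$, $\Xi\ge 0$ and the Laplacian structure of $I_N-\text{So}_x$. That is, it takes for granted the same symmetric-part issue you flag as your main obstacle.

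\textbf{To salvage your route.} Target a clustered trajectory with a compensating, unclustered $x_1$, as the paper does, rather than an equilibrium in $\ker L_\Xi$. Alternatively, assume hard, exactly block-diagonal attention along the partition. In that case what you prove is cluster consensus on a disconnected attention graph.
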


\black{We reported the proof in the Appendix~\ref{sec:app-a}. Figure~\ref{fig:clustering_example} illustrates an example of 20 agents weight concentration parameter trajectories for three objectives randomly initialized (a) and the resulting distributions. The trajectories follow agents' periodic preference embeddings with a different phase per agent. The clustering trajectories are highlighted in different colors and result very stable and distinct. However the trajectories get noisier when the clusters are close as expected since $\text{So}_{\Xi}$ and $\text{So}_{x}$ will have similar value across the agents.}

\black{We note that even if the assumption~\ref{ass:assumption-conn} is not satisfied, the convergence is guaranteed for the connected subgroups.}
\black{While Theorem~\ref{th:theorem-2} guarantees that agents within a connected subgroup will reach a stable consensus on their concentration parameters $p_w$, the dynamical system relies on learnable components, such as the observation mapping $g(\cdot)$, to dictate where these equilibria form. For the team to effectively solve the underlying task, these parameters must be optimized so that the resulting clustered states translate into policies that adequately cover the objective space. Consequently, we must map the converged states to actionable agent weights and establish a training objective that guides the system toward optimal multi-objective performance. This is discussed in the next section.}

\subsection{Weight Distribution
	Training} \label{sec:weight-training}
The concentration parameters $p_w$ define a Dirichlet
distribution $\mathcal{W}_i(w | p_w)$ from which agent
weights are sampled during training and execution. To train the weight distribution $\mathcal{W}$ effectively,
we develop a regret-based loss function that encourages
sampling of high-quality weights while ensuring uniform
exploration of the Pareto front \cite{botros2024regret}.
During training, we estimate the Pareto front
$\mathcal{PF}$ achieved by the current joint policy
$\pi(z,w)$, which depends on the weight samples utilized
during the rollouts.
The set $\mathcal{PF}$ is comprised of the optimal value
functions \black{$V^{\pi, w}$ obtained throughout the training process. In the following, we will use $V^{\pi}$, to indicate a value function given $w$ and the policy $\pi$.} 

We reformulate the Tchebycheff problem as a
minimization to accommodate mixed positive-negative reward
scenarios:
\begin{equation}
	\mathcal{L}_{w}(V) = \sum_{j=1}^{N_s} \max_{i} w_{ji} \frac{|V^{\pi}_{ji} - v^*_i|}{|v^*_i|}
\end{equation}
\black{where $v^*_i$ represents the ideal (utopian) value for objective $i$. Following common practice in empirical multi-objective reinforcement learning~\cite{roijers2013survey}, this reference point is not fixed but continuously estimated online during training. Specifically, $v^*_i$ is defined as the maximum return achieved for each objective across the historical estimated Pareto front $\mathcal{PF}$. To prevent optimization distortion from noisy value estimates early in training, we apply a monotonic update rule after each rollout iteration: $v^*_i \leftarrow \max \left(v^*_i, \max_{V \in \mathcal{PF}} V_{ji} \right)$, ensuring the reference point only shifts when genuinely superior policies are discovered. A small constant $\epsilon = 10^{-6}$ is added to the denominator to prevent division by zero.}
Let $w^*$ denote a set of uniformly sampled weights
available prior to training.
We associate each weight $w^*$ with a value $V^* \in
	\mathcal{PF}$ as the argument that minimizes $\min_{V \in
		\mathcal{PF}} \mathcal{L}_{w}(V)$.
In this manner, we define the optimal weight $w^*$ for the
achievable value $V^{\pi}$.
Let $V^{\pi}(z,w)$ be the value estimated through the rollout
of the current policy for observation $z$.
We project $V^{\pi}(z,w)$ onto the Pareto front to obtain
$V^{* \pi} = \arg\min_{V \in \mathcal{PF}} ||V -
	V^{\pi}(z,w)||$.
The regret incurred by using a sampled weight $\hat{w} \sim \mathcal{W}(\cdot | p_w)$
instead of the optimal weight $w^*$ for the associated optimal
value $V^{*\pi}$ is defined as:
\begin{equation*}
	r(\hat{w} | w^*) = \min_{w^*} \bigg [ \mathcal{L}_{w^*}(V^{*\pi}) - \mathcal{L}_{\hat{w}}(V^{*\pi}) \bigg ].
\end{equation*}
This choice encourages the distribution $\mathcal{W}$ to cover the entire Pareto front, as a sampled weight is only penalized for its distance to its nearest optimal counterpart, not to a single, arbitrarily chosen one.
We therefore minimize the expected regret over the weight
distribution $\mathcal{W}$ to encourage sampling of weights
that yield low regret relative to the optimal Pareto front.
Furthermore, since the weight distribution also influences
the policy, we seek to maximize the expected reward
obtained by the sampled weights.
Given the rollouts dataset $\mathcal{D}$ and the agent advantage estimation
per objective $A^{\pi_i}_{ij}$ \cite{schulman2015high}, we minimize the following loss according to objective \eqref{eq:real-objective}:
\begin{equation*}
	\begin{aligned}
		j_{max} & = \argmax_j \sum_{i=1}^{N_s}w_{ij}\frac{|V_{ij}(z,\hat{w}_i) - v^*|}{v^*}                                                                \\
		\mathcal{L}_{w}   & = \mathbb{E}_{(a,z,\hat{w}) \sim \mathcal{D}} \left[ \log \mathcal{W}(\hat{w} | z) \cdot r(\hat{w} | w^*) \right]              \\
		        & -\mathbb{E}_{(u,z) \sim \mathcal{D}} \Bigg[ \log \mathcal{W}(\hat{w} | p_w) \cdot \sum_{i=1}^{N_s} A^{\pi_i}_{ij_{max}} \Bigg]
	\end{aligned}
\end{equation*}
The preceding loss function employs the log-likelihood
trick for computational convenience. Note that this loss formulation is compatible with the
clipped update mechanism utilized in PPO
\cite{schulman2017proximal}. This loss function balances two goals: minimizing regret to ensure comprehensive exploration of the Pareto front, and maximizing the policy's advantage to exploit high-performing weight regions.

\begin{remark}
	The weight distribution loss can be extended to incorporate
	human preferences for specific objectives.
	This can be achieved by adding a cross-entropy term that
	encourages $\mathcal{W}$ to align with desired objective
	priorities, enabling human-guided multi-objective
	optimization.
\end{remark}

\begin{figure*}[t]
	\centering
	\begin{subfigure}{0.73\linewidth}
		\centering
		\includegraphics[width=\linewidth]{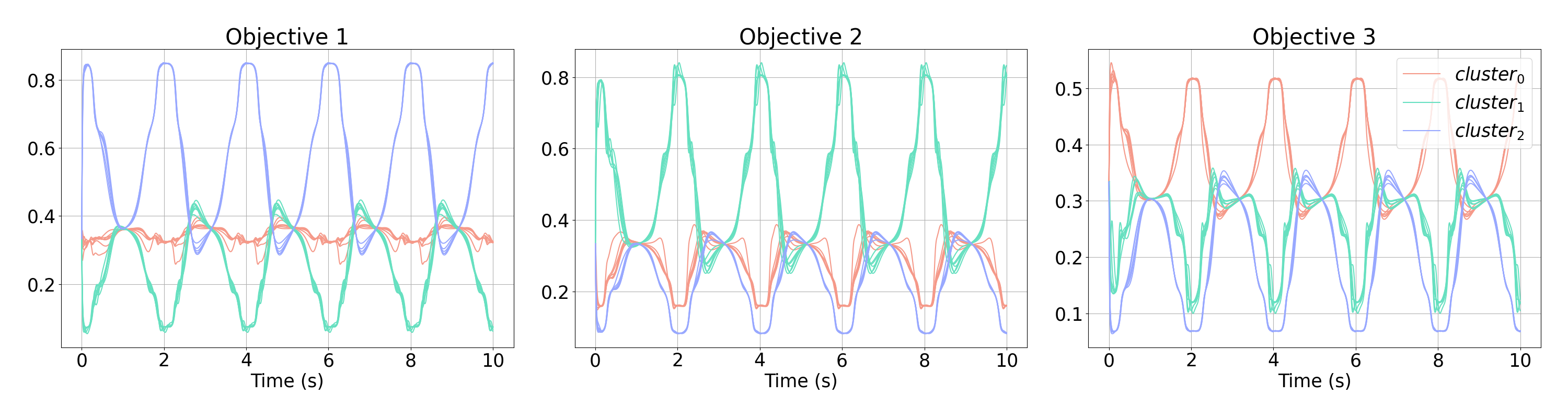}
	\caption{\vspace{1em}}\label{fig:weight_trajectories}
	\end{subfigure}
	\hfill
	\begin{subfigure}{0.25\linewidth}
		\centering
		\includegraphics[width=\linewidth]{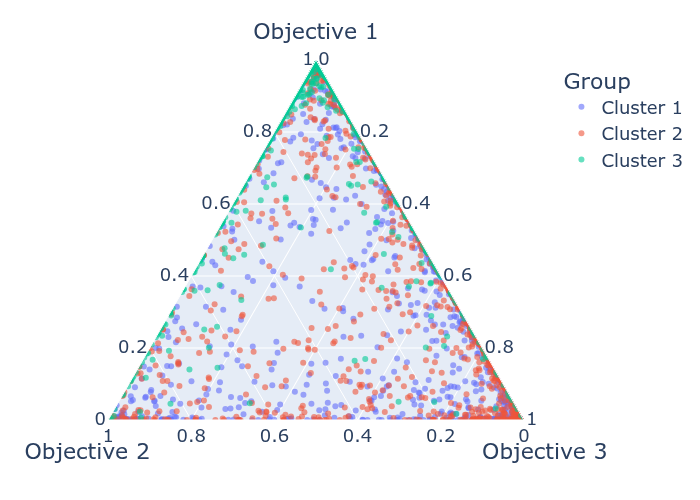} \caption{\vspace{1em}}\label{fig:dirichlet_distributions}
	\end{subfigure}
	\caption{\textbf{Weight clustering trajectories}: (a) \textit{Concentration parameters trajectories.} 20 agents concentration parameters trajectories for $3$ objectives starting from random state and clustering in three subgroups while subjected to a periodic preference embedding with a different phase for each agent. (b) \textit{Final Dirichlet distributions.} Dirichlet distribution for three objectives at the end of clustering trajectories.}
	\label{fig:clustering_example}
\end{figure*}
\section{Policy Training}
\label{method}
In this section, we describe our approach for training
agent policies using local observations and predicted
weights to address the multi-objective optimization problem
in~\eqref{eq:real-objective}.
We introduce two sets of parameters: $\theta$ encoding the
policy approximator $\pi_{\theta}$, and $\phi$ encoding the
value function approximator $V_{\phi}$.
\black{The base CIMORL method trains these networks directly using proximal policy optimization. However,} the training of the weight distribution depends critically
on both the exploration of Pareto-optimal solutions and the quality of the learned policy.
\black{To enhance policy optimality and create our advanced variants, CIMORL-TS and CIMORL-MPPI, we employ a privileged expert framework~\cite{chen2020learning} during training.}
Our approach constructs a high-quality dataset using an
expert sampling methodology that generates approximately
optimal policy distributions for collection in an expert dataset. The privileged expert identifies optimal policies for the entire team by leveraging full system dynamics and environmental state information over a finite planning horizon.

\black{We implement two complementary sample-based expert algorithms to drive these variants: Multi-Objective Neural Tree Search (MONTS)---algorithm~\ref{alg:MCTS}---to generate the CIMORL-TS policy, and Multi-Objective Model Predictive Path Integral (MOMPPI)---algorithm~\ref{alg:MOMPPI}---control to generate the CIMORL-MPPI policy.}

MONTS~\ref{alg:MCTS} extends Neural Tree Expansion (NTE) search~\cite{riviere2021neural} by incorporating predicted weights to scalarize the multi-objective value function. For a given node $n$ in the search tree with accumulated team value $V(n)$, we define the scalarized node value as:
\begin{equation}
	V_{max}(n) \leftarrow \max_i \sum_{j=1}^{N_s}w_{ij} \frac{|V_{ij}(n) - v_i^*|}{v_i^*}
	\label{eq:monte-carlo-value}
\end{equation}
\black{Here, $V_{max}(n)$ represents the worst-case weighted deviation from the ideal performance across all objectives, acting as the scalarized heuristic value for node $n$. The term $V_{ij}(n)$ denotes the accumulated value for agent $j$ on objective $i$ at that specific node, while $w_{ij}$ is the corresponding predicted weight. The parameter $v_i^*$ represents the ideal, utopian reference value (estimated during training), which normalizes the deviations. By minimizing this maximum normalized deviation, the formulation strictly aligns with the Tchebycheff scalarization defined in~\eqref{eq:real-objective}. This design preserves standard tree search exploration rules while enabling branches to evaluate gains across all objectives simultaneously. Algorithm~\ref{alg:MCTS} details this implementation, which crucially returns a new optimal Gaussian action distribution over the root's child nodes rather than selecting a single best action.}

In contrast, MOMPPI extends Model Predictive Path Integral
(MPPI) control~\cite{williams2018information} \black{by integrating the learned neural policy for action sampling during rollout simulations. Given the team's accumulated rewards $V$ across $T$ parallel rollout trajectories, we apply the vectorized Tchebycheff utility function:}
\begin{equation}
	V_{max} \leftarrow \max_i \sum_{j=1}^{N_s}w_{ij} \frac{|V_{ij} - v_i^*|}{v_i^*} \in \mathbb{R}^{T}
	\label{eq:mppi-value}
\end{equation}
Importance weights are computed using a softmax transformation with temperature $\lambda$: $IS \leftarrow \exp(-V_{max}/\lambda) \in \mathbb{R}^{T}$. Unlike traditional MPPI that iteratively updates noisy reference actions, our approach leverages the pre-trained policy $\pi_{\theta}$ to directly yield near-optimal action distributions, eliminating the need for iterative refinement.

During training, both algorithms act as privileged experts, using equal compute and full information for deep searches. At runtime, agents execute computationally-bounded, localized strategies. Using their local observation vector $z_s$, each agent reconstructs a partial team state ($n_o$ for MONTS, $s_o$ for MOMPPI) based solely on visible neighbors. We introduce both methods because their fundamental mechanisms present complementary trade-offs. MONTS prioritizes the deep expansion of the most promising search paths through selective node evaluation. Consequently, MONTS can achieve faster theoretical convergence, with approximation errors decreasing as $O(1/T^{1-c_3/c_2})$~\cite{shah2020non} compared to MOMPPI's $O(1/\sqrt{T})$~\cite{williams2018information}, but incurs higher sequential computation costs. Conversely, MOMPPI performs broad, uniform exploration across the action space, trading sample efficiency for massive parallelization that scales exceptionally well on modern GPU hardware. As illustrated in Figure~\ref{fig:comparison_mppi_mcts}, the optimal choice between these experts depends entirely on the environment's complexity and the target system's parallelization capabilities, making a simple tabular comparison inadequate. Given a weight sample $w \sim \mathcal{W}_{\psi}(z)$ from the agent's predicted distribution, we generate multiple trajectory rollouts by sampling actions from the policy $\pi_{\theta}$. Using these trajectories, we compute the optimal policy distribution through one of our sampling search (SS) methods:
\begin{equation}
	\pi^* \leftarrow \text{SS}(z, \pi_{\theta}, V_{\phi}, w, T, D)
\end{equation}
where $T$ denotes the number of rollout samples and $D$
represents the planning horizon depth.
The complete dataset $\mathcal{D}$ stores the
expert-derived optimal policy distribution $\pi^*$, the
current joint policy $\pi_{\theta}$, rollout observations
$z$, joint actions $u$, value function estimates
$V_{\phi}$, and the corresponding weights $w$.

At the $k$-th training iteration, we assume Gaussian policy
distributions where parameters $\theta$ encode both the
mean $\mu_{\theta}$ and standard deviation
$\sigma_{\theta}$.
The policy training for agent $i$ minimizes the following
supervised learning objective:
\begin{equation*}
	\begin{aligned}
		J_{\pi^*_i}(\theta) =
		\mathbb{E}_{(u,z) \sim \mathcal{D}} \Bigg[ & ||\mu^*_i - \mu_{i_{\theta}}(z)||^2 + ||\sigma^*_i - \sigma_{i_{\theta}}(z)||^2  \Bigg]
		\\ & - \alpha_{\mathcal{H}} \mathcal{H}(\pi_{i\theta})
	\end{aligned}
\end{equation*}
where $\mathcal{H}(\pi_{i\theta})$ represents the policy
entropy and $\alpha_{\mathcal{H}}$ is a temperature
parameter controlling policy stochasticity to maintain
adequate action space exploration.
To accelerate initial convergence, we incorporate an
auxiliary loss term based on the scalarized advantage
function:
\begin{equation*}
	\begin{aligned}
		j_{max}           & = \argmax_j \sum_{i=1}^{N_s}w_{ij}\frac{|V_{ij}(z,w_i,\phi) - v^*|}{v^*}                                                  \\
		J_{\pi_i}(\theta) & = -\mathbb{E}_{(u,z) \sim \mathcal{D}} \Bigg[ \pi_{i}(z_i, \theta) \cdot A_{ij_{max}}^{\pi_i(\cdot, \theta_k)} \Bigg]
	\end{aligned}
\end{equation*}
The final loss for policy training combines both objectives:
\begin{equation}
\label{eq:eqpert-loss}
	J_{\pi_i}(\theta) = (1-\alpha^*)J_{\pi^*_i}(\theta) + \alpha^*J_{\pi_i}(\theta)
\end{equation}
with $\alpha^*$ decaying from 1 to 0 over the training process. \black{This supervised expert loss provides a strong implicit regularization effect. The learned policy is trained to predict the expert coordination strategies at deployment while relying strictly on local, partial observations without requiring the expert's heavy computational overhead.} The value function network $V_i(z,w_i,\phi)$, which outputs the vector of objective values, is trained by minimizing the squared $\mathcal{L}_2$ norm of the temporal difference error across all objectives:
\begin{equation*}
	\begin{aligned}
		 & J_{V_i}(\phi) = \\ &  \mathbb{E}_{ (u,z) \sim \mathcal{D}} \Bigg[ \left\| \left(V_i(z,w_i,\phi) - \left(A_i^{\pi_i(u_i, \theta_k)} + V_i(z,w_i,\phi_k)\right)\right) \right\|^2 \Bigg]
	\end{aligned}
\end{equation*}
\begin{algorithm}[!t]
	\caption{ MONTS}
	\label{alg:MCTS}
	\SetKwFunction{Search}{Search}
	\SetKwFunction{TreePolicy}{TreePolicy}
	\SetKwFunction{Backup}{Backup}
	\SetKwFunction{GetDistribution}{GetDistribution}
	\SetKwFunction{Child}{Child}
	\SetKwFunction{Expand}{Expand}
	\DontPrintSemicolon
	\SetAlgoLined
	\KwData{$z_s, \theta,\phi, w$, $T$ samples, $D$ horizon}
	\KwResult{Find the initializer.}
	create root node $n_0$ with state $z_s$\;
	store the initial value prediction $ v_0\leftarrow V_{\phi}(z_s)$ \;
	$n_i \leftarrow n_0$ \;
	\For{$t = 1 \dots T$}   {
		$\mathrm{rw} \leftarrow $ [ ]\;
		\For{$i = 1 \dots D$}{
			$n_i,\mathrm{rw}_i \leftarrow \TreePolicy(n_i)$\;
			stores $\mathrm{rw}_i$ in the list $\mathrm{rw}$
		}
		append the terminal state value $V_{\phi}(z_s(n_i))$ to $\mathrm{rw}$\;
		\Backup($n_i$, $\mathrm{rw}$)\;
	}
	\Return $\pi^* \leftarrow \GetDistribution(n_0)$ \;

	\textbf{function} \TreePolicy{$n$}\;
	\Indp
	\eIf{$n$ not fully expanded}{
		\Return $\Expand(n)$\;
	}{
		$\displaystyle
			n \leftarrow \arg\max_{n' \in \mathrm{Child}(n)} \Bigl[
				\frac{V_{max}(n')}{N(n')} + c_1 \frac{N(n)^{c_3}}{N(n')^{c_2}}
				\Bigr]$\;
		\Return \Expand($n$)\;
	}
	\Indm

	\textbf{function} \Expand{$n$,$w$}\;
	\Indp
	get observation vector $z_n \leftarrow n$\;
	sample an action $u \sim \pi_{\theta}(z_n,w)$ \;
	create a new child node $n' \leftarrow f(z_n'|z_n,u)$\;
	compute the reward $\mathrm{rw}(z_n')$\;
	\Return $n'$, $\mathrm{rw}(z_n')$\;
	\Indm

	\textbf{function} \Backup{$n, \mathrm{rw}$}\;
	\Indp
	\While{$n \neq \text{null}$}{
		$d \leftarrow \textsc{Depth}(n)$ \;
		\For{$i = d \dots D$}{
			$V(n) \leftarrow V(n) + \gamma^d\mathrm{rw}[d]$\;
		}
		$N(n) \leftarrow N(n) + 1$\;
		$n \leftarrow \textsc{Parent}(n)$\;
	}
	\Indm

	\textbf{function} \GetDistribution{$n$}\;
        \label{alg:MCTS:line:action_distribution}

	\Indp

	$\text{N} \leftarrow \sum_{c \in \textsc{Child}(n)} N(c)$\;
	$\mu \leftarrow \frac{1}{\text{N}} \sum_{c \in \textsc{Child}(n)} N(c) \cdot W(c)$\;
	$\sigma^2 \leftarrow \frac{1}{\text{N}} \sum_{c \in \textsc{Child}(n)} N(c) \cdot (W(c) - \mu)^2$\;
	\textbf{return} $(\mu,  \sqrt{\sigma^2})$ \;
    \Indm
\end{algorithm}

\begin{algorithm}[!t]
	\caption{ MOMPPI}
	\label{alg:MOMPPI}
	\DontPrintSemicolon
	\SetAlgoLined
	\KwData{$z_s, \theta,\phi, w$, $T$ samples, $D$ horizon}
	\KwResult{Optimal policy $\phi^*$}
	Sample $T$ control actions $u_0 \sim \pi_{\theta}(z_s, w)$\;
	Reconstruct the initial state $s_0$ from $z_s$\;
	Initialize cumulative reward $V =0$\;
	\For{$i = 1$ \KwTo $D$}{
		Get new state from environment simulation and reward $s_i, \mathrm{rw}_i \leftarrow f(s_{i-1}, u_{i-1})$\;
		Get new observation $z_{s_i} \leftarrow h(s_i)$\;
		Accumulate reward $V \leftarrow V + \gamma^i \mathrm{rw}_i$\;
		Get action $u_i \sim \pi_{\theta}(z_{s_i}, w)$\;
	}
	Add terminal reward $V \leftarrow V + \gamma^d V_{\phi}(z_{s_D})$\;
	Compute MPPI importance weights $IS \leftarrow \exp(-V_{max} /\lambda) \in \mathbb{R}^{T}$\;
	return $\pi^* \leftarrow \GetDistribution(u_0, IS)$\;
	\textbf{function} \GetDistribution{$u_0$, $IS$}\;
	\Indp
	$W \leftarrow \sum_{i=1}^{T} IS_i$\;
	$\mu \leftarrow \frac{1}{W} \sum_{i=1}^{T} IS_i \cdot u_{0,i}$\;
	$\sigma^2 \leftarrow \frac{1}{W} \sum_{i=1}^{T} IS_i \cdot (u_{0,i} - \mu)^2$\;
	\textbf{return} $(\mu,  \sqrt{\sigma^2})$\;
	\Indm
\end{algorithm}

\begin{figure}[!t]
	\centering
	\includegraphics[width=0.6\linewidth]{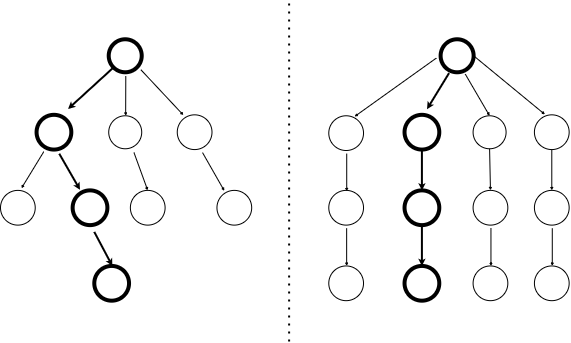}
	\caption{\textbf{Comparison of MPPI and NTS}: on the left, the selection strategy of the Neural Tree Search (NTS); on the right, the Monte-Carlo sampling of Model Predictive Path Integral control.}
	\label{fig:comparison_mppi_mcts}
\end{figure}

\section{Closed-Loop Stability}
In this section, we analyze the closed-loop stability of
the overall system when the multi-agent team employs the
learned policy $\pi_{\theta}(z,w)$.
Since the observations are part of the environment state
and the team state, we can focus only on the overall team
dynamics, i.e. where the agent interactions happen.
We will make two further assumptions to provide closed-loop
stability insights.
\begin{assumption}
	\label{ass:control-struct}
	Given the joint team observation $z$, The joint policy is
	described by $\pi_{\theta}(z,w) = \mu_{\theta}(z,w) +
		\sigma_{\theta}(z, w) \epsilon$ with $\epsilon \sim
		\mathcal{N}(0, I)$.
	Moreover, $\sup_{z,w}|| \sigma_{\theta} || \leq \sigma_g$.
\end{assumption}
\begin{assumption}
	\label{ass:ito-stoc}
We assume that the robots' observation transition dynamics are deterministic, which is a common simplification for tractable stability analysis, and thus attribute all stochasticity to the control policy. The system dynamics under control $\pi$ can therefore be described by the $it\hat{o}$ stochastic differential equation:
\begin{equation}
		\label{eq:ito-stochastic}
		dz = f(z,t)dt + \mu_{\theta}(z,w)dt + \sigma_{\theta}(z,w) \epsilon
	\end{equation}
\end{assumption}
Note that this system model also captures the state
transition experienced in the sampling search.
Since the weights $w$ serve as objective selectors that
focus the observation vector on the objectives while
preserving the underlying processing functions, we can
analyze the closed-loop stability by considering each
weight-induced subsystem individually.
This approach leverages contraction theory~\cite{lohmiller1998contraction} for switching
systems~\cite{russo2012contraction}, where each 'subsystem' corresponds to the agent dynamics \eqref{eq:ito-stochastic} under a fixed weight vector $w$. The overall system's stability is then analyzed as these weights evolve over time.
For the stability analysis, we require a contraction matrix
$M(z,t) \succ 0$ that is Lipschitz continuous with constant
$L_M$ and satisfies $\underline{m}I \preceq M \preceq
	\overline{m}I$ for positive scalars $\underline{m}$ and
$\overline{m}$.

\black{The matrix $M(z,t)$ defines a state-dependent Riemannian metric. Unlike standard Lyapunov theory which measures the energy or distance of a single trajectory to a fixed equilibrium, contraction analysis utilizes $M(z,t)$ to continuously measure the differential distance between any two neighboring system trajectories. If the vector field of the system ensures that this differential distance is strictly shrinking with respect to the metric defined by $M(z,t)$, all trajectories will exponentially converge toward each other, regardless of their initial conditions.}

Under this contraction matrix, the agent dynamics with
policy $\pi_{\theta}$ satisfies the contraction condition:
\begin{equation*}
	\dot{M} + M^{T}\frac{\partial (f + \mu_\theta)}{\partial z} + \frac{\partial (f + \mu_\theta)}{\partial z}^\top M \preceq -\alpha M - \alpha_s I
\end{equation*}
where $\alpha > 0$ represents the contraction rate,
$\alpha_s = L_M(2\sigma_g^2)/(\alpha_g+0.5)$ accounts for
stochastic perturbations, and $\alpha_g > 0$ is a design
parameter.

Under these conditions, results in stochastic
contraction theory \cite{tsukamoto2021contraction} guarantee exponential convergence in
expectation between any two system trajectories $\xi_1$ and
$\xi_2$:
\begin{equation*}
	\mathbb{E} \left[ \| \xi_2(t) - \xi_1(t) \| \right] \leq \frac{ \mathbb{E} \left[ \int_{\xi_{1|}(0)}^{\xi_{2|}(0)} \delta z^\top M \delta z  \right]}{{\underline{m}}}  e^{-2\alpha t} + \frac{C}{2 \alpha} \frac{\overline{m}}{\underline{m}}
\end{equation*}
where $C = (2\sigma_g^2)/(2 \alpha_g^{-1} + 1)$ bounds the
steady-state error due to stochastic noise.
The overall system's contraction rate is given by $a =
	\min_i \alpha_i$ across all weight-induced subsystems.
Additionally, let $L_w$ denote the Lipschitz constant of
system~\eqref{eq:ito-stochastic} with respect to the weight
parameters $w$.

To establish bounded behavior of the weight dynamics, we first present the following result:

\begin{lemma}
	\label{lemma:bounded-x2}
	The weight state $x_{2}$ of system~\eqref{eq:model-dynamics} remains bounded in the invariant set $[-B,B]^{N \times K}$ for all $t \geq 0$ if initialized within this set. Furthermore, the auxiliary state $x_{1}$ evolves within a bounded set $\mathcal{X}_1$.
\end{lemma}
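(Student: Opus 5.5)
We prove invariance of the box $[-B,B]^{N\times K}$ for $x_2$ with a componentwise (Nagumo-type) boundary argument, and then obtain boundedness of $x_1$ from its integrator structure.

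\textbf{Step 1 (setup).} Rewrite the $x_2$ equation in \eqref{eq:model-dynamics} row by row. Because $\text{So}_x = S\circ\text{So}(\cdot)$ is a masked softmax, each row of $\text{So}_x$ and $\text{So}_{\Xi}$ has nonnegative entries with sum at most one. Hence $(I_N-\text{So}_x)x_2$ is a difference between $x_2$ and a sub-convex combination of neighbouring rows. The matrices $W_A=\tilde W_A^\top\tilde W_A$ and $W_B$ are positive semidefinite. The normalized embedding $\Xi$ lies in $[0,1]$, since min-max normalization is used, and $\tau>0$. So the damping term $-(\tau+\Xi)\circ x_2$ has rate at least $\underline\tau:=\min\tau>0$.

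\textbf{Step 2 (boundary argument).} Take an entry with $x_{2,jk}=B$, the maximum possible value on the box. On this face:
\begin{itemize}
\item the damping contributes at most $-\underline\tau B$;
\item the input contributes at most $\|\Xi B\|_\infty\le \|b_B\|_1$, where $B=1_N\otimes b_B$;
\item the consensus term $-[(I-\text{So}_x)x_2W_A]_{jk}$ is bounded by $2B\|W_A\|_{\infty}$;
\item the coupling term $[(I-\text{So}_{\Xi})x_1W_B]_{jk}$ is bounded by $2\|x_1\|_\infty\|W_B\|_\infty$.
\end{itemize}
The cleanest route is to pick the bound $B$ (abusing notation, the box radius) large enough that $\underline\tau B$ dominates the remaining terms. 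This requires a coupled estimate, so instead I would treat the pair $(x_1,x_2)$ jointly.

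\textbf{Step 3 (Lyapunov function).} Take
\[
V=\tfrac12\|x_2\|_F^2+\tfrac12\|x_1\|_F^2,
\]
and use the fact that the cross terms $-\langle x_1,(I-\text{So}_{\Xi})x_2W_B\rangle$ and $\langle x_2,(I-\text{So}_{\Xi})x_1W_B\rangle$ cancel, up to the asymmetry of $\text{So}_{\Xi}$. This is the standard skew structure of the LGTC integrator. The remaining terms give
\[
\dot V\le -\underline\tau\|x_2\|^2+\|\Xi b_B\|\,\|x_2\| - \langle x_2,(I-\text{So}_x)x_2W_A\rangle .
\]
The last term is nonpositive whenever $\text{So}_x$ is row-stochastic on the active subgroup. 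This gives ultimate boundedness of $x_2$ with radius $\|\Xi b_B\|/\underline\tau$, and this is where the box constant $B$ is chosen. Per-entry invariance of $[-B,B]$ then follows by redoing Step 2 with $x_1$ now known to be bounded.

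\textbf{Step 4 ($x_1$).} For $x_1$, write $\dot x_1=-(I-\text{So}_{\Xi})x_2W_B$. Under Assumption~\ref{ass:assumption-conn} and the convergence of Theorem~\ref{th:theorem-2}, $(I-\text{So}_{\Xi})x_2$ vanishes exponentially on clusters. Its integral is therefore finite, so $x_1$ stays in a bounded set $\mathcal X_1$ determined by the initial condition, $B$, $\|W_B\|$ and the convergence rate.

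\textbf{Main obstacle.} The hardest step is the asymmetry of $\text{So}_{\Xi}$, which breaks the exact cancellation in Step 3. I would first try to bound the residual cross term by $\|\text{So}_{\Xi}-\text{So}_{\Xi}^\top\|\,\|W_B\|\,\|x_1\|\|x_2\|$. If that fails, I would fall back on the integrability argument for $x_1$ in Step 4.
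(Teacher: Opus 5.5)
Your skeleton matches the paper's: the same quadratic $P_1=\tfrac12\|x_{1|}\|^2+\tfrac12\|x_{2|}\|^2$ for the $x_2$ bound, then a finite-integral bound on $\dot x_1=-(I_N-\text{So}_{\Xi})x_2W_B$ taken from the contraction of $x_2$. The paper bounds that integral by $\|W_B\|_\infty\|B\|_\infty/\lambda_{\max}(F_{22})$. However, the step you use to close the $x_1$--$x_2$ loop does not work.

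\textbf{Step 3 cannot supply the $x_1$ bound that Step 2 needs.} The cross terms $-\langle x_1,(I_N-\text{So}_{\Xi})x_2W_B\rangle+\langle x_2,(I_N-\text{So}_{\Xi})x_1W_B\rangle$ cancel only if $\text{So}_{\Xi}$ is symmetric, which a row-normalized masked softmax generally is not. Your fallback residual $\|\text{So}_{\Xi}-\text{So}_{\Xi}^\top\|\,\|W_B\|\,\|x_1\|\,\|x_2\|$ cannot be absorbed, because $\dot V$ contains no $-\|x_1\|^2$ term. Even with exact cancellation, $\dot V\le-\underline{\tau}\|x_2\|^2+c\|x_2\|$ (with $c=\|\Xi b_B\|$) is negative only outside a set that is unbounded in $x_1$. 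It therefore bounds neither $V$ nor $x_1$, and the radius $c/\underline{\tau}$ simply ignores the forcing $(I_N-\text{So}_{\Xi})x_1W_B$.

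\textbf{Step 4 does not rescue this.} It imports exponential decay of $(I_N-\text{So}_{\Xi})x_2$ from Theorem~\ref{th:theorem-2}, whose argument contracts only the $F_{22}$ block and likewise sets the $x_1$ coupling aside. Moreover, under Assumption~\ref{ass:assumption-conn} the softmax puts positive weight on inter-cluster edges. A state with two or more distinct clusters is then not in $\ker(I_N-\text{So}_{\Xi})$, so the integrand need not vanish. The argument is circular, and some control of $x_1$ is unavoidable: on a face $x_{2,jk}=B_{jk}$, a large $[(I_N-\text{So}_{\Xi})x_1W_B]_{jk}$ pushes $x_2$ out of the box.

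You also miss the structural fact the paper's proof actually rests on. This fact pins the box to the bias $B$ of \eqref{eq:model-dynamics} rather than to a radius you choose. In the paper the input enters elementwise as $\text{diag}(\Xi_{|})B_{|}$, carrying the same factor $\Xi_{|,i}\ge0$ as the damping $-\Xi\circ x_2$. Hence $\Xi_{|,i}(-x_{2|,i}^2+B_{|,i}x_{2|,i})<0$ whenever $|x_{2|,i}|>B_{|,i}$. Equivalently, on the face $x_{2,jk}=B_{jk}$, damping plus input contribute exactly $-\tau_kB_{jk}<0$, whatever $\Xi$ is. Your Step 2 bounds the input separately by $\|b_B\|_1$ and loses this cancellation. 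You are then forced to redefine the box radius, which proves a different statement, and to add unstated conditions such as $\underline{\tau}>2\|W_A\|_\infty$.

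The paper handles the rest by hypothesis rather than proof. It keeps only the $x_2$ terms of $\dot P_1$, so the $x_1$ cross terms never appear, and it assumes the network-coupling quadratic form is positive semidefinite. Row-stochasticity, which you invoke, does not give that property; you need the symmetric part of $I_N-\text{So}_x$ to be PSD. For example, $P=\mathbf{1}e_1^\top$ and $x=(1,a,\dots,a)$ with $0<a<1$ give $x^\top(I-P)x=(N-1)(a^2-a)<0$.

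To reproduce the paper's result, state those two hypotheses explicitly and run your face-by-face Step 2 with the shared-$\Xi$ cancellation. The per-face form is what box invariance really requires, since a sign condition on $\dot P_1$ outside the box only confines trajectories to a sublevel set of $P_1$, i.e.\ a ball. A proof without those hypotheses needs an extra ingredient. One option is a symmetric normalization of $\text{So}_{\Xi}$ together with a uniform detectability argument showing that $x_1$ is observable through $x_2$. Another is an explicit jointly invariant set for $(x_1,x_2)$.
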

The proof is reported in the Appendix~\ref{sec:proof-lemma3}. Given the Lipschitz constant $L_Z$ of the function $g$ in~\eqref{eq:Xi-def}, the closed loop system is contracting under the following Theorem:

\begin{theorem}
	\label{theorem:closed-loop}
	Under the joint control policy $\pi_{\theta}$ and weight
	dynamics~\eqref{eq:model-dynamics}, the closed-loop
	multi-agent system is contracting if the small-gain condition
	\begin{equation}
		\label{eq:small-gain}
		\lambda_{\max}(F_{22}) \cdot a > L_{G} \cdot L_{w}
	\end{equation}
	is satisfied, where
	\begin{equation}
		L_{G} = 2 L_Z \left( \|W\|_{\infty} + 1 + \frac{1}{16}\|W_B\|_{\infty} \right)
	\end{equation}
	represents the coupling strength between the weight and
	observation dynamics.

\end{theorem}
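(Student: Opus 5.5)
The plan is a standard feedback-interconnection (small-gain) argument in contraction theory~\cite{lohmiller1998contraction,russo2012contraction}. We treat the closed loop as two subsystems: the stochastic observation dynamics~\eqref{eq:ito-stochastic}, driven by the weights $w$, and the weight dynamics~\eqref{eq:model-dynamics}, driven by the observations $z$ through $\Xi=g(z)$. The coupling goes through $p_w=\text{So}(x_2)$.

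The composite virtual displacement is $\delta\zeta=(\delta z,\delta x_1,\delta x_2)$. The generalized Jacobian has block form
\begin{equation*}
F=\begin{bmatrix} F_{11} & F_{12}\\ F_{21} & F_{22}\end{bmatrix},
\end{equation*}
where $F_{11}$ is the contracting block of the observation dynamics, with rate $a=\min_i\alpha_i$ taken over the weight-induced subsystems. The diagonal block $F_{22}$ is the Jacobian of~\eqref{eq:model-dynamics} with respect to $(x_1,x_2)$, taken in a suitable metric. The off-diagonal blocks are bounded by $\|F_{12}\|\le L_w$ and $\|F_{21}\|\le L_G$.

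\textbf{Step 1: bounding $L_G$.} We differentiate the right-hand side of~\eqref{eq:model-dynamics} with respect to $z$. Only three places depend on $z$, and each goes through $\Xi=g(z)$, so every contribution carries a factor $L_Z$.
\begin{itemize}
\item The damping term $-(\tau+\Xi)\circ x_2$ has derivative bounded using Lemma~\ref{lemma:bounded-x2}, which keeps $x_2\in[-B,B]^{N\times K}$.
\item The input $\Xi B$ contributes a term bounded by $\|W\|_\infty$.
\item The attention matrix $\text{So}_\Xi$ appears through the sigmoid and the softmax. The sigmoid has maximal slope $1/4$ and the softmax Jacobian norm is at most $1/4$, giving $1/16$. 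This term multiplies $(x_1-x_2)W_B$-type quantities, which are bounded on the invariant sets of Lemma~\ref{lemma:bounded-x2}.
\end{itemize}
The factor $2$ comes from the symmetrization $F+F^\top$. After the normalization of $\Xi$ (min-max, so entries lie in $[0,1]$), the three contributions sum to $L_G$.

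\textbf{Step 2: contraction of the weight subsystem.} We show that $F_{22}$ is contracting for frozen $z$. This reuses the structure exploited in the proof of Theorem~\ref{th:theorem-2}:
\begin{itemize}
\item the damping $-(\tau+\Xi)\circ$ with $\tau>0$ and $\Xi\ge0$;
\item the Laplacian-like terms $(I-\text{So})$ with row-stochastic softmax and $W_A,W_B\succ0$;
\item the skew-symmetric cross-coupling between $x_1$ and $x_2$ through $W_B$.
\end{itemize}
This gives contraction rate $\lambda_{\max}(F_{22})$, understood as the magnitude of its contraction rate.

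\textbf{Step 3: composite contraction.} We apply the hierarchical/feedback combination result for contracting systems: a block Jacobian with contracting diagonal blocks at rates $a$ and $\lambda_{\max}(F_{22})$ is contracting in the metric $\mathrm{diag}(M,cI)$ when the product of the off-diagonal gains is smaller than the product of the rates. Choosing $c$ to balance the cross terms, a Schur-complement argument on the symmetric part gives negative definiteness exactly when $\lambda_{\max}(F_{22})\,a>L_GL_w$, which is~\eqref{eq:small-gain}.

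\textbf{Step 4: stochasticity.} The noise enters only the $z$-channel with bounded $\sigma_g$ (Assumption~\ref{ass:control-struct}). The $\alpha_s$ margin already built into the condition on $M$ then carries over, and the stochastic contraction result~\cite{tsukamoto2021contraction} yields convergence in expectation to a bounded error ball.

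The main obstacle is Step 2. The softmax attention $\text{So}_x$ depends on $x_2$ itself, and the clustering behaviour means the weight dynamics are not globally contracting between different clusters. I would first argue contraction within each cluster's invariant subspace, where $\text{So}_x$ has constant row sums, and define $\lambda_{\max}(F_{22})$ relative to the clustered equilibrium of Theorem~\ref{th:theorem-2}. I would bound the derivative of $\text{So}_x$ using the $[-B,B]$ invariance from Lemma~\ref{lemma:bounded-x2}.
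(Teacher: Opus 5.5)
Your Step~2 fails as formulated, and Step~3 inherits the failure. You take the weight subsystem to be the full $(x_1,x_2)$ dynamics of~\eqref{eq:model-dynamics} and claim it is contracting in a suitable metric. But $\partial \dot x_1/\partial x_1=0$, and with the row-stochastic attention you invoke, $(I_N-\text{So}_\Xi)\mathbf{1}_N=0$. Hence $\delta x_1=\mathbf{1}_N c^\top$, $\delta x_2=0$, $\delta z=0$ is a constant solution of the closed-loop differential dynamics. It is annihilated in the $x_2$ equation, nothing drives it in the $x_1$ equation, and it never reaches the observation channel, which sees the weights only through $p_w=\text{So}(x_2)$. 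Two trajectories whose $x_1$ differ by a consensus offset therefore stay at constant distance forever. So neither the weight subsystem nor the closed loop in $(z,x_1,x_2)$ is contracting in any uniformly bounded metric, and your Schur-complement step with metric $\diag(M,cI)$ has no negative-definite weight block to work with. Even the identity-metric picture is off: the $W_B$ cross terms are skew only if $\text{So}_\Xi$ is symmetric, which a row-wise softmax is not, and then the zero $x_1$ block makes the symmetric part indefinite. Restricting to cluster invariant subspaces does not help, because $\mathbf{1}_N c^\top$ is cluster-synchronous. The decisive obstruction is the integrator $x_1$, not the $x_2$-dependence of $\text{So}_x$ that you flagged.

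The paper avoids this by never asking $x_1$ to contract. In the statement, $F_{22}=-\diag(\tau+\Xi)+W_A^\top\otimes(I_N-\text{So}_x)$ is only the $x_2$ block from the proof of Theorem~\ref{th:theorem-2}, and its rate is imported from that partial-contraction argument. The state $x_1$ enters only through its bound from Lemma~\ref{lemma:bounded-x2}, which appears in $\sup_x\|G(x)\|_\infty$ as the prefactor $\max\bigl(B,\|W_B\|_\infty\|B\|_\infty/\lambda_{\max}(F_{22})\bigr)$. The small-gain theorem is then applied between the $x_2$ channel and the observation dynamics. This also corrects your Step~1 bookkeeping. Since $G$ is linear in $x_1$ and $x_2$, the $\partial g/\partial z$ contributions must be multiplied by these state bounds; min-max normalizing $\Xi$ does not remove them. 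In the paper, the leading $2$ in $L_G$ stands in for that state-bound prefactor after simplification; it is not a symmetrization effect. In your own Schur-complement argument, choosing $c=L_w/L_G$ makes the $\tfrac12$ from $F+F^\top$ cancel against $L_w+cL_G=2L_w$. This gives exactly $a\,|\lambda_{\max}(F_{22})|>L_wL_G$ with the raw gains, so attributing the $2$ to symmetrization double-counts.
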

We provide a proof in the appendix~\ref{sec:app-b}
\begin{remark}
	The Lipschitz constant $L_Z$ can be readily computed for
	neural networks $g$ employing standard activation functions
	such as ReLU or Tanh.
	Furthermore, $L_Z$ has a well-defined analytical form when
	$g$ is spectrally-normalized~\cite{miyato2018spectral},
	providing practical computational advantages for stability
	verification.
\end{remark}


Alternatively, we can apply the same stability analysis
framework to the weight selection dynamics under $w$.
This leads to the following equivalent characterization:
\begin{corollary}
	The stability condition in
	Theorem~\ref{theorem:closed-loop} is equivalent to ensuring
	that each subsystem of~\eqref{eq:ito-stochastic} induced by
	a fixed weight selection $w$ exhibits contractive behavior,
	along with the weight dynamics
	system~\eqref{eq:model-dynamics} having its own contraction
	rate.
	The reverse implication also holds.
\end{corollary}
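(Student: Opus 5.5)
The plan is to read the corollary as the block-decomposition counterpart of the small-gain condition~\eqref{eq:small-gain}. Using the proof of Theorem~\ref{theorem:closed-loop}, I would write the differential dynamics of the interconnected state $(z, x_2)$ (with $x_1$ treated as in that proof). The generalized Jacobian then splits into blocks $F_{11}$, $F_{12}$, $F_{21}$, $F_{22}$:
\begin{itemize}
\item $F_{11}$ is the observation dynamics~\eqref{eq:ito-stochastic} at a frozen weight $w$, contracting in the metric $M$ with rate $\alpha_i$;
\item $F_{22}$ is the weight dynamics~\eqref{eq:model-dynamics};
\item the off-diagonal blocks are bounded in the induced norms by $L_w$ (effect of $w$ on $z$) and by $L_G$ (effect of $z$ on the weights through $g$, $\text{So}_{\Xi}$ and $W_B$).
\end{itemize}
Lemma~\ref{lemma:bounded-x2} lets us take these bounds uniformly, since all trajectories stay in $[-B,B]^{N\times K}\times\mathcal{X}_1$. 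Each block is then reduced to a scalar rate, which gives the comparison matrix
\[
\Gamma=\begin{bmatrix} -a & L_w\\ L_G & -\lambda_{\max}(F_{22})\end{bmatrix}.
\]

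\emph{Forward direction.} Assume~\eqref{eq:small-gain} holds. Since $a=\min_i\alpha_i$ is a minimum of contraction rates, $a>0$. The product $\lambda_{\max}(F_{22})\,a$ exceeds $L_G L_w\ge 0$, so $\lambda_{\max}(F_{22})>0$ as well.
\begin{itemize}
\item $a>0$ means every frozen-$w$ subsystem of~\eqref{eq:ito-stochastic} contracts with rate at least $a$.
\item $\lambda_{\max}(F_{22})>0$ is exactly the statement that the weight dynamics~\eqref{eq:model-dynamics} contract on their own, with rate $\lambda_{\max}(F_{22})$.
\end{itemize}

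\emph{Reverse direction.} Assume the diagonal blocks contract with rates $a$ and $\lambda_{\max}(F_{22})$, and that the couplings obey the bounds above. I would build the composite metric $\mathrm{diag}(M,\kappa I)$ and ask for a scaling $\kappa>0$ that makes the symmetric part of the composite generalized Jacobian negative definite. By the standard Schur-complement argument for two-block hierarchies (Lohmiller--Slotine, and Russo et al.\ for switching weights), such a $\kappa$ exists iff $-\Gamma$ is a nonsingular M-matrix. For a $2\times 2$ matrix with positive diagonal, that holds iff $\det(-\Gamma)>0$, i.e.\ $a\,\lambda_{\max}(F_{22})>L_G L_w$, which is condition~\eqref{eq:small-gain}. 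Contraction of the closed loop then follows from Theorem~\ref{theorem:closed-loop}.

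The same computation gives the converse of the equivalence. If the closed loop contracts in a block-diagonal metric, restricting to virtual displacements with $\delta x_2=0$ shows each frozen-$w$ subsystem contracts, and restricting to $\delta z=0$ shows the weight dynamics contract. Hence both diagonal rates are positive, and negative definiteness of the composite form forces $\det(-\Gamma)>0$.

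The main obstacle is making the reverse implication honest. Contraction of the two subsystems alone does not imply~\eqref{eq:small-gain} unless their rates are compared against the coupling bounds $L_G$ and $L_w$. I would therefore state the equivalence as holding for the two-rate comparison system with those fixed coupling bounds, which is the setting of Theorem~\ref{theorem:closed-loop}. The two points that need explicit checking are:
\begin{itemize}
\item that $\lambda_{\max}(F_{22})$ is indeed the contraction rate of~\eqref{eq:model-dynamics} in the chosen metric;
\item that the time-varying, switching weights do not break the frozen-$w$ argument. For this I would invoke the dwell-time-free switching contraction result~\cite{russo2012contraction}, which applies because all subsystems share the metric $M$ and the rate bound $a$.
\end{itemize}
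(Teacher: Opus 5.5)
\textbf{There is a genuine gap: the reverse implication is never proved.} Your forward direction is fine. In your ``reverse direction'' paragraph, the Schur-complement/M-matrix step only shows that a scaling $\kappa$ exists if and only if $a\,\lambda_{\max}(F_{22})>L_G L_w$. You then conclude closed-loop contraction, which is Theorem~\ref{theorem:closed-loop} again under the small-gain hypothesis. Nothing in that paragraph derives~\eqref{eq:small-gain} from contraction of the frozen-$w$ subsystems and of~\eqref{eq:model-dynamics}.

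With $L_GL_w>0$ that implication is in fact false. Any pair of positive rates with $a\,\lambda_{\max}(F_{22})\le L_GL_w$ is a counterexample. The same holds for $\Gamma$ itself: its diagonal entries can both be negative while $\det(-\Gamma)\le 0$. So retreating to ``the comparison system'' does not rescue the claim. It replaces the corollary by a different statement, namely an M-matrix reformulation of Theorem~\ref{theorem:closed-loop}.

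Your ``converse'' paragraph also overreaches. Negative definiteness of the true composite form does not force $\det(-\Gamma)>0$. The quantities $L_w$ and $L_G$ are only norm bounds, and the actual cross terms can cancel in the symmetric part. For instance, $\dot x=-x+ky$, $\dot y=-kx-y$ is contracting in the identity metric for every $k$, while $1\cdot 1>k^2$ fails for $|k|>1$.

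\textbf{The missing idea is the decoupled case $L_w=0$.} The paper asserts the equivalence for this case, in which the observation dynamics do not depend on the weights. Then~\eqref{eq:small-gain} reads $\lambda_{\max}(F_{22})\,a>0$. Given $a>0$, this holds exactly when the weight dynamics also have a positive contraction rate. Conversely, positive rates give $\lambda_{\max}(F_{22})\,a>0=L_GL_w$.

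Your own machinery delivers this immediately once you impose it. With $L_w=0$ the matrix $\Gamma$ is lower triangular and $\det(-\Gamma)=a\,\lambda_{\max}(F_{22})$. The M-matrix condition then collapses to positivity of the two diagonal rates, which is the hierarchical (cascade) case of contraction theory. Without restricting to $L_w=0$, or an equivalent decoupling assumption, no argument can give the two-way equivalence stated in the corollary.
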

\begin{proof}
	This equivalence follows directly from
	Theorem~\ref{theorem:closed-loop} by considering each
	subsystem individually with $L_w = 0$, which decouples the
	weight dynamics from the observation dynamics.
	When $L_w = 0$, the small-gain condition reduces to
	verifying contraction of each subsystem independently.
\end{proof}

The conditions stated in Theorem~\ref{theorem:closed-loop}
and the agent dynamics contraction can be enforced during
training by adding a regularization term to the policy and weight-predictor loss functions that penalizes the spectral norms of the network parameters. This helps control their Lipschitz constants and satisfy the small-gain condition.

\begin{remark}\label{remark:on-ass12} Assumptions~\ref{ass:control-struct} is a standard policy exploration assumptions derived from the learning framework while Assumption~\ref{ass:ito-stoc} is a direct controlled consequence of our learning formulation. However, the deterministic observation dynamics might be restrictive even if our contraction analysis inherently guarantees robustness against bounded additive disturbances~\cite{tsukamoto2021contraction}. We reserve to future works the extension of the formal analysis to encompass general stochastic dynamics and observation.\end{remark}

\section{Evaluations}
\label{sec:evaluations}
\begin{figure}[t]
	\centering
	\begin{subfigure}{0.49\linewidth}
		\centering
        \includegraphics[width=\linewidth]{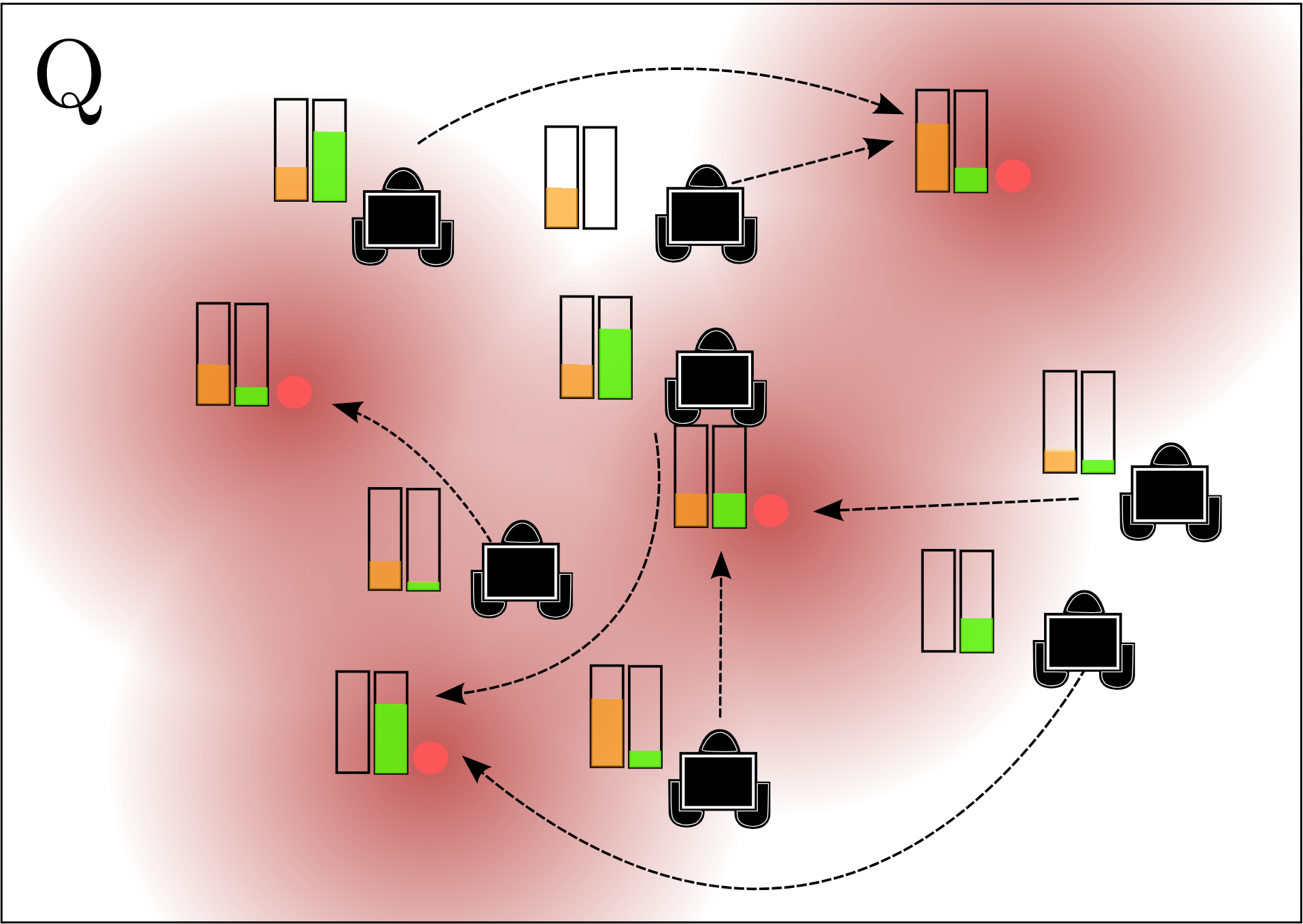} \caption{\vspace{1em}}
        \label{fig:resource_assignment_scenario}
	\end{subfigure}
	\hfill
	\begin{subfigure}{0.49\linewidth}
		\centering
		\includegraphics[width=\linewidth]{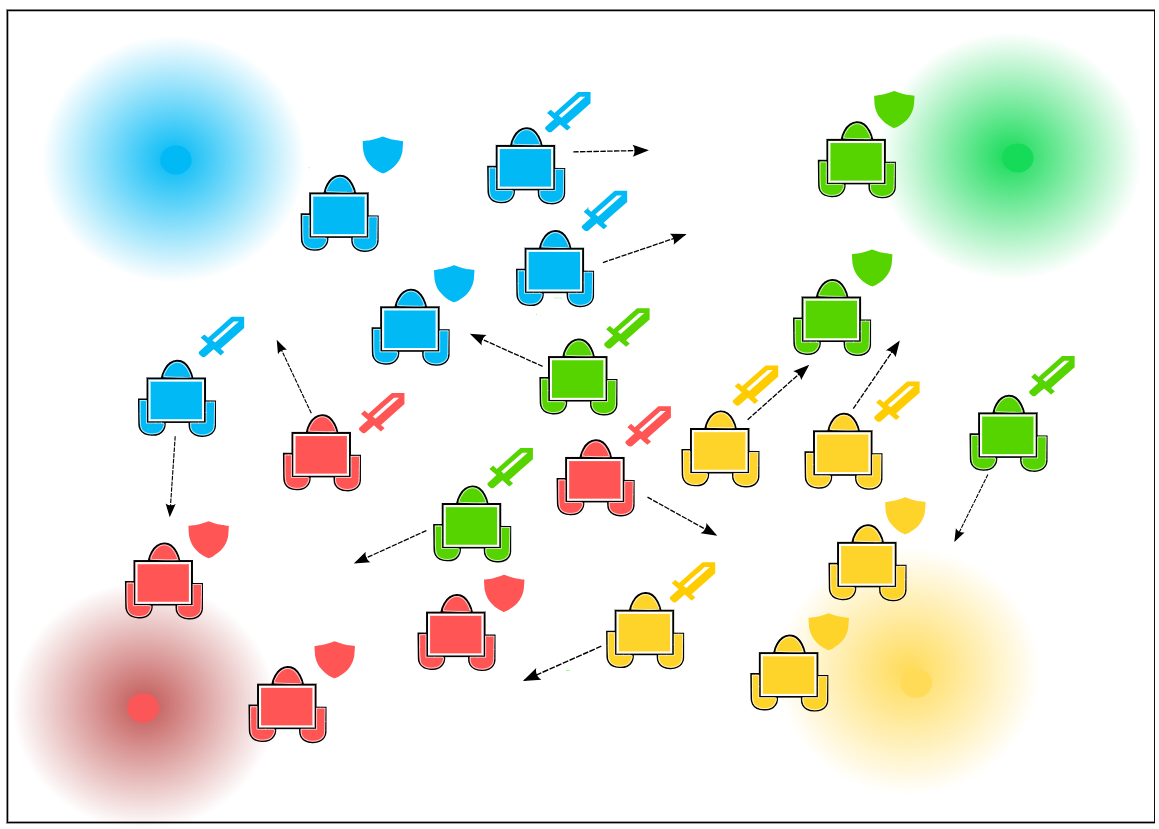} \caption{\vspace{1em}}\label{fig:attacker_defender_scenario}
	\end{subfigure}
	\caption{\textbf{Experimental scenarios}: (a) \textit{Multi-resource assignment scenario.} Agents must deliver resources to consumer locations (red) while avoiding collisions and maintaining connectivity. (b) \textit{Multi-team attacker-defender scenario.} Multiple teams compete against each other to infiltrate multiple opponent safe areas while defending their own territory against opposing teams.}
	\label{fig:scenarios}
\end{figure}
We evaluate our base method, CIMORL, and its sampling-augmented variants (CIMORL-TS and CIMORL-MPPI) on two multi-agent domains: multi-resource assignment and multi-team attacker-defender. Detailed hardware, training, and architectural setups are deferred to the Appendix.

We compare against state-of-the-art multi-policy baselines MOMAPPO~\cite{felten2024momaland} and MO-MIX~\cite{hu2023mo}, which generate discrete sets of policies. To ensure a fair algorithmic comparison against our continuous conditioned approach, we evaluate all methods over a uniformly sampled batch of evaluation preference weights $w^*$. For the multi-policy baselines, for each weight vector in the evaluation batch, we select the specific policy from their generated set that maximizes the expected scalarized utility under that weight. Furthermore, we introduce MOMAPPO-TS, an on-policy baseline augmented with Tree Search that incorporates privileged expert data via the loss defined in Eq.~\eqref{eq:eqpert-loss}. This further baselines serves as further validation of our method when the baseline is trained under the same privileged expert conditions.

To evaluate the quality, coverage, and distribution of the solutions, we report the hypervolume indicator~\cite{guerreiro2021hypervolume}, spacing, and diversity metrics~\cite{hu2023mo}. Policy adaptability and performance are assessed via the Expected Scalarized Return (ESR) formulation of expected utility~\eqref{eq:real-objective}. Because exhaustive empirical rollouts in continuous multi-objective spaces are intractable and suffer from high variance, we efficiently approximate the Pareto front by evaluating the trained value function $V_{\phi}$ over identical, uniformly sampled weights $w^*$ consistently across all methods~\cite{felten2024momaland}. The final front is then refined using a pruning K-means clustering method~\cite{petchrompo2022review}.

Furthermore, to empirically validate Lemma~\ref{lemma:sync_w} and Lemma~\ref{lemma:sync_c}, we evaluate two ablation baselines derived from our method: CIMORL w/o Sync (which bypasses consensus, directly mapping the embedding $\Xi$ to the weight distribution) and CIMORL w/o Clust (which sets $\text{So}_{\Xi} = S$ and $\text{So}_{x} = S$ in Eq.~\eqref{eq:model-dynamics} to force a single global consensus rather than distinct coalitions).

The following sections detail each scenario, presenting both the simulated training results and the real-world experimental validation using a team of Crazyflie 2.1 drones.
\subsection{Multi-Resource Assignment}
\label{sec:resource-assignment}
We consider a multi-resource assignment scenario involving a fully cooperative multi-agent team $\mathcal{V}_s$ comprising $N_s$ agents. The task requires delivering $r$ distinct resource types to $K$ consumer locations situated within a confined operational space $Q$, where each consumer may demand a specific subset of the available resources. Each consumer is associated with an interaction region of radius $R_k$, within which an agent may release its carried resources to satisfy the consumer's demand, as illustrated in Figure~\ref{fig:scenarios}. Furthermore, agents experience collisions when their inter-agent distance falls below a threshold $R_{cl}$.

The information structure enforces partial observability constraints, where each agent $i$ has access to limited observations $z_i \in \mathcal{Z}$ comprising: (i) its current position $p_i$, velocity $v_i$, and carried resources $\mathrm{so}_i \in \mathbb{R}^r$; (ii) the locations of all target areas $ds_k$ and their corresponding demanded resources $\mathrm{de}_k \in \mathbb{R}^r$; and (iii) a restricted set of teammate positions $\mathcal{O}_s = \{p_j \mid j \in \mathcal{N}_i\}$ within the agent's sensing range. To ensure the existence of a unique resource allocation solution, we assume that the aggregate available supply $\sum_{i=1}^{N_s} \mathrm{so}_i$ does not exceed the total demand $\sum_{k=1}^{K} \mathrm{de}_k$. The objective of this scenario is to determine a collaborative control policy that enables the agents to optimally satisfy the resource demands within the environment while maintaining network connectivity and avoiding inter-agent collisions. These objectives inherently conflict; optimizing delivery speed requires spatial dispersion that compromises connectivity, while aggressive movement increases spatial density and collision risks.

Each agent receives a vector reward $\mathrm{rw}_{i}(s, u): \mathcal{S} \times \mathcal{U} \rightarrow \mathbb{R}^{K+2}$ where the first $K$ components reward the successful fulfillment of the $K$ consumer demands, and the final two components incentivize connectivity maintenance and collision avoidance, respectively. Note that the reward $\mathrm{rw}_i$ depends on the joint state and actions of all agents, reflecting the inherently cooperative nature of the task. Specific details regarding the mathematical formulation of these rewards and the neural network architectures are deferred to the Appendix.
\begin{figure*}[t]
	\centering
	\includegraphics[width=\linewidth]{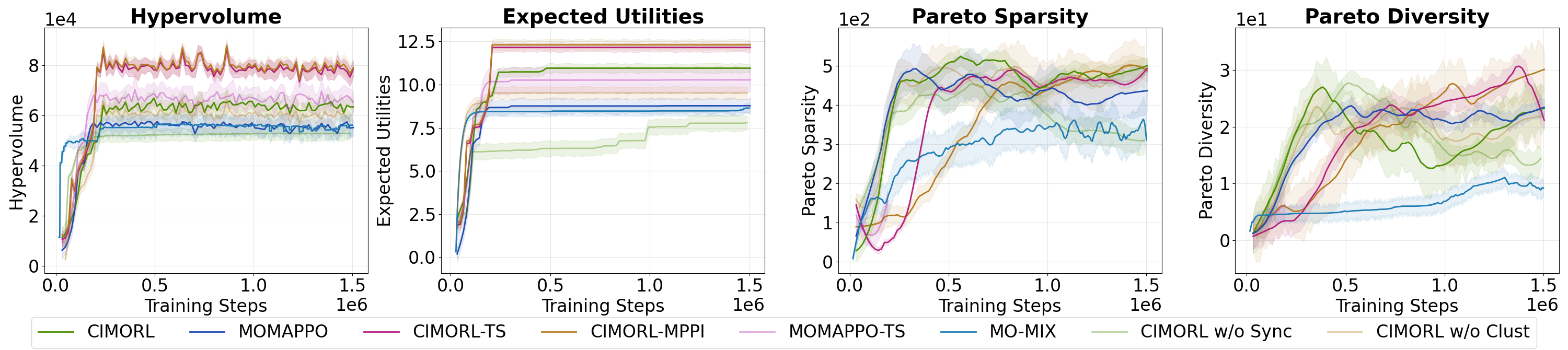}
	\caption{\textbf{Multi-resource assignment training comparison}: Hypervolume indicator and expected utility results for all methods across multiple training runs.}
	\label{fig:comparison_ra}
\end{figure*}
\begin{figure}[t]
    \centering
    \includegraphics[width=0.8\linewidth]{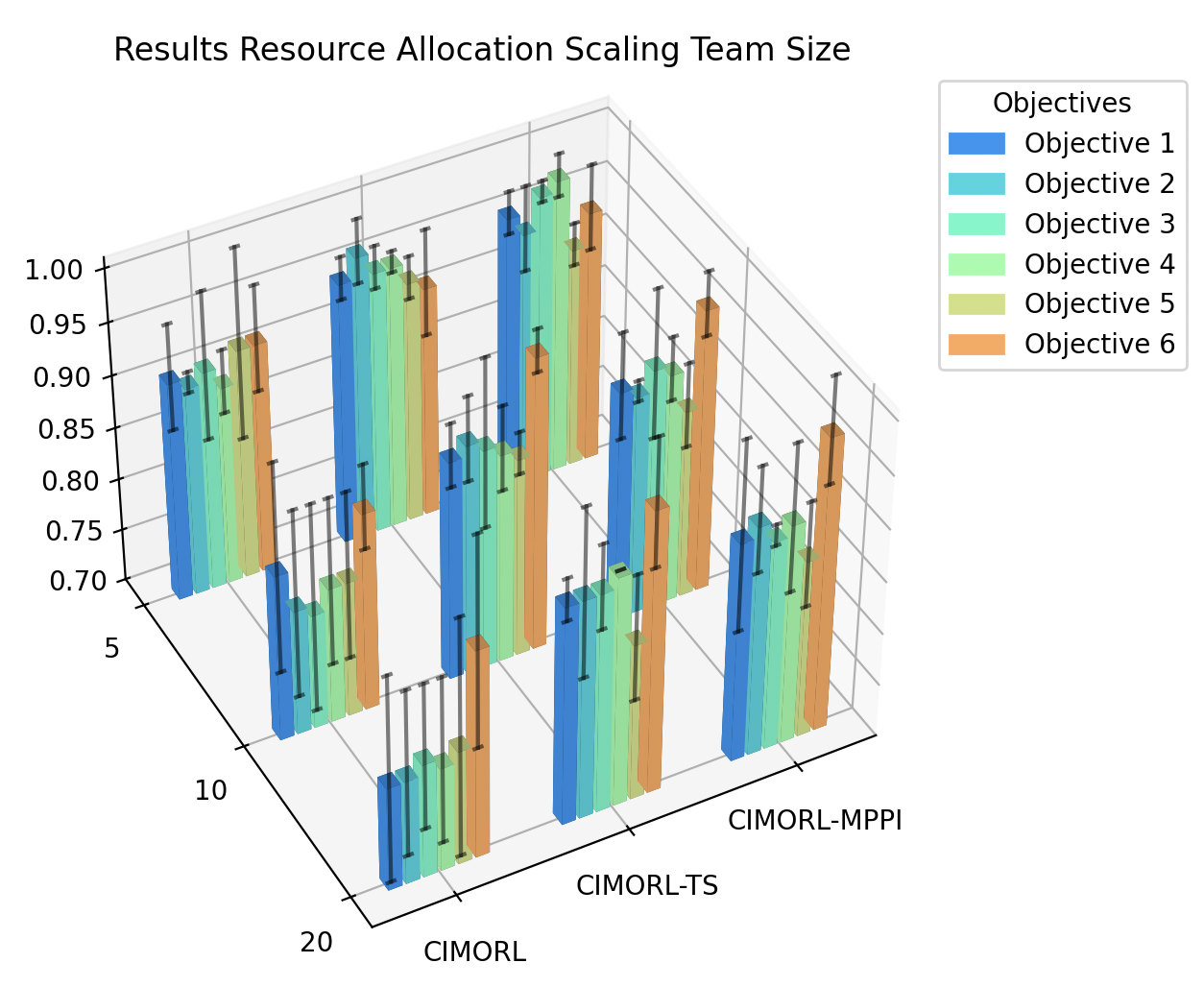}
    \caption{Normalized return for resource-allocation task for number of agents in the team $N=[5,10,20]$ and $6$ objectives ($4$ demanding locations, obstacle avoidance and connectivity) reported as bar plots with their standard deviation as black lines.}
    \label{fig:ra-team-size}
\end{figure}
\begin{figure}[t]
	\centering
	\includegraphics[width=\linewidth]{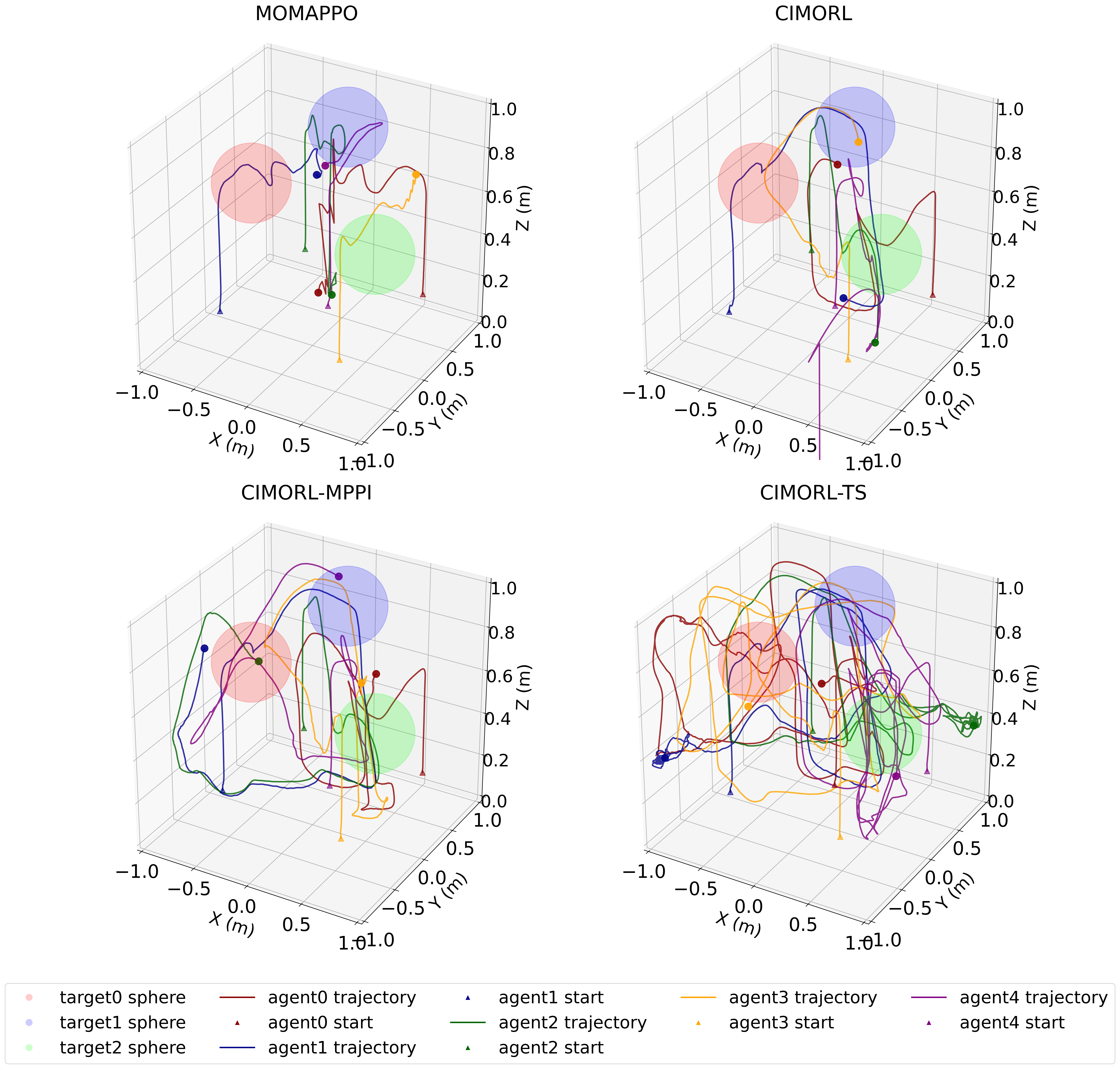}
	\caption{\textbf{3D trajectories in multi-resources allocation experiment}: 3D trajectory comparison of the four methods.}
	\label{fig:ra_3d_trajectories}
\end{figure}
\begin{figure}[t]
	\centering
	\includegraphics[width=\linewidth]{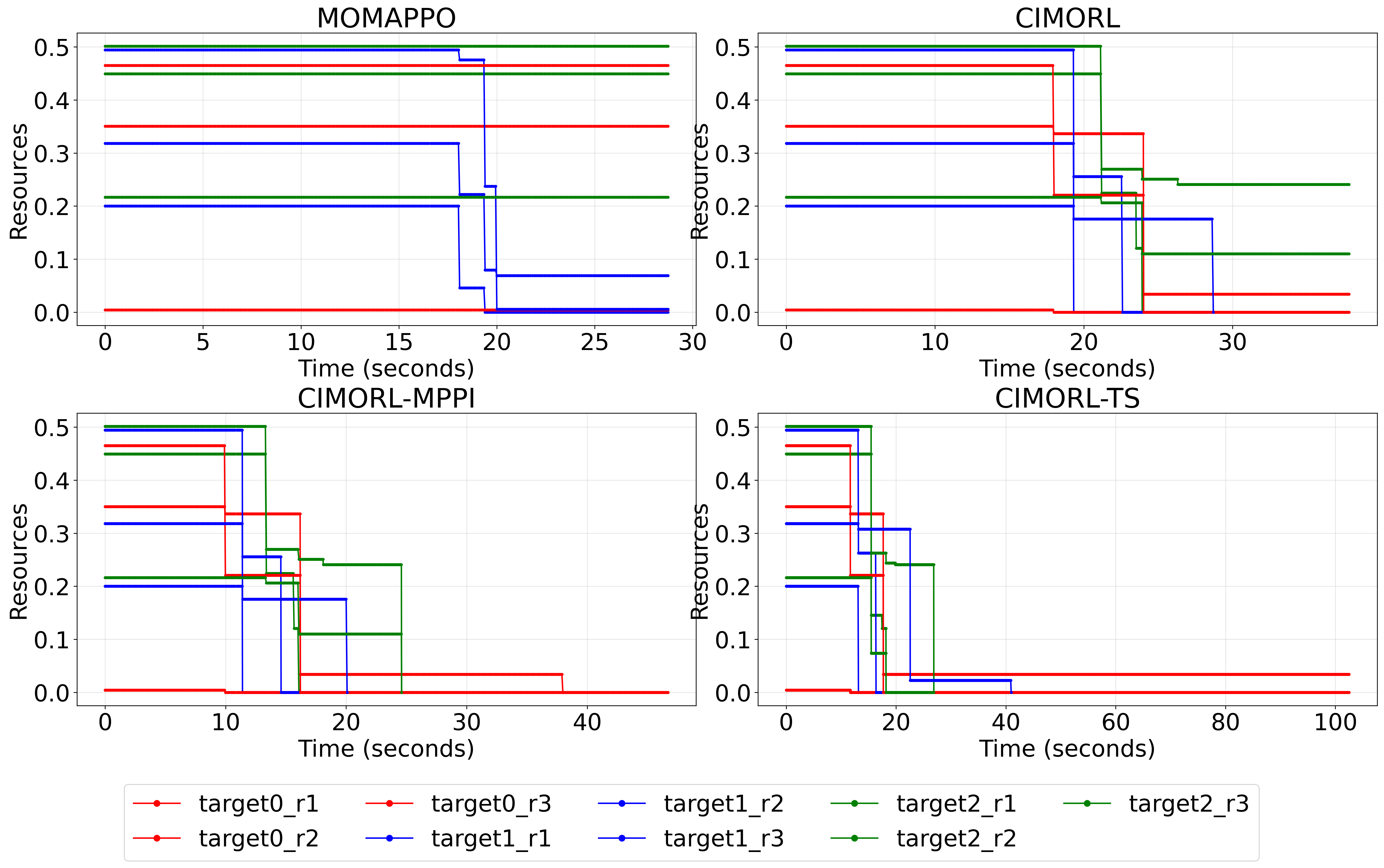}
	\caption{\textbf{Demanding resource evolutions in multi-resources allocation experiment}}
	\label{fig:ra_targets}
\end{figure}
\begin{table}[t]
    \centering
    \caption{Disconnection rate in multi-resource allocation experiment}
    \begin{tabular}{llll}
        \toprule
        \textbf{MOMAPPO} & \textbf{CIMORL} & \textbf{CIMORL-MPPI} & \textbf{CIMORL-TS} \\
        \midrule
        $0.0\%$ & $0.0\%$ & $0.2\%$ & $2.8\%$ \\
        \bottomrule
    \end{tabular}
    \label{tab:disconnection_analysis}
\end{table}
\textbf{Training Results.} For the multi-resource assignment scenario, we conducted training with $5$ agents, $3$ resource types, and $4$ target locations. All sampling-based algorithms were trained with identical simulation budgets and planning horizons to ensure fair comparison. The learning rate was scheduled to decrease during training to promote convergence stability.

Figure~\ref{fig:comparison_ra} presents the training comparison across all methods using four key metrics: \black{Pareto Front hypervolume, sparsity and diversity,} and expected utility. The results demonstrate clear performance distinctions between the proposed methods and baseline approaches.

The sampling-based variants of our approach, \black{CIMORL-MPPI} and \black{CIMORL-TS}, achieved superior performance with statistically equivalent results. Both methods attained hypervolume values of approximately $8.0 \times 10^4 \pm 4.0 \times 10^3$, \black{indicating excellent Pareto front coverage, sparsity and diversity.} Their expected utility performance was similarly strong, reaching $12.0 \pm 0.5$, demonstrating consistent policy quality across multiple evaluation runs. \black{CIMORL-MPPI and CIMORL-TS perform comparably, proving that both MPPI and Monte Carlo Tree Search offer effective expert guidance. With $2000$ simulation samples and a horizon of $25$, there is no statistically significant difference between the two approaches. }

The base CIMORL method, while employing the same weight prediction framework, achieved moderately lower performance with a hypervolume of $6.4 \times 10^4 \pm 6.0 \times 10^3$ and expected utility of $11.0 \pm 0.3$. This performance gap of approximately $20\%$ in hypervolume compared to the sampling-based variants highlights the importance of privileged expert guidance during training. Nevertheless, CIMORL still outperformed the baseline methods, demonstrating the effectiveness of the proposed weight prediction mechanism. All these methods reach similar levels of Pareto Front and Sparsity, showing high diversity of the weight that span very different points on the Pareto Front.

Among the baseline methods, MOMAPPO-TS showed improved performance over standard MOMAPPO, achieving a hypervolume of $6.6 \times 10^4 \pm 8.0 \times 10^3$ compared to MOMAPPO's $5.8 \times 10^4 \pm 5.0 \times 10^3$. This $13.8\%$ improvement demonstrates that tree search integration provides meaningful benefits even for fixed-weight approaches. Moreover, the improvement of $21.2\%$ in hypervolume achieved by \black{CIMORL-MPPI} and \black{CIMORL-TS} over MOMAPPO-TS shows the impact of the weight prediction model.

However, all baseline methods exhibited substantially lower expected utility performance, with MOMAPPO achieving $8.5 \pm 0.5$, similar to MO-MIX and MOMAPPO-TS reaching $10.5 \pm 1.0$. The significantly higher variance in MOMAPPO-TS's expected utility ($\sigma = 1.0$) indicates inconsistent policy performance, likely due to the fixed weight selection strategy's inability to adapt to varying environmental conditions. Without synchronization, our framework's Pareto Front converges to a low, stable hypervolume alongside a marked decline in sparsity and diversity. This plateau demonstrates that lacking explicit coordination yields stable, albeit lower-quality solutions, as reflected by the expected utilities. Conversely, the dynamic synchronization in CIMORL w/o Clust effectively generates adaptive, high-value solutions. This configuration achieves a hypervolume of $6.0 \pm 0.5 \times 10^4$ and expected utilities of approximately $9.0 \pm 0.4$, outperforming MOMAPPO while remaining subordinate to the complete CIMORL model. Furthermore, this variant maintains sparsity and diversity levels comparable to both CIMORL and its sampling-based counterpart. Ultimately, while this method produces an evenly distributed Pareto Front, its solutions remain strictly dominated by those discovered by the full CIMORL framework. 

In Figure~\ref{fig:ra-team-size} , we present the results of scaling the team size from $5$ to $20$ agents. To facilitate comparison, returns were shifted and normalized to $[0, 1]$ based on the maximum return achieved for each objective across all $10$ random seeds. As expected, overall returns decrease with larger teams. However, connectivity (Objective $6$) improves, as the denser environment naturally makes this objective easier to satisfy. In general, CIMORL-TS and CIMORL-MPPI maintain returns above $0.9$ across all objectives, whereas performance drops to approximately 0.8 under the base CIMORL policy. This suggests that the sampling search achieves broader coordination than what is observed during training. Furthermore, performance remains remarkably consistent across the different objectives, demonstrating that roles are evenly assigned and the clustering mechanism effectively distributes tasks among agents as the team size grows.

\textbf{Experimental Comparison.} We performed real lab experiments to evaluate our approach. The experimental setup utilized 50 simulations for the three search-based methods and 1000 simulations for the MPPI-based approach with a horizon length of 10 steps to maintain computation within the identical hardware time budget of $50$~ms. This discrepancy is not an artificial setting but the natural outcome of the methods' inherent mechanisms: MONTS requires sequential node expansion that bottlenecks parallelization, whereas MOMPPI evaluates independent trajectories, allowing massive GPU batching within the same latency constraint.

Figure~\ref{fig:ra_3d_trajectories} reports the 3D trajectories of the experiments while the resource allocation dynamics are illustrated in Figure~\ref{fig:ra_targets}, for the four methods, MOMAPPO, CIMORL, \black{CIMORL-TS}, and \black{CIMORL-MPPI}. The results reveal significant behavioral differences among the methods. MOMAPPO demonstrated the most severe limitations, with agents colliding almost immediately upon mission initiation, preventing any meaningful resource allocation. CIMORL exhibited marginally better performance, allowing agents to operate for a brief period before collision occurred, yet still failing to complete the resource allocation task. 

In contrast, \black{CIMORL-TS} successfully avoided inter-agent collisions throughout the experiment, demonstrating that the sampling-based search mechanism effectively adapts to real drone dynamics. However, despite collision avoidance, \black{CIMORL-TS} failed to achieve complete resource allocation even during extended flight operations. \black{CIMORL-MPPI} emerged as the only method capable of both collision avoidance and successful resource allocation, validating the effectiveness of the proposed coordination-informed approach. \black{ This shows the compromise of CIMORL-TS in real-time applications as we had to drastically reduce the number of simulations to stay within the time budget. Consequently, the asymmetric search tree is computationally starved under finite sampling; it lacks the iterative depth necessary to reliably balance exploration and exploitation, systematically leading to brittle decisions based on unrefined leaf evaluations. Conversely, the 1000 uniform parallel samples achieved by CIMORL-MPPI generate a robust, smoothed approximation of the objective landscape, inherently producing safer, average-case optimal decisions under the identical time constraint.}

Table~\ref{tab:disconnection_analysis} provides insights into the connectivity objective. MOMAPPO and CIMORL both exhibit $0\%$ disconnection rates, as the flights terminate so rapidly due to collisions that insufficient time elapses for agents disconnection to manifest. \black{CIMORL-MPPI} achieves the most favorable disconnection performance with only $0.2\%$ disconnection rate, demonstrating robust coordination capabilities. \black{CIMORL-TS experiences a higher disconnection rate of $2.8\%$. Because agents take longer to fly and spread out further to allocate the remaining resources, the increased distance between them leads to more frequent link failures.}

The experimental results validate several key aspects of our approach: (i) the proposed weight prediction framework consistently outperforms fixed-weight baselines across both metrics; (ii) privileged expert guidance through sampling-based algorithms provides substantial performance improvements; and (iii) both MPPI and MCTS variants achieve comparable results, offering flexibility in computational resource allocation based on specific deployment constraints.

\subsection{Multi-team Attackers-Defenders}
\begin{figure}[t]
    \centering
    \includegraphics[width=0.8\linewidth]{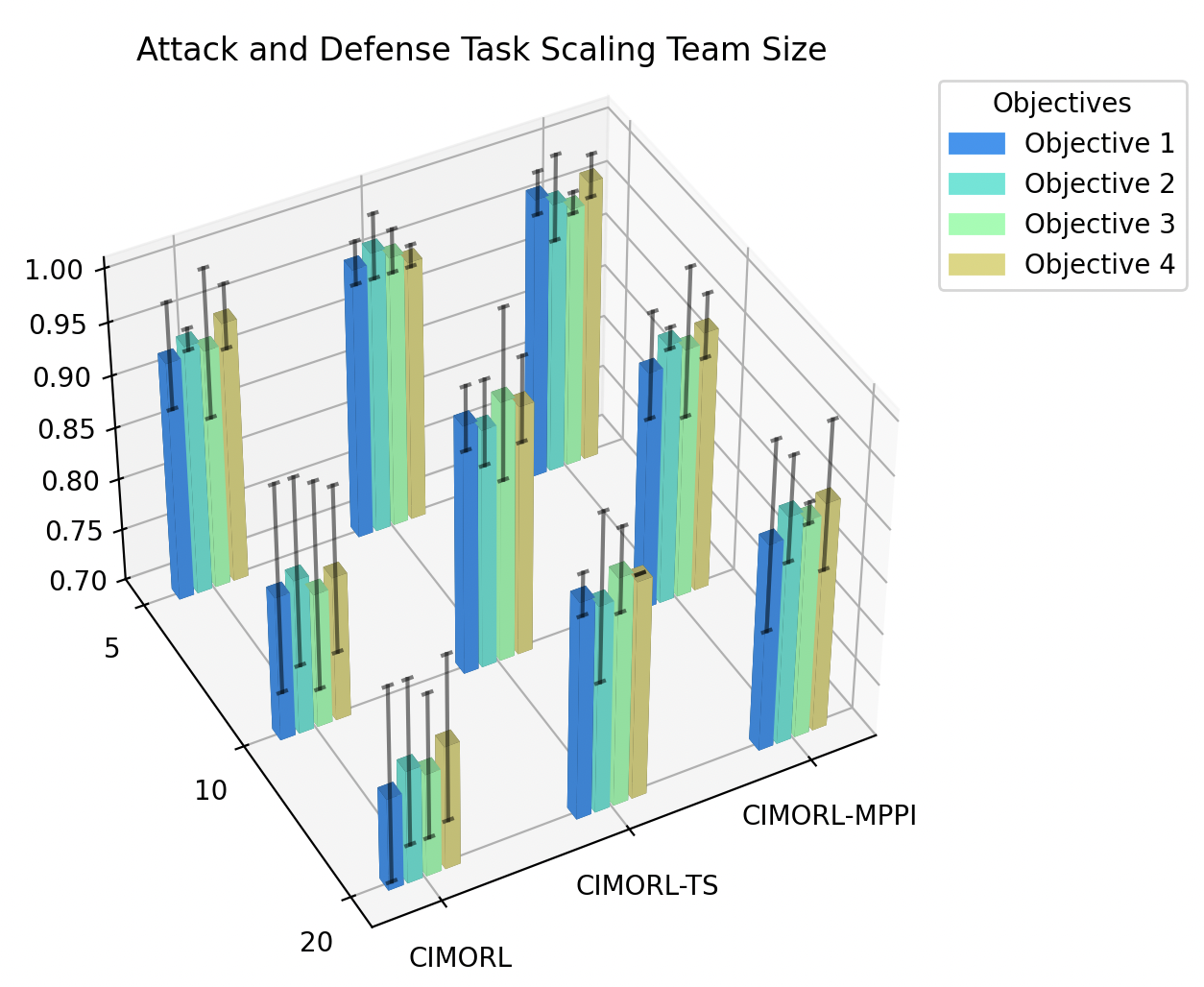}
    \caption{Normalized returns for the attacker-defender task for varying team sizes of $5$, $10$, and $20$ agents, evaluated across $4$ teams ( therefore $4$ objectives). Results are reported as bar plots with their standard deviation as black lines.}
    \label{fig:ad-team-size}
\end{figure}
We consider a complex multi-team adversarial scenario featuring asymmetric attack and defense objectives. The environment consists of a fully cooperative attacking team $\mathcal{V}_s$ comprising $N_s$ agents that must simultaneously accomplish the following objectives: penetrate $K$ distinct opponent safe areas $\mathcal{R}_k$, each defended by teams $\mathcal{V}_k$ of size $N_k$, while protecting their own territorial zone $\mathcal{R}_s$ located at position $d_s$. The complete set of opposing teams is denoted as $\mathcal{K} = \{1, \ldots, K\}$. Each agent possesses a circular interaction zone of radius $R$ within which it can eliminate an opponent agent through direct engagement, resulting in mutual destruction of both participants. Figure~\ref{fig:scenarios} illustrates this multi-objective adversarial configuration.

The information structure enforces partial observability constraints, where each agent $i$ has access to limited observations $z_i \in \mathcal{Z}$ comprising: (i) its current position $p_i$ and velocity $v_i$; (ii) the locations of all safe areas $ds_k$; (iii) a restricted set of teammate positions $\mathcal{O}_s = \{p_j \mid j \in \mathcal{N}_i\}$ within sensing range; and (iv) a limited collection of opponent positions $\mathcal{O}_k = \{p_{o_k} \mid k \in \mathcal{N}_k, \forall k \in \mathcal{K}\}$ across all opposing teams.

The primary objective is to derive a collaborative control policy that enables team $\mathcal{V}_s$ to successfully capture all adversarial safe areas while maintaining control of their own territory under partial observability constraints. 

Each agent receives a multi-dimensional reward signal $\mathrm{rw}_i(s, u): \mathcal{S} \times \mathcal{U} \rightarrow \mathbb{R}^{K+1}$ that evaluates performance across $K$ attack objectives (corresponding to opponent safe areas) and one defense objective (protecting the home territory). The reward structure must carefully balance individual agent contributions with collective team achievements, incorporating coordination incentives for collaborative attack and defense strategies while enforcing operational constraints such as collision avoidance. \black{Full architectural parameters and reward function aggregations are detailed in the Appendix.}

\begin{figure*}[t]
	\centering
	\includegraphics[width=\linewidth]{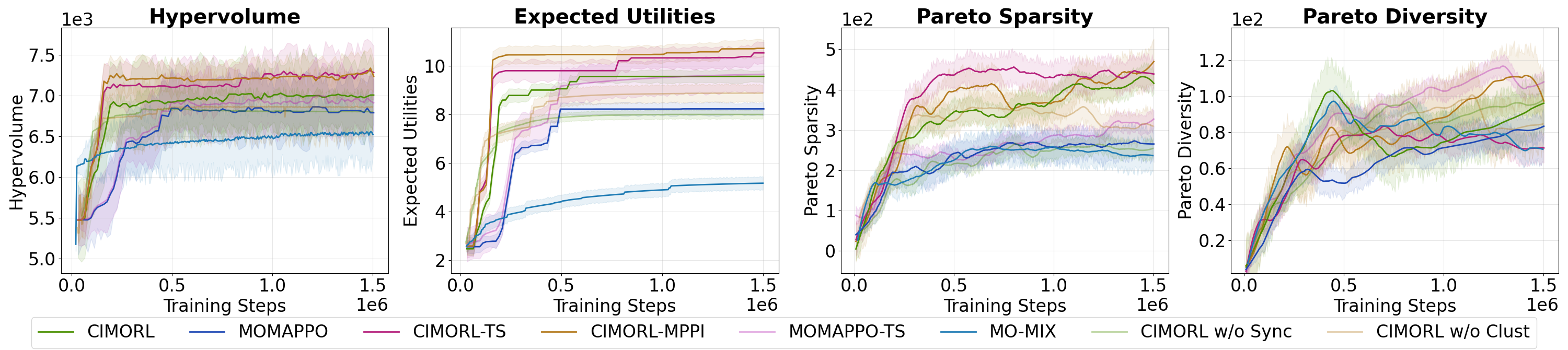}
	\caption{\textbf{Multi-team attacker-defender performance comparison}}
	\label{fig:comparison_ad}
\end{figure*}
\begin{figure*}[t]
	\centering
	\includegraphics[width=\linewidth]{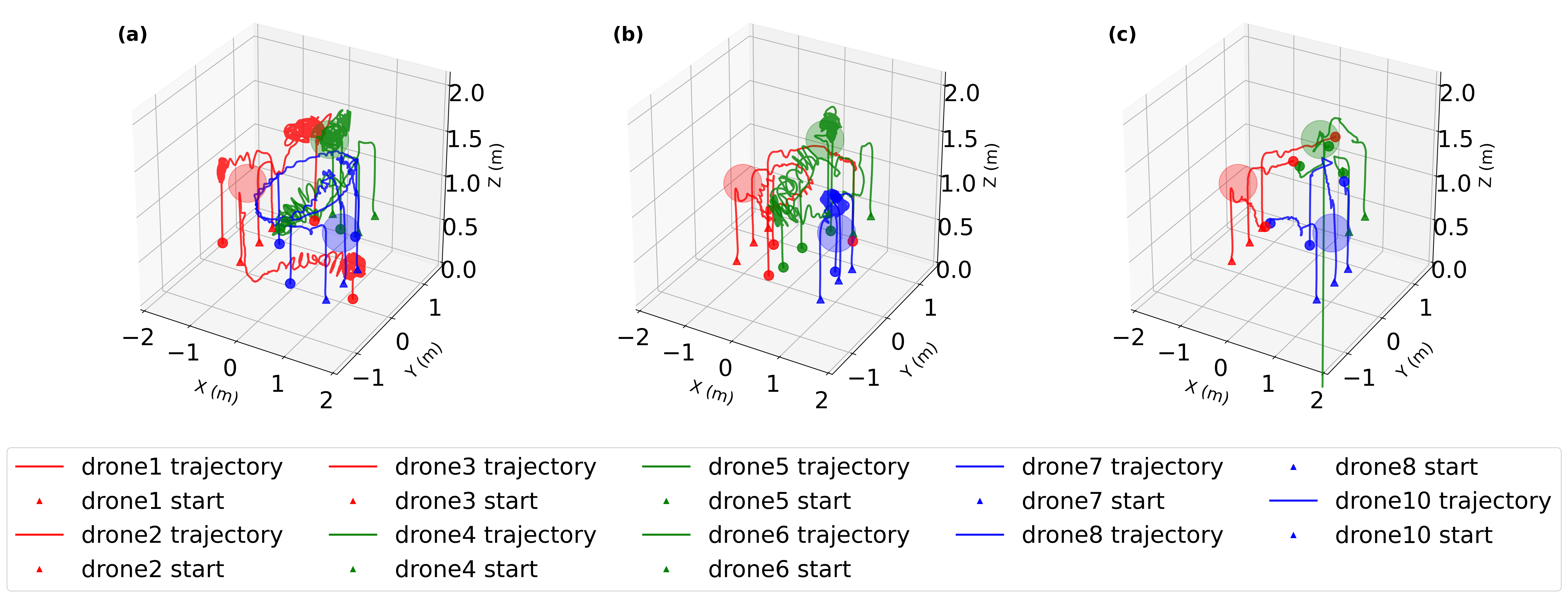}
	\caption{\textbf{3D trajectories in multi-team attacker-defender experiments.}}
	\label{fig:trajectory_comp_ad}
\end{figure*}
\begin{table}[t]
    \centering
    \caption{Multi-Objective Scores with Alternating Policies. The assigned policy rotates across the Blue, Red, and Green teams for each scenario.}
    \label{tab:score_ad}
    \begin{tabular}{l | ccc}
        \toprule
        \textbf{Scenario} & \textbf{Red Team} & \textbf{Green Team} & \textbf{Blue Team} \\
        \midrule
        \textbf{a} & \textit{CIMORL} & \textit{MOMAPPO} & \textit{CIMORL-MPPI} \\
         & $(-694, 992, 379)$ & $(-2499, 0, 0)$ & $(-379, 694, 1507)$ \\
        \addlinespace
        \textbf{b} & \textit{CIMORL-MPPI} & \textit{CIMORL} & \textit{MOMAPPO} \\
         & $(0, 727, 0)$ & $(-727, 0, 0)$ & $(0, 0, 0)$ \\
        \addlinespace
        \textbf{c} & \textit{MOMAPPO} & \textit{CIMORL-MPPI} & \textit{CIMORL} \\
         & $(0, 0, 0)$ & $(-418, 0, 0)$ & $(0, 0, 418)$ \\ 
        \bottomrule
    \end{tabular}
\end{table}
\textbf{Training Results.} For the multi-team attacker-defender scenario, we conducted experiments with $20$ agents distributed equally across $4$ teams, creating a complex adversarial environment with multiple competing objectives. The sampling-based algorithms maintained identical simulation budgets and planning horizons as in the resource assignment experiments, with the same decreasing learning rate schedule to ensure training stability. Figure~\ref{fig:comparison_ad} presents the performance comparison for this challenging scenario, \black{revealing how different algorithms navigate the trade-offs between hypervolume, utility, sparsity, and diversity in competitive dynamics.}

\black{CIMORL-MPPI} achieved the strongest overall performance with a hypervolume of $7.25 \times 10^3 \pm 0.25 \times 10^3$ and expected utility of $11.0 \pm 0.45$. The relatively low variance in both metrics indicates consistent performance across evaluation runs, suggesting that the MPPI-based approach provides stable policy learning in adversarial settings. This stability can be attributed to MPPI's uniform exploration strategy, which systematically samples the action space without making overly optimistic assumptions about opponent behavior. \black{CIMORL-TS} achieved comparable hypervolume performance ($7.25 \times 10^3 \pm 0.33 \times 10^3$) but exhibited slightly lower expected utility ($10.7 \pm 0.5$) and higher variance during training. \black{This performance profile suggests that Monte Carlo Tree Search's optimistic selection strategy drives excellent exploration, establishing the highest upward trajectories in both Pareto sparsity (reaching near $450$) and diversity (peaking above $110$), but leaves the policy more sensitive to unpredictable adversarial shifts, as evidenced by the higher variance} ($\sigma = 0.33$ vs. $0.25$ for hypervolume, $\sigma = 0.5$ vs. $0.45$ for utility).The base CIMORL method demonstrated strong performance with a hypervolume of $7.0 \times 10^3 \pm 0.5 \times 10^3$ and expected utility of $9.5 \pm 0.3$. Notably, the performance gap between CIMORL and the sampling-based variants is smaller in this adversarial scenario compared to the cooperative resource assignment task. \black{Its sparsity and diversity metrics closely mirror those of CIMORL-MPPI, maintaining a strong middle-to-high position throughout training.} 

Among the baseline methods, MOMAPPO-TS achieved superior performance compared to standard MOMAPPO, with hypervolume values of $6.9 \times 10^3 \pm 0.4 \times 10^3$ versus $6.7 \times 10^3 \pm 0.3 \times 10^3$, representing a $3\%$ improvement. MOMAPPO-TS exhibited significantly higher variance in expected utility ($9.6 \pm 1.0$ vs. $8.4 \pm 0.25$), indicating that while tree search integration improves average performance, it also introduces inconsistency in policy execution. \black{MO-MIX is subjected more to the non-stationarity in adversarial games, as it is an off-policy method. Therefore, it reaches the worst performances in all the metrics. Consequently, these baselines exhibit notably lower sparsity, plateauing early in the $250$ to $300$ range, and cluster in the bottom tier for diversity. Additionally, all metrics display wide, heavily overlapping confidence intervals across these algorithms, reflecting the substantial variance inherent in competitive multi-agent environments.} Both CIMORL w/o Sync and CIMORL w/o Clust converge to a similar hypervolume of $6.65 \pm 0.2 \times 10^3$. However, consistent with the previous scenario, the w/o Sync variant yields lower expected utilities compared to the other baselines \black{and suffers a sharp, early plateau in sparsity}. Furthermore, the expected utilities produced by CIMORL w/o Clust clearly illustrate the performance gap between full synchronization and isolated clustering dynamics, amounting to an average difference of approximately $0.8$. \black{Without clustering, the model also remains confined to the lower tier of Pareto diversity, further underscoring the necessity of the proposed coordination mechanisms.}

\black{We also evaluated the proposed approach on larger teams, with results presented in Figure~\ref{fig:ad-team-size}. As with the resource-allocation task, we normalized the returns to a $[0, 1]$ scale. We observed that the baseline CIMORL policy struggles to maintain performance across objectives as the team size increases, regardless of whether the agents are assigned attacker or defender roles. In contrast, both MPPI and tree search successfully sustain their performance. This confirms that the proposed multi-objective online search effectively extends coordination capabilities to larger teams than those encountered during training.}

\textbf{Experimental Comparison.} We conducted experiments with three competing teams, each composed of three drones, rotating the policies among them in three different experiments. Again due to time budget, MPPI uses 1000 simulations with an horizon length of 10 steps for computational efficiency.

Figure~\ref{fig:trajectory_comp_ad} reports the 3D trajectory across the three distinct experimental configurations, each showcasing different team-policy assignments. In experiment (a), the blue team employs CIMORL-MPPI, the green team uses MOMAPPO, and the red team implements CIMORL. In experiment (b), policy assignments rotate with the red team utilizing CIMORL-MPPI, blue team employing MOMAPPO, and green team implementing CIMORL. Experiment (c) completes the rotation with the green team using CIMORL-MPPI, red team employing MOMAPPO, and blue team implementing CIMORL.

Table~\ref{tab:score_ad} quantifies team performance through a comprehensive scoring system that awards at every time step +1 points for each agent successfully infiltrating opponent areas and -1 points for allowing enemy agents into their own territory for the corresponding objectives.

In experiment (a), the blue team (CIMORL-MPPI) demonstrates superior strategic coordination, achieving scores of (-379, 649, 1507) through a circular patrol pattern that simultaneously enables area protection and coordinated attacks. This was an emergent behavior from the policy not pre-programmed or searched during training. This behavior exemplifies the adaptive weight dynamics' capability to balance competing objectives dynamically. The red team (CIMORL) exhibits a more conservative strategy with scores of (-649, 992, 379), distributing drones uniformly among safe areas, which demonstrates effective weight distribution across objectives but lacks a robust coordination pattern. The green team (MOMAPPO) experiences catastrophic failure due to inter-agent collisions, resulting in severely negative performance with a score of (-2499, 0, 0). This failure mode is characteristic of fixed-weight architectures: lacking the ability to dynamically down-weight aggressive territorial infiltration in favor of immediate spatial de-confliction, the agents exhibit rigid, brittle behavior that easily collapses into physical gridlock when facing heterogeneous opponents.

Experiment (b) reveals distinct strategic adaptations, where the red team (CIMORL-MPPI) with scores of (0, 727, 0) develops an advanced enclosure strategy, effectively surrounding and neutralizing green team drones protecting their territory. The blue team (MOMAPPO) suffers immediate collision-induced failure, achieving null scores (0, 0, 0), \black{while the green team (CIMORL), with scores of (-727, 0, 0), demonstrates improved collision avoidance compared to MOMAPPO but fails to mount effective defensive strategies, allowing complete territorial infiltration. Crucially, CIMORL's failure mode is distinct from MOMAPPO's immediate physical gridlock. Instead of colliding, the green team's trajectories exhibit severe spatial oscillation (as seen in Figure ~\ref{fig:trajectory_comp_ad}.b). This oscillatory hesitation occurs because the dynamic weight consensus mechanism is attempting to adapt to the adversarial pressure, but without expert demonstrations, the model has found an oscillatory strategy. The agents constantly shift their priority weights between attacking and defending, delaying decisive cooperative actions. This behavior is consistent with our theoretical analysis in Sec. VI, illustrating a system that is provably contracting, but where the contraction rate is bottlenecked by the baseline training parameters. The sampling-based search in CIMORL-MPPI overcomes this by utilizing privileged expert examples to rapidly smooth the objective landscape, inducing a much higher contraction rate that translates to decisive, rather than hesitant, spatial maneuvers.}

In experiment (c), coordinated aggression from red and blue teams against the green team creates a unique defensive scenario. The green team (\black{CIMORL-MPPI}) adopts a sacrifice-based defensive strategy, concentrating all resources on territorial protection. The blue team achieves modest success with scores of (0, 0, 418) through effective resource allocation, while the red team's escape attempts ultimately fail, resulting in null scores (0, 0, 0). The green team's defensive sacrifice yields scores of (-418, 0, 0), demonstrating \black{CIMORL-MPPI}'s ability to adapt strategy based on overwhelming adversarial pressure.

A critical observation from these real-world experiments is the substantial behavioral differences compared to training scenarios. During training, policies are developed through self-play, where agents learn to coordinate against opponents employing identical strategies and behavioral patterns. However, in the heterogeneous real-world experimental setup, each team employs fundamentally different policies with distinct coordination mechanisms and strategic approaches. Transitioning from homogeneous self-play in training to heterogeneous open-play in reality creates a significant deployment gap. \black{This gap exposes the limitations of traditional reinforcement learning and proves that online adaptation is crucial. Specifically, it reveals two distinct suboptimal failure modes in the baselines: (i) the rigid brittleness of fixed-weight approaches (MOMAPPO), which leads to unresolvable spatial conflicts and collisions, particularly in tight resource-assignment scenarios, and (ii) the oscillatory hesitation of unaugmented dynamic approaches (base CIMORL), which successfully avoid collisions but fail to commit to a decisive strategy due to slow weight consensus.}

The sampling-based search integration in \black{CIMORL-MPPI} demonstrates remarkable online adaptation abilities, enabling real-time strategy adjustment when confronted with unfamiliar opponent behaviors that were not encountered during training. This adaptive capacity explains \black{CIMORL-MPPI}'s consistent superior performance across all experimental configurations, as it can dynamically recalibrate its coordination patterns, weight distributions, and tactical decisions based on observed opponent strategies. In contrast, fixed-weight approaches like MOMAPPO and weight-distribution methods like CIMORL struggle to adapt beyond their training distributions, leading to suboptimal performance or catastrophic failures when facing novel adversarial strategies.

\section{Conclusion}

This work introduced a coordination-informed multi-objective reinforcement learning (CIMORL) framework addressing fundamental challenges in balancing competing objectives within multi-robot systems. The approach integrates three key innovations: distributed weight prediction through opinion dynamics, privileged expert training via sampling-based search methods, and theoretical guarantees for Pareto-optimal coordination through coalition formation.
The experimental validation demonstrated significant improvements over state-of-the-art baselines, with our sampling-based variants demonstrating strong performance in complex real-world drone experiments where baseline methods failed, highlighting the critical role of online adaptation.

Future research directions will include improving the convergence speed of the proposed weight model, which currently converges slowly to the equilibria, by imposing a higher contraction rate. Investigating methods to enforce this higher contraction rate while maintaining the numerical stability of the integration presents a challenging but important direction. Second, we aim to evaluate the implications of the hierarchical scheme in the proposed weight-policy model and explore more efficient training methods to minimize policy variance. \black{Additionally, we plan to conduct a formal theoretical and empirical analysis of the weight adaptation mechanism's robustness against explicit communication delays and network noise. Finally, while our current formulation successfully scales to four simultaneous objectives---including inter-agent collision avoidance---extending this coordination framework to explicitly handle advanced path-planning in cluttered layouts with static and dynamic environmental obstacles remains an exciting avenue for our future research.}

\section{Appendix}
\subsection{Proof to Theorem 1}
\label{sec:app-a}
In this section, we report the proof for Theorem~\ref{th:theorem-2}
\begin{proof} 
The proof strategy is twofold: first, we show that a clustered state, where agents in a coalition share the same trajectory, is a valid equilibrium solution for the system's dynamics; second, we prove that the system is contracting, which guarantees that any initial state will globally converge to such a stable, clustered trajectory.

	Consider $M < N_s$ clusters of team $\mathcal{V}_s$, where
	each cluster $c_r$ contains agents satisfying
	$\text{dist}(\Xi_i, \Xi_j) < \epsilon$ for $i, j \in c_r$
	and sufficiently small $\epsilon > 0$.
	Let the joint state be $x = [x_1, x_2]$, and denote two
	state trajectories as $\xi = [\xi_1,\xi_2]$ and $\bar{\xi} = [\bar{\xi}_1, \bar{\xi}_2]$.

	We define $\bar{\xi}_2$ such that it
	follows the clustered structure: $$\bar{\xi}_2 = [1_{c_1}
		\otimes x_{c_1}, \dots, 1_{c_M} \otimes x_{c_M}]$$ and $\bar{\xi}_1$ to make $\bar{\xi}_2$ an equilibrium trajectory for $x_2$.

	Using the vectorization operator $X_| = \text{vec}(X)$, we
	have $x_| = [x_{1|}, x_{2|}] \in \mathbb{R}^{N_s \cdot 2K}$.
	Define the virtual state $x = \xi + \mu(\bar{\xi} - \xi)$
	with $x(\mu = 0) = \xi$ and $x(\mu = 1) = \bar{\xi}$.
	We now show that $\bar{\xi}$ is a particular solution of
	the system described by Eq.~\eqref{eq:model-dynamics}.
	For agent $i$ and its driving signal $\Xi_i$,
	its equilibrium trajectory is given by:
	\begin{equation}
		\begin{aligned}
			\dot{\bar{\xi}}_{i1} & = (\text{So}_{\Xi_i}\bar{\xi}_2 W_B
			- \bar{\xi}_{i2} W_B)                                      \\
			\dot{\bar{\xi}}_{i2} & = -\left(\tau + \Xi_i \right)
			\circ \bar{\xi}_{i2}+\text{So}_{\bar{\xi}_i}\bar{\xi}_2W_A
			- \bar{\xi}_{i2}W_A                                        \\                  & -\text{So}_{\Xi_i}\bar{\xi}_{i1}W_B
			                   + \bar{\xi}_{i1} W_B + \Xi_iB
		\end{aligned}
	\end{equation}
	where $\text{So}_{\Xi_i}, \text{So}_{\bar{\xi}_i}$ are
	the $i$-th rows of the matrices. Next, we select agent
	$j$ from the same cluster as $i$ and consider the difference
	in their dynamics, $\dot{\bar{\xi}}_i - \dot{\bar{\xi}}_j$,
	which is:
	\begin{equation}
		\label{eq:diff_agents}
		\begin{aligned}
			\dot{\bar{\xi}}_{1i} - \dot{\bar{\xi}}_{1j} = &
			\text{So}_{\Xi_i}\bar{\xi}_2 W_B
			- \text{So}_{\Xi_j}\bar{\xi}_2 W_B              \\
			\dot{\bar{\xi}}_{2i} - \dot{\bar{\xi}}_{2j} = &
			-(\Xi_i\circ \bar{\xi}_{2i}  - \Xi_j \circ \bar{\xi}_{2j})
			+ (\text{So}_{\bar{\xi}_i}\bar{\xi}_2W_A -
			\text{So}_{\bar{\xi}_j}\bar{\xi}_2W_A)          \\                  &
			                   -(\text{So}_{\Xi_i}\bar{\xi}_1W_B
			                   - \text{So}_{\Xi_j}\bar{\xi}_1W_B) + (\Xi_i - \Xi_j)B
		\end{aligned}.
	\end{equation}
	The difference in eq\eqref{eq:diff_agents} is equal to the
	null vector as long as $\text{So}_{\Xi}, \text{So}_{\xi}$
	have constant row sum in the block matrices among the
	clusters, as demonstrated in~\cite{xia2011clustering}.
	Given our definition for these two matrices, this property
	is structurally satisfied.
	Moreover, the differences in eq\eqref{eq:diff_agents} are
	zero when $\bar{\xi}_1$ is such that $(\text{So}_{\Xi i}-
		\text{So}_{\Xi j})\bar{\xi}_1W_B = (\Xi_i - \Xi_j)B
		-(\Xi_i\circ \bar{\xi}_{2i} - \Xi_j \circ \bar{\xi}_{2j})$.
	We now prove that the system is contracting for the two
	trajectories $\xi$ and $\bar{\xi}$.
	The dynamic of the virtual displacement $\delta x_|$  is defined by
	\begin{equation}
		\label{eq:differential-dynamics}
		\delta\dot{x_{|}} =  F \delta x_{|} + G \delta z_{|}
	\end{equation}
	with
	\begin{equation}
		\begin{aligned}
			F & = \begin{bmatrix}
				      0                                       & W_{B}^\top \otimes ( I_N - \text{So}_{\Xi}) \\
				      W_{B}^\top \otimes (I_N - \text{So}_{\Xi}) &
				      -\diag(\tau+\Xi)+W_{A}^\top\otimes (I_N - \text{So}_{x})
			      \end{bmatrix} \\ G & =  \begin{bmatrix}
				W_{B}^\top \otimes (\frac{\partial \text{So}_{\Xi}}{\partial \Xi} \frac{\partial g }{\partial z})  x_{2|} \\
				\diag( \frac{\partial g }{\partial z})x_{2|} + \diag(\frac{\partial g }{\partial z}) B_{|} -W_B^\top \otimes (\frac{\partial \text{So}_{\Xi}}{\partial \Xi}\frac{\partial g }{\partial z}) x_{1|}
			\end{bmatrix}
		\end{aligned}
	\end{equation}

	To establish convergence to the clustered equilibrium $\bar{\xi}$, we analyze the contraction properties of the unforced differential dynamics with respect to the state $x$, given by $\delta\dot{x}_{|} = F \delta x_{|}$. We define the differential Lyapunov function $P$ representing the Riemannian energy of the virtual displacement:
	\begin{equation}
		P = \int_0^1 \delta x_{|}^\top \delta x_{|} d\mu
	\end{equation}
	Taking the time derivative of $P$ along the trajectories of the system yields:
	\begin{equation}
		\dot{P} = \int_0^1 \delta x_{|}^\top (F + F^\top) \delta x_{|} d\mu
	\end{equation}
	The stability of the system is governed by the symmetric part of the Jacobian matrix, $F_s = \frac{1}{2}(F + F^\top)$. Analyzing the block structure of $F$, the top-left block corresponding to the auxiliary state $x_1$ is strictly zero. However, the bottom-right block governing the primary weight state $x_2$ is defined by $F_{22} = -\text{diag}(\tau+\Xi) + W_{A}^\top\otimes (I_N - \text{So}_{x})$. By design, the bias $\tau > 0$, the activation $\Xi \ge 0$, and the attention mechanism $(I_N - \text{So}_{x})$ acts as a network Laplacian. Consequently, $F_{22}$ is strictly Hurwitz, and its symmetric part is negative definite.

	While the presence of the zero block in $F$ implies that the full Jacobian is only negative semi-definite, the system exhibits weak (or partial) contraction. If we consider the projected state transformation $\hat{y} = [0,I] x_{|}^\top = x_{2|}$, its differential dynamic is strongly contracting with a rate bounded by the maximum eigenvalue of the symmetric part of $F_{22}$, such that $\lambda_{\max}(F_{22} + F_{22}^\top) < 0$. As shown in standard partial contraction theory~\cite{dani2014observer}, because the $x_2$ subsystem is strictly contracting, all trajectories of $x_2$ exponentially converge to a single unique trajectory, regardless of their initial conditions.

	This rigorous contraction of the $x_2$ state forms the direct logical connection to our clustering definition. We have already established in Eq.~\eqref{eq:diff_agents} that the clustered trajectory $\bar{\xi}$, where agents in a coalition share the exact same state, is a valid equilibrium solution of the system. Because the system is contracting with respect to $x_2$, the actual network state $x_2(t)$ must asymptotically converge to this clustered equilibrium $\bar{\xi}_2(t)$. Thus, the agents inherently form the clusters specified in Definition~\ref{def:cluster-trajectories}. 
\end{proof}
\subsection{Proof to Lemma 3}
\label{sec:proof-lemma3}
In this section, we report the proof to Lemma~\ref{lemma:bounded-x2}
\begin{proof}
	We establish the boundedness of the system using the quadratic Lyapunov function
	\begin{equation*}
		P_1 = \frac{1}{2}x_{1|}^\top x_{1|} + \frac{1}{2}x_{2|}^\top x_{2|}
	\end{equation*}
	where the notation $(\cdot)_{|} = \text{vec}(\cdot)$ denotes the standard column-wise vectorization operator. This specific choice of $P_1$ is standard in nonlinear systems analysis for establishing $L_2$ stability and proving the existence of bounded invariant sets by evaluating the generalized energy of the states.
	
	Focusing on the $x_2$ dynamics, the time derivative of $P_1$ along the system trajectories yields:
	\begin{equation*}
		\begin{aligned}
			\dot{P}_1 & \le -x_{2|}^\top \text{diag}(\tau_{2|}) x_{2|} - x_{2|}^\top \text{diag}(\Xi_{|}) x_{2|} \\
			          & \quad - x_{2|}^\top (W_A^\top \otimes (I_N - \text{So}_{\Xi})) x_{2|} + x_{2|}^\top \text{diag}(\Xi_{|}) B_{|}
		\end{aligned}
	\end{equation*}
	Assuming the structural matrices associated with $x_{2|}$ and the network coupling $W_A^\top \otimes (I_N - \text{So}_{\Xi})$ are positive semi-definite, their corresponding quadratic forms are non-positive. To prove that $[-B,B]^{N \times K}$ is a positively invariant set, we analyze the remaining terms on the boundary by expanding them element-wise for each index $i \in \{1, \dots, NK\}$:
	\begin{equation*}
		-x_{2|}^\top \text{diag}(\Xi_{|}) x_{2|} + x_{2|}^\top \text{diag}(\Xi_{|}) B_{|} = \sum_{i=1}^{NK} \Xi_{|,i} \left( -x_{2|,i}^2 + B_{|,i} x_{2|,i} \right)
	\end{equation*}
	Consider the case where the state $x_2$ attempts to leave the region $[-B, B]^{N \times K}$. This implies that for at least one element index $i$, the state exceeds the boundary such that $|x_{2|,i}| > B_{|,i}$. Since $B_{|,i} > 0$ by definition, the squared term strictly dominates the linear term: $x_{2|,i}^2 > B_{|,i} |x_{2|,i}| \ge B_{|,i} x_{2|,i}$.
	
	Given that the activation functions yield $\Xi_{|,i} \ge 0$, we can explicitly bound the contribution of the $i$-th element outside the boundary:
	\begin{equation*}
		\Xi_{|,i} \left( -x_{2|,i}^2 + B_{|,i} x_{2|,i} \right) < \Xi_{|,i} \left( -B_{|,i} |x_{2|,i}| + B_{|,i} |x_{2|,i}| \right) = 0
	\end{equation*}
	Because the negative quadratic term strictly dominates the linear input term for any $|x_{2|,i}| > B_{|,i}$, it ensures that $\dot{P}_1 < 0$ strictly outside the defined set. This continuous dissipation of energy on the exterior of $[-B, B]^{N \times K}$ forces the trajectories to remain bounded within it.
	
	Since $x_{2|}$ remains bounded and contractive, the coupling integral is finite:$L_{x_1} = \|W_B\|_{\infty}
		\int_0^t \|(I-\text{So}_{\Xi}(s))x_{2|}(s)\|_{\infty} ds <
		\|W_B\|_{\infty}
		\frac{\|B\|_{\infty}}{\lambda_{\max}(F_{22})} < \infty$
	Therefore, $x_{1|}(t)$ remains bounded in $\mathcal{X}_1$ when initialized within this set.
\end{proof}

\subsection{Proof to Theorem 2}
\label{sec:app-b}
In this section, we report the proof to Theorem~\ref{theorem:closed-loop}
\begin{proof}
To formally prove the closed-loop contraction of the multi-agent system, we analyze the overall architecture as an interconnected system comprised of two subsystems: the observation dynamics~\eqref{eq:ito-stochastic} and the weight prediction dynamics~\eqref{eq:model-dynamics}. The stochastic small-gain theorem~\cite{mironchenko2015construction,russo2012contraction} dictates that an interconnection of two contracting subsystems remains globally contracting if the product of their individual contraction rates is strictly greater than the product of their cross-coupling Lipschitz constants.
Let the individual contraction rate of the weight dynamics be bounded by $\lambda_{\max}(F_{22})$, and let the contraction rate of the observation dynamics be $a = \min_i \alpha_i$. We denote the Lipschitz constant of the observation dynamics with respect to the weights $w$ as $L_w$. The remaining requirement is to explicitly derive $L_G$, which bounds the coupling strength of the weight dynamics with respect to the observation $z$.

Consider the coupling function $G(x)$ derived from the differential dynamics of the system. We must evaluate its Jacobian with respect to the observation vector $z$. Applying the chain rule through the nonlinear mapping $\Xi = g(z)$, the partial derivatives of the effective similarity matrix $\text{So}_{\Xi}$ and the primary state dynamics yield the following block matrix structure for the Jacobian $J = \frac{\partial G}{\partial x}$:
\begin{equation*}
	J = \begin{bmatrix}
		0 & W_{B}^\top \otimes \left(\frac{\partial \text{So}_{\Xi}}{\partial \Xi} \frac{\partial g }{\partial z}\right) & 0 \\
		-W_B^\top \otimes \left(\frac{\partial \text{So}_{\Xi}}{\partial \Xi}\frac{\partial g }{\partial z}\right) &
		\diag\left(\frac{\partial g }{\partial z}\right) & \diag\left(\frac{\partial g }{\partial z}\right)
	\end{bmatrix} \begin{bmatrix}
		x_{1|} \\ x_{2|} \\ B_{|}
	\end{bmatrix}
\end{equation*}
To establish the upper bound $L_G$, we evaluate the infinity norm of this block matrix, $\norm{J}_{\infty}$. We leverage the known analytical bounds of the specific nonlinearities employed in the architecture. Specifically, the gradients of both the standard sigmoid function $\sigma(\cdot)$ and the softmax operation $\text{So}(\cdot)$ are strictly bounded such that their maximum singular values never exceed $\frac{1}{4}$. Additionally, we denote the maximum Lipschitz constant of the observation mapping network $g(z)$ as $L_Z$, meaning $\norm{\frac{\partial g}{\partial z}}_{\infty} \leq L_Z$.

Applying the triangle inequality and these known bounds to the block components, we can bound the supremum of the coupling function's norm across all states $x$ bounded by $B$:
\begin{equation*}
	\begin{aligned}
		\sup_x \norm{G(x)}_{\infty} &\leq \max\left(B,  \|W_B\|_{\infty} \frac{\|B\|_{\infty}}{\lambda_{max}(F_{22})}\right) \\
		&\quad \times L_Z \left( \norm{W}_{\infty} + 1 + \frac{1}{16}\norm{W_B}_{\infty} \right)
	\end{aligned}
\end{equation*}
The factor of $\frac{1}{16}$ emerges directly from the sequential application of the chain rule through both the sigmoid and softmax functions ($\frac{1}{4} \times \frac{1}{4}$). Simplifying the expression relative to the coupling weights yields the final bound:
\begin{equation*}
	L_{G} = 2 L_Z \left( \|W\|_{\infty} + 1 + \frac{1}{16}\|W_B\|_{\infty} \right)
\end{equation*}
With $L_G$ strictly bounding the sensitivity of the weight dynamics to the observations, and $L_w$ bounding the sensitivity of the observations to the weights, the closed-loop system is verified to be stable and contracting if the small-gain condition is satisfied:
\begin{equation*}
	\lambda_{\max}(F_{22}) \cdot a > L_{G} \cdot L_{w}
\end{equation*}
\black{This concludes the proof, demonstrating that as long as the internal consensus forces and local observation contractions dominate the cross-coupling perturbations, the entire multi-agent team will reliably converge to stable weight clusters.}
\end{proof}
\subsection{Hardware Setup and Training Parameters}
The agents are modeled as double integrators operating at 20 Hz, stabilized by a proportional-derivative (PD) policy with stiffness $k_p=7$ and damping $k_d=5$. This yields the closed-loop matrix $A = \left[\begin{smallmatrix} 0 & I \\ -k_p I & -k_d I \end{smallmatrix}\right]$. To satisfy Theorem 2 directly in the physical observation space, we introduce a cross-coupling term $\epsilon = 0.05$ to define the non-diagonal contraction metric $M = \left[\begin{smallmatrix} (k_p + \epsilon k_d)I & \epsilon I \\ \epsilon I & I \end{smallmatrix}\right]$. This specific choice makes the symmetric generalized Jacobian strictly negative definite by perfectly canceling the off-diagonal cross-terms: $A^T M + M A = \left[\begin{smallmatrix} -2\epsilon k_p I & 0 \\ 0 & -2(k_d - \epsilon)I \end{smallmatrix}\right]$. Consequently, the single-agent subsystem is proven to be strictly contracting with a rate of $\alpha = \min(\epsilon k_p, k_d - \epsilon)$. At the same sampling rate, the agents communicate with their neighbors and generate actions from the learned policy. All training and evaluations were conducted on a machine running Ubuntu 22.04 with an Intel Core i7-9750H @ 2.60GHz CPU, Nvidia RTX 2080Ti, and 32GB RAM. The observation vectors were normalized to the range $[-1,1]$, and a communication range of $1$ was considered in the normalized space. A full breakdown of the hyperparameters utilized across algorithms is detailed in Table~\ref{tab:training_parameters}. We chose the training hyperparameters and the coefficients $c_1,c_2,c_3$ leading to the best performances in out configuration by performing parameters sweeps. 
\begin{table*}[h]
	\centering
    \caption{Training parameters}    
	\label{tab:training_parameters}
    \begin{tabular}{l | lllll}
		\hline
		Parameters   & CIMORL|w/o Clust|w/o Sync  & MOMAPPO & MO-MIX & CIMORL-TS|CIMORL-MPPI & MOMAPPO-TS \\ \hline
		$\gamma$            & 0.99           & 0.99  & 0.99 & 0.99 & 0.99 \\
		Episode length      & 250            & 250 & 250  & 250   & 250  \\
		$\lambda_{GAE}$     & 0.95           & 0.95   & - & 0.95 & 0.95 \\
		Batch size          & 200            & 200 & 256 & 200 & 200  \\
		Epochs    & 6 & 6 & 1 & 6 & 6 \\
		rollout buffer      & 2000           & 2000 & - & 1000 & 1000 \\
        replay buffer & - & - & \num{150e3} & - & - \\
		Entropy coefficient & 0.02           & 0.02 & 0.05 & 0.01 & 0.01 \\
		Clip $\epsilon$     & 0.3            & 0.3 & - & 0.3 & 0.3  \\
		Max gradient norm   & 0.5            & 0.5 & 0.5 & 0.5 & 0.5  \\
		Adam learning rate  & \num{2e-3}     & \num{2e-3}·& \num{0.9e-3} & \num{2e-3}  & \num{2e-3}       \\
        Sampled $w^*$ & 500 & 500 & 500 & 500 & 500 \\
        Training steps per weight & - & \num{150e3} & \num{150e3} & - & \num{150e3}  \\ 
		Search horizon      & - & - & - &  25 & 25               \\
		Search simulations  & - & - & -             & 2000 & 2000  \\
		$c_1,c_2,c_3$       & -              & - & - & [1.5, 0.5, 0.75] & [1.5, 0.5, 0.75] \\ \hline
	\end{tabular} 
\end{table*}
\subsection{Network Architectures and Reward Structures}
\paragraph{Multi-Resource Assignment}
The reward function is designed to incentivize agents to approach consumer areas when they possess resources that match the consumers' requirements. Specifically, for each consumer demand $k$, we assign the following reward:
\begin{equation*}
	\mathrm{rw}_{ik} = \begin{cases}
		\beta ( \|p_i^{t-1} - ds_k\| - \|p_i^{t} - ds_k\|) & \text{if } \|p^{t}_i - ds_k\| \geq R_k                 \\
		    & \text{and } \mathrm{so}_i \cdot \mathrm{de}_k > 0 \\
		\mathrm{RW}_{ik}                                & \text{otherwise}
	\end{cases}
\end{equation*}
where $\beta > 0$ is a scaling factor and $\mathrm{RW}_{ik} > 0$ represents a positive reward assigned to agent $i$ upon successfully delivering a resource to consumer area $k$. 
The collision avoidance reward is defined as $\mathrm{rw}_{i(K+1)} = $
\begin{equation*}
	\begin{cases}
		-\mathrm{RW}_{cl}                                     & \text{if } \exists j \neq i, \|p_i - p_j\| < R_{cl} \\
		\gamma_{cl} (\min_{j} \|p_i - p_j\| - 3 \cdot R_{cl}) & \text{if } \min_{j} \|p_i - p_j\| < 3 \cdot R_{cl}  \\
		0                                                     & \text{otherwise}
	\end{cases}
\end{equation*}
where $\mathrm{RW}_{cl} > 0$ is a fixed penalty for collisions and $\gamma_{cl} > 0$ is a scaling factor. The connectivity reward is defined as $\mathrm{rw}_{i(K+2)} = -\mathrm{RW}_{co}$ if the $i$-th row of the communication adjacency matrix $S$ contains only zeros, and $0$ otherwise, where $\mathrm{RW}_{co} > 0$ represents the connectivity penalty. We have experimented with $\mathrm{RW}_{co} = 0.4$, $\mathrm{RW}_{cl} = 15$, $R_{cl} = 0.05 $, $\gamma_{cl}=0.1$, $\beta=0.1$, $\mathrm{RW}_{ik} = 15$ and $R_k = 0.2$.

Let $\mathrm{sd}_{ik} = \frac{\mathrm{so}_i \cdot \mathrm{de}_k}{\|\mathrm{so}_i\| \cdot \|\mathrm{de}_k\| + \epsilon}$ indicating whether the agent resource or demand resource are zero for a small $\epsilon > 0$. The agent observation vector $z_i$ is structured as follows:
\begin{equation*}
	z_i = \left[ v_{i}, \{p_i - ds_k, \mathrm{sd}_{ik} \cdot(\mathrm{so}_i + \mathrm{de}_k)\}_{k \in \mathcal{K}}, \{p_i - p_j\}_{p_j \in \mathcal{O}_s} \right]
\end{equation*}
We employ a DeepSets~\cite{zaheer2017deep} architecture to achieve permutation-invariant aggregation across each observation set. For teammate relative positions, we compute:
\begin{equation}
	\Phi_s = \phi_2\left(\sum_{p_j \in \mathcal{O}_s} \phi_1(p_i - p_j) \right)
\end{equation}
where $\phi_1(\cdot)$ and $\phi_2(\cdot)$ represent MLPs mapping inputs to an $F$-dimensional feature space. Similarly, we define aggregation functions $\Phi_k$ for each resource set $k$. The resulting features from different sets are concatenated and processed in an objective-specific manner.

For the $K$ resource-specific objectives, each objective $k$ concatenates the relative distance to the consumer location $p_i - ds_k$ with the aggregated resource features $\Phi_k(\mathrm{sd}_{ik} \cdot (\mathrm{so}_i + \mathrm{de}_k))$. The collision avoidance and connectivity maintenance objectives utilize the teammate aggregation features $\Phi_s$. All objectives incorporate velocity information to enable dynamic motion planning and coordination.

The concatenated features are processed through a final MLP $g(\cdot)$ as defined in equation~\eqref{eq:Xi-def}. This modular observation processing architecture enables flexible objective scaling between training and deployment phases, which may involve different numbers of consumer locations, resource types, and team sizes. Alternatively, a unified MLP $g(\cdot)$ can directly extract fixed-dimensional objective features from the complete observation vector $z_i$. The value function $V_{\phi}$ utilizes an identical architecture to estimate objective-specific episode values. The policy network integrates the sampled weight vector $w$ with the extracted objective features into a unified representation, which is then processed by a final MLP to generate action distributions. All MLPs have $[32,32]$ features and tanh activation function for which is possible to compute the Lipschitz constant easily. All the methods in the following use the same networks with the same dimensions.

\paragraph{Multi-team Attackers-Defenders}
The reward function is formalized through a team-level structure by defining $\mathrm{RW}_s \in \mathbb{R}^{K+1}$ as the aggregated reward vector for team $\mathcal{V}_s$ at time $t$, treating the collective as a unified strategic entity. The problem structure naturally leads to a generalized zero-sum multi-player game formulation when viewed from the team perspective. For each attack objective $k$, we assign a positive reward $\mathrm{RW}_{sk} > 0$ to team $\mathcal{V}_s$ upon successful penetration of opponent region $\mathcal{R}_k$ by any team member. Correspondingly, the defending team $\mathcal{V}_k$ receives $\mathrm{RW}_{kk} = -\mathrm{RW}_{sk}$, establishing the adversarial reward structure. For the defense objective $s$, the roles are reversed: team $\mathcal{V}_s$ seeks to minimize intrusions while opposing teams attempt to maximize their territorial gains. The team's strategic goal is to simultaneously optimize all $K+1$ objectives, contingent upon sufficient agent allocation and coordination capabilities.

The transition from team-level to individual agent rewards requires careful consideration of the partial observability constraints. To keep consistency with our observability constraints, we implement a localized reward structure where agent $i$ receives $\mathrm{rw}_{ik} > 0$ upon successfully infiltrating opponent region $\mathcal{R}_k$, and $\mathrm{rw}_{is} < 0$ when detecting enemy penetration of the home territory $\mathcal{R}_s$ within its observation range. Additionally, agents incur a substantial penalty $\mathrm{rw}_{i} \ll 0$ for teammate collisions across all objective rewards to promote collision-free coordination. We assigned $\mathrm{rw}_{i} = -20$ and $ -\mathrm{rw}_{is} = \mathrm{rw}_{ik} = 1$

The agent observation vector $z_i$ is structured as follows:
\begin{equation*}
	z_i = \left[v_i, \{p_i - ds_k\}_{k \in \mathcal{K} \cup \{s\}}, \{p_i - p_j\}_{p_j \in \mathcal{O}_s}, \{p_i - p_k\}_{p_k \in \mathcal{O}_k}  \right]
\end{equation*}
This representation encodes relative positional information across three distinct categories: distances to all safe areas, relative positions of observable teammates, and locations of detected opponents. These categorical groups undergo separate processing to extract objective-specific features. For attack objectives, we combine relative positions to target safe areas with corresponding opponent locations to form offensive feature representations. The defensive objective aggregates the home safe area position with all observable opponent locations. Teammate positions and agent velocity are incorporated across all objectives to enable coordination. This design ensures that each attack objective focuses exclusively on its designated opponent team, treating other teams as neutral entities that may serve as obstacles or potential allies.

As in Section~\ref{sec:resource-assignment}, we employ DeepSets to process the position differences with permutation-invariant aggregation. The resulting aggregated features are concatenated and processed through the MLP $g(\cdot)$ to create objective-specific features. For this scenario, the value function $V_{\phi}$ shares the same architecture, and the policy network integrates the sampled weight vector $w$ with the extracted objective features before processing them with a final MLP to generate action distributions. All MLPs have $[32,32]$ features and tanh activation function. The same neural networks are used in all the methods compared.

As this environment involves adversarial agents, the max value in equation~\eqref{eq:monte-carlo-value} and equation~\eqref{eq:mppi-value} could fail to capture the worst-case opponent behavior and the policy could loose robustness resulting too optimistic. Normally, this problem is addressed in tree search methods by making the agent playing in turns, with a turn corresponding to one node expansion. However, this strategy works only for discrete games with a full separation of agents turns, like chess or go. In our case, the agents are moving simultaneously and the tree search explores simultaneously the actions of all agents. Moreover, applying a strict minimax formulation often leads to overly pessimistic, degenerate policies in such continuous environments, effectively paralyzing the agents. Finally, MPPI in general is not designed to handle adversarial problems so there is no standard way to address this problem.

Therefore, rather than assuming a strict worst-case opponent, we introduce a risk-sensitive regularized optimization. We slightly modify the max value characterization in both the sampling-based search to penalize deviations of the opponent team values from a nominal value $v_0$, which represents an expected-case, boundedly rational opponent. Given the state of the observed adversarial agents reconstructed by the observation vector, the nominal value $v_0$ is computed as the expected value of the opponent observed teams at the beginning of the search. Moreover, to assign the objective preference to the opponent teams, we consider the average weight vector $\hat{w}$ between agent $i$ solving the tree search and neighboring teammates, computed by agent $i$ from the communicated state of the weight predictor. Let's denote all the opponents agents observed by agent $i$ as $N_{-s}$, we report the modified max value for the tree search:
\begin{equation*}
	\begin{aligned}
		\hat{w}    & = \frac{1}{N_s}\sum_{j=1}^{N_s} w_j \\
		V_{\max}(n) & =                                   \\ & \max_i \sum_{j=1}^{N_s}w_{ij} \frac{|V_{ij}(n) - v^*|}{v^*} + p \sum_{j=1}^{N_{-s}}\Bigg|\hat{w}^\top_i \bigg(V_{ij}(n) - v_0 \bigg) \Bigg|
	\end{aligned}
\end{equation*}
where $p$ acts as a conservatism coefficient that dictates the algorithm's risk aversion. A value of $p = 0.5$ was chosen empirically to balance naive optimism against strict minimax pessimism. We note that a comprehensive game-theoretic analysis of this adversarial adaptation and an ablation over $p \in [0,1]$ are scoped as future work, as the primary focus of this paper remains the cooperative multi-objective framework.

\bibliographystyle{IEEEtran}
\bibliography{IEEEabrv,bibliography}

\begin{IEEEbiography}[{\includegraphics[width=1in,height=1.25in,clip,keepaspectratio]{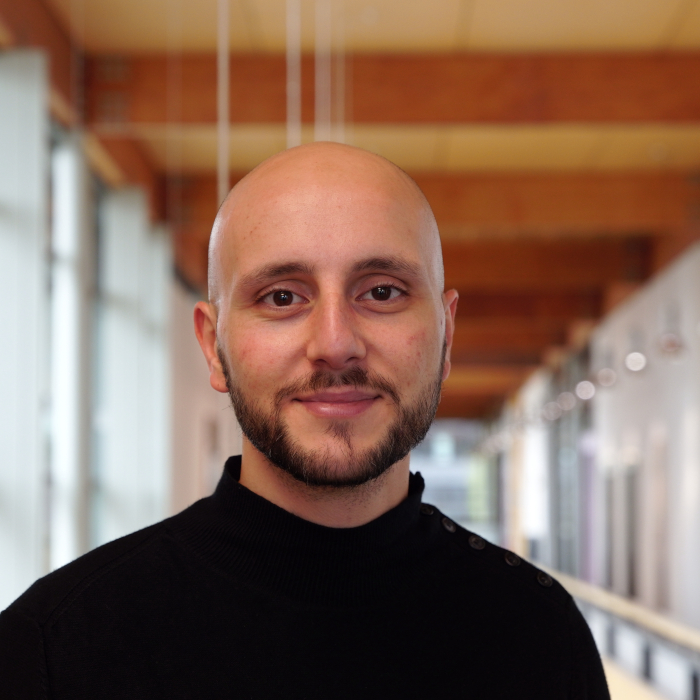}}]{Antonio Marino}
received B.Sc. degree in Automation Engineering from University of Naples "Federico II" in 2018, and his M.Sc. degree in Robotics Engineering from University of Genova in 2020. He obtained the Ph.D. in Robotics Engineering from the University of Rennes, France in 2025. He is currently postdoctoral researcher at the University of Cambridge, UK. \end{IEEEbiography}
\begin{IEEEbiography}[{\includegraphics[width=1in,height=1.25in,clip,keepaspectratio]{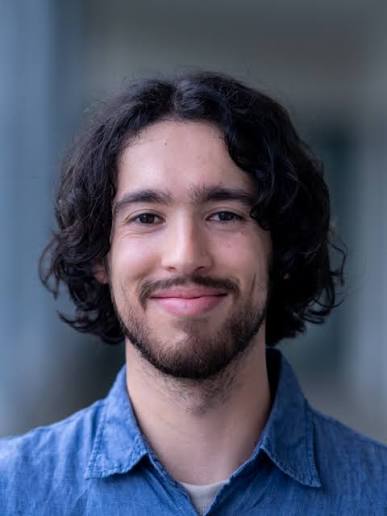}}]{Esteban Restrepo} is currently a research scientist at CNRS-IRISA in Rennes. He received the B.Sc. degree in Mechatronics Engineering from Universidad EIA, Envigado, Colombia and from Arts et Métiers, Paris, France, in 2017, and the M.Sc. degree in Robotic Systems Engineering from Arts et Métiers, Paris, France, in 2018. He obtained the Ph.D. degree in Automatic Control from the University Paris-Saclay, France, in 2021. From January 2022 to January 2023, he was a Postdoctoral Researcher with the Division of Decision and Control Systems, KTH Royal Institute of Technology, Stockholm, Sweden and from January 2023 to November 2024 he was a postdoctoral Researcher with CNRS, and Inria Rennes, France. His research interests include control of multi-agent systems with application to autonomous robots and aerospace systems.\end{IEEEbiography}
\begin{IEEEbiography}[{\includegraphics[width=1in,height=1.25in,clip,keepaspectratio]{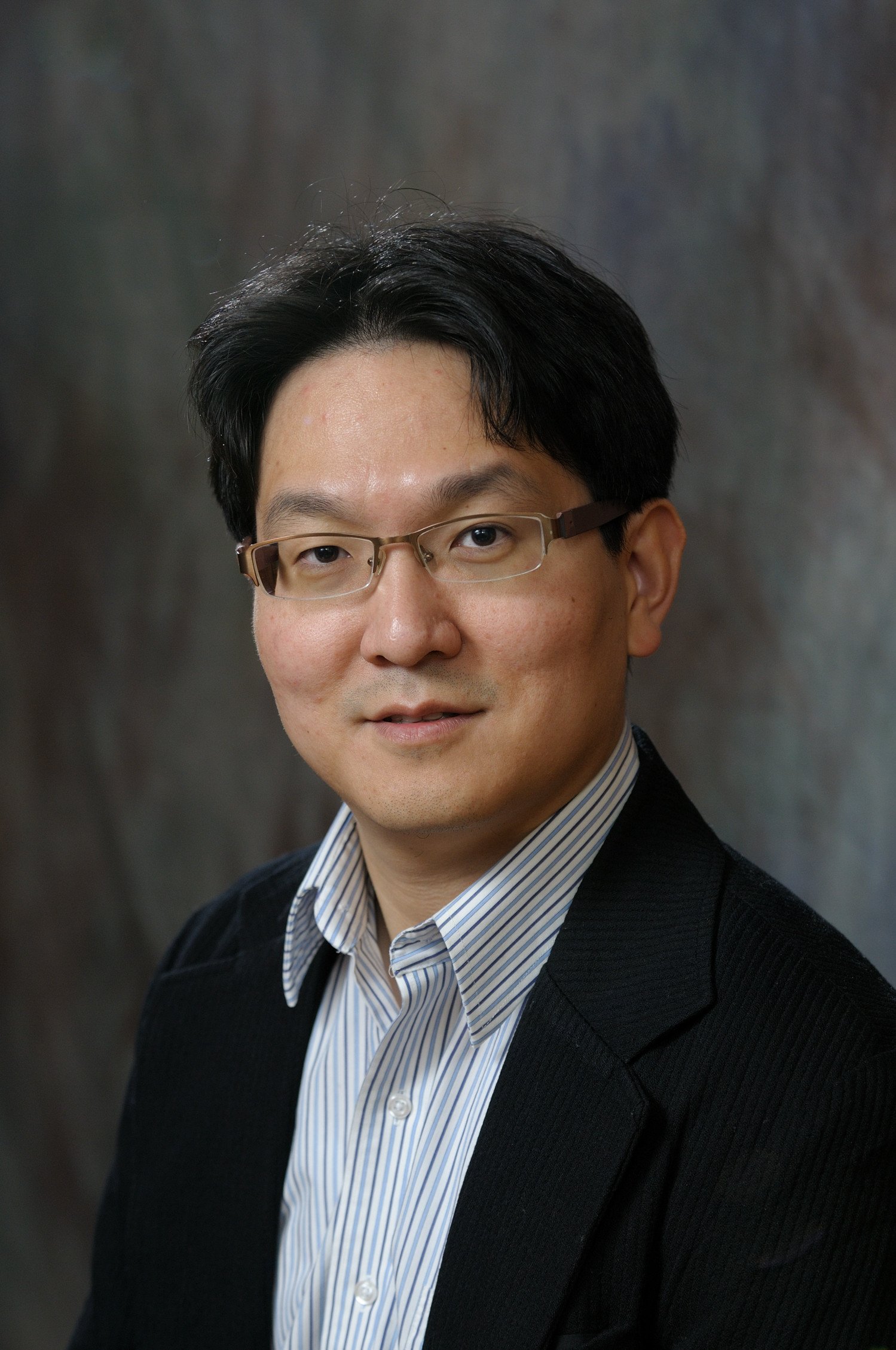}}]{Soon-Jo Chung} (M’06–SM’12) received the S.M. degree in aeronautics and astronautics and the Sc.D. degree in estimation and control from Massachusetts Institute of Technology, Cambridge, MA, USA, in 2002 and 2007, respectively. He is currently Bren Professor of Control and Dynamical Systems, and a Jet Propulsion Laboratory Senior Research Scientist at the California Institute of Technology, Pasadena, CA, USA. He is a Senior Editor of IEEE Robotics and Automation Letters and the IEEE Transactions on Robot Learning. He was an Associate Editor of IEEE Transactions on Robotics, and the Guest Editor of a Special Section on Aerial Swarm Robotics published in IEEE Transactions on Robotics.\end{IEEEbiography}
\begin{IEEEbiography}[{\includegraphics[width=1in,height=1.25in,clip,keepaspectratio]{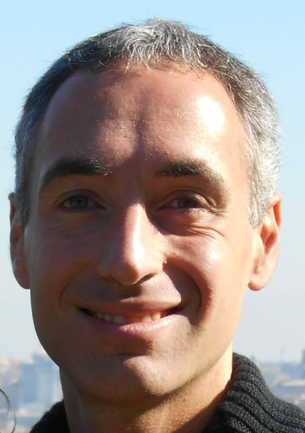}}]{Paolo Robuffo Giordano}  (M'08-SM'16) received his M.Sc. degree in Computer Science Engineering in 2001, and his Ph.D. degree in Systems Engineering in 2008, both from the University of Rome ``La Sapienza''. In 2007 and 2008 he spent one year as a PostDoc at the Institute of Robotics and Mechatronics of the German Aerospace Center (DLR), and from 2008 to 2012 he was Senior Research Scientist at the Max Planck Institute for Biological Cybernetics in T\"ubingen, Germany. He is currently a senior CNRS researcher head of the Rainbow group at Irisa and Inria in Rennes, France.\end{IEEEbiography}
\begin{IEEEbiography}[{\includegraphics[width=1in,height=1.25in,clip,keepaspectratio]{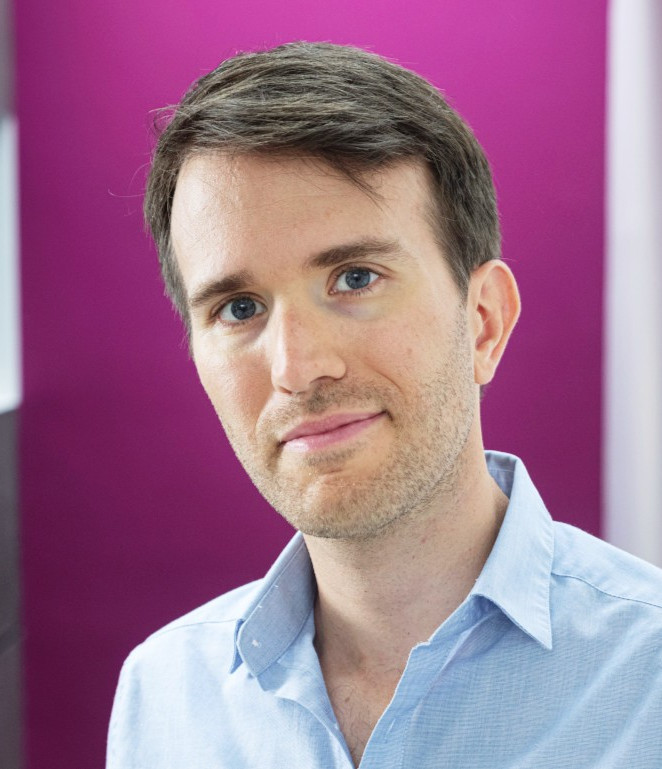}}]{Claudio Pacchierotti} (SM'20) is a
director of research at CNRS-IRISA in Rennes, France, since 2016. He was previously a postdoctoral researcher at the Italian Institute of Technology, Genova, Italy. Pacchierotti earned his PhD at the University of Siena in 2014. He was Visiting Researcher in the Penn Haptics Group at Univ. Pennsylvania in 2014, the Dept. of Innovation in Mechanics and
Management at Univ. Padua in 2013, the Institute for Biomedical Technology and Technical Medicine (MIRA) at University of Twente in 2014, and the Dept. Computer, Control and Management Engineering of the Sapienza Univ. Rome in 2022. Pacchierotti received the 2014 EuroHaptics Best PhD Thesis Award and the 2022 CNRS Bronze Medal. He is Senior Chair of the IEEE Technical Committee on Haptics, Co-Chair of the IEEE Technical Committee on Telerobotics, and Secretary of the Eurohaptics Society.\end{IEEEbiography}

\end{document}